\newif\ifdraft
\setlist{leftmargin=10mm}
\newcommand{\uline}{\ul}
\newif\iffinal
    \newcommand{\tianhao}[1]{}
    \newcommand{\ruoxi}[1]{}
    \newcommand{\yq}[1]{}
    \newcommand{\add}[1]{#1}
    \newcommand{\tianhao}[1]{{\bf \textcolor{purple}{[Tianhao: #1]}}}
    \newcommand{\ruoxi}[1]{{\bf \textcolor{BrickRed}{[Ruoxi:#1]}}}
    \newcommand{\yq}[1]{{\bf \textcolor{blue}{[Yuqing:#1]}}}
    \newcommand{\add}[1]{\textcolor{purple}{#1}}
\newtheorem{theorem}{Theorem}
\newtheorem{definition}[theorem]{Definition}
\newtheorem{remark}{Remark}
\newtheorem{remark-star}{Remark}
\newtheorem{remark-star-1}{Remark}
\newtheorem*{proof-sketch}{Proof Sketch}
\newcommand{\E}{\mathbb{E}}
\newcommand{\R}{\mathbb{R}}
\newcommand{\eps}{\varepsilon}
\newcommand{\norm}[1]{\left\lVert#1\right\rVert}
\newcommand{\N}{\mathcal{N}}
\newcommand{\M}{\mathcal{M}}
\newcommand{\ind}{\mathbbm{1}}
\newcommand{\A}{\mathcal{A}}
\newcommand{\U}{v}
\newcommand{\test}{\mathrm{val}}
\newcommand{\metric}{\texttt{acc}}
\newcommand{\semi}{\mathrm{semi}}
\newcommand{\shap}{\mathrm{shap}}
\newcommand{\banz}{\mathrm{banz}}
\newcommand{\cX}{\mathcal{X}}
\newcommand{\cY}{\mathcal{Y}}
\newcommand{\MS}{ \mathbb{N}^\cX }
\newcommand{\NBtauxval}{ \texttt{NB}_{x^{(\test)}, \tau} }
\newcommand{\NBxtau}{ \texttt{NB}_{x, \tau} }
\newcommand{\NBxtaun}{ \texttt{NB}_{x, \tau_n} }
\newcommand{\cn}{ \mathbf{c} }
\newcommand{\ctau}{\mathbf{c}_{x^{(\test)}, \tau}}
\newcommand{\cA}{\mathbf{c}_{z^{(\test)}, \tau}^{ (+) }}
\newcommand{\DPcn}{ \widehat{\mathbf{c}} }
\newcommand{\DPctau}{\widehat{\mathbf{c}}_{x^{(\test)}, \tau}}
\newcommand{\DPcA}{\widehat{\mathbf{c}}_{z^{(\test)}, \tau}^{ (+) }}
\newcommand{\round}{ \texttt{round} }
\newcommand{\subsample}{ \texttt{sample} }
\newcommand{\supp}{ \texttt{supp} }
\newcommand{\knn}{ \texttt{KNN} }
\newcommand{\knnold}{ \texttt{KNN-OLD} }
\newcommand{\tknn}{ \texttt{TKNN} }
\newcommand{\Dval}{ D^{(\test)} }
\newcommand{\zval}{ z^{(\test)} }
\newcommand{\xval}{ x^{(\test)} }
\newcommand{\yval}{ y^{(\test)} }
\newcommand{\Ct}{\textbf{C}_{ z^{(\test)} }}
\newcommand{\DPCt}{ \widehat{\textbf{C}}_{ z^{(\test)} } }
\newcommand{\red}[1]{\textcolor[RGB]{255, 49, 49}{#1}}
\newcommand{\cmt}[1]{\texttt{// #1}}
\newcommand{\shapvalue}{\text{KNNSV}}
\newcommand{\Datk}{D_{\text{shadow}}}
\newcommand{\phiobs}{\phi_{\text{obs}}}
\author[1]{Jiachen T. Wang$^{\mathbf{\star}}$}
\author[2]{Yuqing Zhu}
\author[2]{Yu-Xiang Wang}
\author[3]{Ruoxi Jia$^{\mathbf{\star}}$}
\author[1]{Prateek Mittal}
\affil[1]{Princeton University}
\affil[2]{University of California, Santa Barbara}
\affil[3]{Virginia Tech\protect\\
\texttt{\small \{tianhaowang,pmittal\}@princeton.edu}, \texttt{\small yuqingzhu@ucsb.edu, 
yuxiangw@cs.ucsb.edu}, 
\texttt{\small ruoxijia@vt.edu}
}
\date{}
\title{Threshold KNN-Shapley: A Linear-Time and Privacy-Friendly Approach to Data Valuation}
\begin{document}

\maketitle

\newcommand\blfootnote[1]{%
  \begingroup
  \renewcommand\thefootnote{}\footnote{#1}%
  \addtocounter{footnote}{-1}%
  \endgroup
}

\begin{abstract}
Data valuation aims to quantify the usefulness of individual data sources in training machine learning (ML) models, and is a critical aspect of data-centric ML research. However, data valuation faces significant yet frequently overlooked privacy challenges despite its importance. This paper studies these challenges with a focus on KNN-Shapley, one of the most practical data valuation methods nowadays. We first emphasize the inherent privacy risks of KNN-Shapley, and demonstrate the significant technical difficulties in adapting KNN-Shapley to accommodate differential privacy (DP). To overcome these challenges, we introduce \emph{TKNN-Shapley}, a refined variant of KNN-Shapley that is privacy-friendly, allowing for straightforward modifications to incorporate DP guarantee (\emph{DP-}TKNN-Shapley). We show that DP-TKNN-Shapley has several advantages and offers a superior privacy-utility tradeoff compared to naively privatized KNN-Shapley in discerning data quality. Moreover, even non-private TKNN-Shapley achieves comparable performance as KNN-Shapley. Overall, our findings suggest that TKNN-Shapley is a promising alternative to KNN-Shapley, particularly for real-world applications involving sensitive data.\blfootnote{$^\mathbf{\star}$Correspondence to \textbf{Jiachen T. Wang} and \textbf{Ruoxi Jia}.}
% \blfootnote{YXW, PM, RJ are ordered reverse alphabetically.}

% Data valuation, a growing field that aims at quantifying the usefulness of individual data sources for training machine learning (ML) models, faces notable yet often overlooked privacy challenges. This paper studies these challenges with a focus on KNN-Shapley, one of the most practical data valuation methods nowadays. We first emphasize the inherent privacy risks of KNN-Shapley, and demonstrate the significant technical challenges in adapting KNN-Shapley to accommodate differential privacy (DP). To overcome these challenges, we introduce TKNN-Shapley, a refined variant of KNN-Shapley that is privacy-friendly, allowing for straightforward modifications to incorporate DP guarantee (DP-TKNN-Shapley). We show that DP-TKNN-Shapley has several advantages and offers a superior privacy-utility tradeoff compared to naively privatized KNN-Shapley. Moreover, even non-private TKNN-Shapley matches KNN-Shapley's performance in discerning data quality. Overall, our findings suggest that TKNN-Shapley is a promising alternative to KNN-Shapley, particularly for real-world applications involving sensitive data.
\end{abstract}

\setcounter{section}{0}

% % % \vspace-1mm}
\section{Introduction}

Data valuation is an emerging research area that seeks to measure the contribution of individual data sources to the training of machine learning (ML) models. Data valuation is essential in data marketplaces for ensuring equitable compensation for data owners, and in explainable ML for identifying influential training data. The importance of data valuation is underscored by the DASHBOARD Act introduced in the U.S. Senate in 2019 \cite{dashboardact}, which mandates companies to provide users with an assessment of their data's economic value. Moreover, OpenAI's Future Plan explicitly highlights ``the fair distribution of AI-generated benefits'' as a key question moving forward \cite{openaifuture}.

%Data valuation is an emerging research area that seeks to quantify the contribution of individual data to the training of machine learning (ML) models. This approach not only helps data owner understand how their data is being utilized, but is also essential for data collectors to make fair compensation or identify impactful training examples. In addition, recent DASHBOARD Act introduced in the U.S. Senate in 2019 mandates companies with over 100 million monthly active users to provide users with an assessment of their data's economic value. 

% % % \vspace-1mm}
\textbf{Data Shapley.} Inspired by cooperative game theory, \cite{ghorbani2019data,jia2019towards} initiated the study of using the Shapley value as a principled approach for data valuation. The Shapley value \cite{shapley1953value} is a famous solution concept from game theory for fairly assigning the total revenue to each player. 
In a typical scenario of data valuation, training data points are collected from different sources, and the data owner of each training data point can be regarded as ``player'' in a cooperative game. ``Data Shapley'' refers to the method of using the Shapley value as the contribution measure for each data owner. Since its introduction, many different variants of Data Shapley have been proposed \cite{jia2019efficient, ghorbani2020distributional, wang2020principled, bian2021energy, kwon2022beta, lin2022measuring, wu2022davinz, karlavs2022data, wang2023data}, reflecting its effectiveness in quantifying data point contributions to ML model training.

\textbf{KNN-Shapley.} 
While being a principled approach for data valuation, the exact calculation of the Shapley value is computationally prohibitive \cite{jia2019towards}. Various approximation algorithms for Data Shapley have been proposed \cite{jia2019towards, illes2019estimation, okhrati2021multilinear,wang2021improving,burgess2021approximating,mitchell2022sampling, lin2022measuring, wang2023notegroup}, but these approaches still require substantial computational resources due to model retraining. Fortunately, a breakthrough by \cite{jia2019efficient} showed that computing the \emph{exact} Data Shapley for K-Nearest Neighbors (KNN), one of the oldest yet still popular ML algorithms, is surprisingly easy and efficient. 
KNN-Shapley quantifies data value based on KNN's Data Shapley score; it can be applied to large, high-dimensional CV/NLP datasets by calculating the value scores on the last-layer neural network embeddings. 
Owing to its superior efficiency and effectiveness in discerning data quality, KNN-Shapley is recognized as one of the most practical data valuation techniques nowadays \cite{pandl2021trustworthy}. It has been applied to various ML domains including active learning \cite{ghorbani2022data}, continual learning \cite{shim2021online}, NLP \cite{liang2020beyond, liang2021herald}, and semi-supervised learning \cite{courtnage2021shapley}. 

% , and has also demonstrated practical value in healthcare data valuation and ML pipeline data debugging.

% The release of data values to individuals can potentially compromise privacy if the training data includes sensitive information, as the value itself is a function of the individual data and the rest of the training data. Moreover, the privacy issue becomes more complex when considering a strong threat model where some data owners can collude (using their received data values) to identify a particular individual. However, the privacy concern of data valuation has received limited attention. We study this problem using Differential Privacy (DP), which is a quantifiable definition of privacy and provides provable guarantee against identifications of individuals.

% % % \vspace-1mm}
\textbf{Motivation: privacy risks in data valuation.} 
In this work, we study a critical, yet often overlooked concern in the deployment of data valuation: privacy leakage associated with data value scores released to data holders. The value of a single data point is always relative to other data points in the training set. This, however, can potentially reveal sensitive information about the rest of data holders in the dataset. This problem becomes even more complex when considering a strong threat model where multiple data holders collude, sharing their received data values to determine the membership of a particular individual. As data valuation techniques such as KNN-Shapley become increasingly popular and relevant in various applications, understanding and addressing the privacy challenges of data valuation methods is of utmost importance. In this work, we study this critical issue through the lens of differential privacy (DP) \cite{dwork2006calibrating}, a de-facto standard for privacy-preserving applications.

Our technical contributions are listed as follows.

% \vspace-1mm}
\ul{\textbf{Privacy Risks \& Challenges of Privatization for KNN-Shapley}} (Section \ref{sec:risk-challenge}). 
We demonstrate that data value scores (specifically KNN-Shapley) indeed serve as a new channel for private information leakage, potentially exposing sensitive information about individuals in the dataset. In particular, we explicitly design a privacy attack (in Appendix \ref{appendix:mia}) where an adversary could infer the presence/absence of certain data points based on the variations in the KNN-Shapley scores, analogous to the classic membership inference attack on ML model \cite{shokri2017membership}. 
Additionally, we highlight the technical challenges in incorporating the current KNN-Shapley technique with differential privacy, such as its large global sensitivity. All these results emphasize the need for a new, privacy-friendly approach to data valuation.

%We also highlight a few challenges in privatizing the current KNN-Shapley technique, including the inconsistency between its recursive calculation and privacy analysis, which

% We propose a new data valuation method TKNN-Shapley and its private variant, which significantly improves the privacy-utility trade-off when releasing individual data values. Intuitively, training data that does not belong to the neighbors set of the query shall have a zero value. We realize this intuition by considering a threshold-based kNN(TKNN), which takes into account all neighbors within a pre-specified threshold of a test query, rather than the exact K nearest neighbors. We derive the closed form of Data Shapley for TKNN, which preserves the computational efficiency advantages of the original KNN-Shapley. Additionally, we demonstrate that by publishing a summary of statistics of the private dataset associated with each test example, we can  privately release all individual data values simultaneouly. 
 
% \vspace-1mm}
\ul{\textbf{TKNN-Shapley: an efficient, privacy-friendly data valuation technique}} (Section \ref{sec:tknn-shapley}). 
To address the privacy concerns, we derive a novel variant of KNN-Shapley. This new method considers the Data Shapley of an alternative form of KNN classifier called \emph{Threshold-KNN} (TKNN) \cite{bentley1975survey}, which takes into account all neighbors within a pre-specified threshold of a test example, rather than the exact $K$ nearest neighbors. We derive the closed-form formula of Data Shapley for TKNN (i.e., TKNN-Shapley). We show that it can be computed \emph{exactly} with exact linear-time computational efficiency over the original KNN-Shapley. 
\underline{\textbf{DP-TKNN-Shapley}} (Section \ref{sec:dptknnshapley}). Importantly, we recognize that TKNN-Shapley only depends on three simple counting queries. Hence, TKNN-Shapley can be conveniently transformed into a differentially private version by DP's post-processing property. 
Moreover, we prove that such a DP variant satisfies several favorable properties, including (1) the high computational efficiency, (2) the capability to withstand collusion among data holders without compromising the privacy guarantee, and (3) 
the ease of integrating subsampling for privacy amplification. 
% the simplicity of incorporating subsampling techniques for privacy amplification. 

% % \vspace-1mm}
% \ul{\textbf{Differentially Private TKNN-Shapley.}} 
% Crucially, TKNN-Shapley can be conveniently transformed into a differentially private version (DP-TKNN-Shapley). Such a DP variant satisfies favorable properties, including the capability to withstand collusion among data holders without compromising the differential privacy guarantee. 

% Moreover, TKNN-Shapley can be easily converted into a differentially private version, which satisfies useful properties such as allowing collusion between data holders. 

% \tianhao{Table for the comparison between KNN/TKNN-Shapley}

% \vspace-1mm}
\underline{\textbf{Numerical experiments}} (Section \ref{sec:eval}). 
We evaluate the performance of TKNN-Shapley across 11 commonly used benchmark datasets and 2 NLP datasets. 
% 13 datasets (including 2 NLP datasets). 
Key observations include: (1) TKNN-Shapley surpasses KNN-Shapley in terms of computational efficiency; (2) DP-TKNN-Shapley significantly outperforms the naively privatized KNN-Shapley in terms of privacy-utility tradeoff in discerning data quality; (3) even non-private TKNN-Shapley achieves comparable performance as KNN-Shapley. 

% The differentially private version of TKNN-Shapley, termed as DP-TKNN-Shapley, significantly outperforms the naively privatized KNN-Shapley in terms of privacy-utility tradeoff.

% \vspace-1mm}
Overall, our work suggests that TKNN-Shapley, being a privacy-friendly, yet more efficient and effective alternative to the original KNN-Shapley, signifies a milestone toward practical data valuation.

\section{Background of Data Valuation}

% \vspace-2mm}
In this section, we formalize the data valuation problem for ML, and review the method of Data Shapley and KNN-Shapley. 

% \vspace-1mm}
\textbf{Setup \& Goal.} Consider a dataset $D := \{ z_i \}_{i=1}^N$ consisting of $N$ data points where each data point $z_i := (x_i, y_i)$ is collected from a data owner $i$. The objective of data valuation is to attribute a score to each training data point $z_i$, reflecting its importance or quality in ML model training. Formally, we aim to determine a score vector $(\phi_{z_i})_{i=1}^N$, wherein $\phi_{z_i}$ represents the value of data point $z_i$. For any reasonable data valuation method, the value of a data point is always relative to other data points in the dataset. For instance, if a data point has many duplicates in the dataset, its value will likely be lower. Hence, $\phi_{z_i}$ is a function of the leave-one-out dataset $D_{-z_i} := D \setminus \{z_i\}$. We write $\phi_{z_i}(D_{-z_i})$ when we want to stress the dependency of a data value score with the rest of the data points. 

% \textbf{Setup \& Goal.} Suppose we have a dataset $D := \{ z_i \}_{i=1}^N$ containing $N$ labeled data points where each data point $z_i := (x_i, y_i)$ is collected from a data owner $i$. The objective of data valuation is to assign a score to each training data point $z_i$ in a way that reflects its value or quality for ML model training. More formally, we want to find a score vector $(\phi_{z_i})_{i=1}^N$ where each $\phi_{z_i}$ represents the value of the data point $z_i$. The value of a data point is always relative to the rest of the data points in the dataset. Hence, $\phi_{z_i}$ is always a function of $D_{-z_i}$ where we denote the leave-one-out dataset $D_{-z} := D \setminus \{z\}$. We write $\phi_{z_i}(D_{-z_i})$ when we want to stress the dependency of a data value score with the rest of the data points. 

% % % \vspace-1mm}
\textbf{Utility Function.} 
Most of the existing data valuation techniques are centered on the concept of \emph{utility function}, which maps an input dataset to a score indicating the usefulness of the training set. A common choice for utility function is the \emph{validation accuracy} of a model trained on the input training set. Formally, for a training set $S$, a utility function $\U(S) := \metric(\A( S ))$, where $\A$ is a learning algorithm that takes a dataset $S$ as input and returns a model; $\metric(\cdot)$ is a metric function that evaluates the performance of a given model, e.g., the classification accuracy on a hold-out validation set. 

% The goal of data valuation is to partition $\U(D)$, the utility of the entire dataset, to the individual data point $z \in D$. 

% \vspace-2mm}
\subsection{The Shapley Value}

% \vspace-1mm}
The Shapley value (SV) \cite{shapley1953value} is a classic concept from game theory to attribute the total gains generated by the coalition of all players. At a high level, it appraises each point based on the (weighted) average utility change caused by adding the point into different subsets of the training set. Formally, given a utility function $\U(\cdot)$ and a training set $D$, the Shapley value of a data point $z \in D$ is defined as 
% \begin{small}
\begin{align}
&\phi_z\left( D_{-z}; \U \right) := \frac{1}{N} \sum_{k=1}^{N} {N-1 \choose k-1}^{-1} \sum_{S \subseteq D_{-z}, |S|=k-1} \left[ \U(S \cup \{z\}) - \U(S) \right]
\label{eq:shapley-formula}
\end{align}
% \end{small}

% % % \vspace-1mm}
For notation simplicity, when the context is clear, we omit the utility function and/or leave-one-out dataset, 
and write $\phi_z(D_{-z})$, $\phi_z(\U)$ or $\phi_z$ depending on the specific dependency we want to stress. 

The popularity of the Shapley value is attributable to the fact that it is the \emph{unique} data value notion satisfying four axioms: Dummy player, Symmetry, Linearity, and Efficiency. We refer the readers to \cite{ghorbani2019data}, \cite{jia2019towards} and the references therein for a detailed discussion about the interpretation and necessity of the four axioms in the ML context. Here, we introduce the \emph{linearity} axiom which will be used later. 

\begin{theorem}[Linearity of the Shapley value \cite{shapley1953value}]
\label{thm:linearity}
For any of two utility functions $\U_1, \U_2$ and any $\alpha_1, \alpha_2 \in \R$, we have $\phi_{z} \left( \alpha_{1} \U_{1}+\alpha_{2} \U_{2}\right)=\alpha_{1} \phi_z\left(\U_{1}\right)+$ $\alpha_{2} \phi_z\left( \U_{2}\right)$.
\end{theorem}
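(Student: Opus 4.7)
The plan is to prove this directly from the explicit summation formula \eqref{eq:shapley-formula} and exploit the fact that finite sums commute with scalar linear combinations. The key observation is that the Shapley value is a linear functional of the utility function, because $\U$ enters the formula only through marginal contribution terms of the form $\U(S \cup \{z\}) - \U(S)$, each of which is linear in $\U$.

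Concretely, I would first define $\U := \alpha_1 \U_1 + \alpha_2 \U_2$ pointwise on subsets, so that for any $S \subseteq D_{-z}$,
\begin{align*}
\U(S \cup \{z\}) - \U(S)
&= \bigl[\alpha_1 \U_1(S \cup \{z\}) + \alpha_2 \U_2(S \cup \{z\})\bigr] - \bigl[\alpha_1 \U_1(S) + \alpha_2 \U_2(S)\bigr] \\
&= \alpha_1 \bigl[\U_1(S \cup \{z\}) - \U_1(S)\bigr] + \alpha_2 \bigl[\U_2(S \cup \{z\}) - \U_2(S)\bigr].
\end{align*}
Next, I would substitute this identity into the Shapley formula
\[
\phi_z(\U) = \frac{1}{N}\sum_{k=1}^N \binom{N-1}{k-1}^{-1} \sum_{S \subseteq D_{-z}, |S|=k-1} \bigl[\U(S \cup \{z\}) - \U(S)\bigr],
\]
and use the standard fact that finite sums distribute over scalar multiplication. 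Factoring $\alpha_1$ and $\alpha_2$ out of their respective summands splits the expression into two copies of the Shapley formula, one applied to $\U_1$ and one to $\U_2$, giving $\alpha_1 \phi_z(\U_1) + \alpha_2 \phi_z(\U_2)$.

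There is no real obstacle in this proof: every step is an application of linearity of finite sums, and no property of the weights $\tfrac{1}{N}\binom{N-1}{k-1}^{-1}$ is needed beyond their independence from $\U$. The only thing to be careful about is that the leave-one-out dataset $D_{-z}$ and the indexing set $\{S : S \subseteq D_{-z}, |S| = k-1\}$ are the same for $\U, \U_1, \U_2$, so the same collection of marginal contributions is being reweighted in each case; this is immediate since the combinatorial structure of the formula depends only on $D$ and $z$, not on the utility function.
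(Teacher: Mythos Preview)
Your proof is correct and is exactly the standard argument for linearity of the Shapley value. Note that the paper does not actually prove this theorem---it is stated as a classical fact with a citation to \cite{shapley1953value}---so there is no paper-proof to compare against; your direct verification from the formula \eqref{eq:shapley-formula} is the canonical route.
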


% \vspace-2mm}
\subsection{KNN-Shapley}

% \vspace-1mm}
Formula (\ref{eq:shapley-formula}) suggests that the exact Shapley value can be computationally prohibitive in general, as it requires evaluating $\U(S)$ for all possible subsets $S \subseteq D$. 
Surprisingly, \cite{jia2019efficient, wang2023noteknn} showed that for $K$-Nearest Neighbor (KNN), the computation of the exact Data Shapley score is highly efficient. 
Following its introduction, KNN-Shapley has rapidly gained attention and follow-up works across diverse areas of machine learning \cite{ghorbani2022data, liang2020beyond, liang2021herald, courtnage2021shapley}. In particular, it has been recognized by recent studies as ``the most practical data valuation technique capable of handling large-scale data effectively'' \cite{pandl2021trustworthy, karlavs2022data}.

% Specifically, consider a set of training data points $D = \{ z_i = (x_i, y_i)\}_{i=1}^{N}$ and a validation set $\Dval = \{ (x_{i}^{(\test)}, y_{i}^{(\test)}) \}_{i=1}^{N_\test}$. 
% Let $S \subseteq D$ be a subset of training data. We denote $\pi^{(S)}(i; x^{(\test)})$ the index (within $D$) of the $i$th closest data point in $S$ to $x^{(\test)}$. 
% The distance is measured through some appropriate metric, which we use $\ell_2$ distance throughout this paper without loss of generality. 
% Thus, $(x_{\pi^{(S)}(1; x^{(\test)})}, x_{\pi^{(S)}(2; x^{(\test)})}, \ldots, x_{\pi^{(S)}( |S| ; x^{(\test)})})$ is a reordering of the training instances in $S$ with respect to their distance from $x^{(\test)}$. When the context is clear, we omit the argument $x^{(\test)}$ and simply write $\pi^{(S)}(i)$. 
% As we mentioned earlier, the utility function $\U(\cdot)$ is usually defined as the validation accuracy of a model trained on the input training set. 

% % % \vspace-1mm}
Specifically, the performance of an unweighted KNN classifier is typically evaluated by its validation accuracy. For a given validation set $\Dval = \{ z_i^{(\test)}\}_{i=1}^{N_\test}$, 
% . The performance of an unweighted KNN classifier is typically evaluated by its validation accuracy on $\Dval$. Hence, for a non-empty training set $S$, 
we can define KNN's utility function $\U^\knn_{\Dval}$ on a non-empty training set $S$ as $\U^\knn_{\Dval}(S) := \sum_{\zval \in \Dval} \U^\knn_{\zval}(S)$, where
% \begin{small}
\begin{align}
\U^\knn_{\zval}(S) &:= \frac{1}{\min(K, |S|)} \sum_{j=1}^{\min(K, |S|)} \ind[y_{ \pi^{(S)}(j; \xval) } = y^{(\test)}]
\label{eq:new-util-func}
\end{align}
% \end{small}
is the probability of a (soft-label) KNN classifier in predicting the correct label for a validation point $\zval = (x^{(\test)}, y^{(\test)}) \in \Dval$. In (\ref{eq:new-util-func}), $\pi^{(S)}(i; x^{(\test)})$ is the index of the $i$th closest data point in $S$ to $x^{(\test)}$. When $|S|=0$, $\U^\knn_{\zval}(S)$ is set to the accuracy by random guessing. The distance is measured through suitable metrics such as $\ell_2$ distance. 
Here is the main result in \cite{jia2019efficient, wang2023noteknn}:

\begin{theorem}[KNN-Shapley \cite{jia2019towards, wang2023noteknn} (simplified version)]
\label{thm:knn-shapley}
Consider the utility function in (\ref{eq:new-util-func}). 
Given a validation data point $\zval = (x^{(\test)}, y^{(\test)})$ and a distance metric $d(\cdot, \cdot)$, if we sort the training set $D = \{z_i = (x_i, y_i)\}_{i=1}^N$ according to $d(x_i, x^{(\test)})$ in ascending order, then the Shapley value of each data point $\phi^\knn_{z_i}$ corresponding to utility function $\U^\knn_{\zval}$ can be computed recursively as follows:
\begin{align*}
\phi^\knn_{z_N} := f_N(D), ~~\mathrm{and}~~
\phi^\knn_{z_i} := \phi^\knn_{z_{i+1}} + f_i(D)~~~\text{for }i=1, \ldots, N-1
\end{align*}
where the exact form of functions $f_i(D)$ can be found in Appendix \ref{appendix:knnsv-full-version}. 

% % % \vspace-1mm}
\emph{\textbf{Runtime:}} the computation of all Shapley values $(\phi^\knn_{z_1}, \ldots, \phi^\knn_{z_N})$ can be achieved in $O(N \log N)$ runtime in total (dominated by the sorting data points in $D$). 

% % % \vspace-1mm}
\emph{\textbf{The Shapley value corresponding to full validation set:}} for a validation set $\Dval$, recall that $\U^\knn_{\Dval}(S) := \sum_{\zval \in \Dval} \U^\knn_{\zval}(S)$. 
One can compute the Shapley value corresponding to $\U^\knn_{\Dval}$ by summing each $\phi^\knn_{z_i} \left(\U^\knn_{\zval} \right)$, i.e., 
$
\phi^\knn_{z_i} \left(\U^\knn_{\Dval} \right) = \sum_{\zval \in \Dval} \phi^\knn_{z_i} \left(\U^\knn_{\zval} \right)
$. 
\end{theorem}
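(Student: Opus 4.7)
The plan is to first establish the single-validation-point case and then lift to a full validation set via the linearity axiom. Concretely, fix $\zval = (\xval, \yval)$ and assume, by relabeling, that $D = \{z_1, \ldots, z_N\}$ is sorted so that $d(x_1, \xval) \le \cdots \le d(x_N, \xval)$. The key structural observation is that $\U^\knn_{\zval}(S)$ depends only on the labels of the $\min(K, |S|)$ nearest points of $S$ to $\xval$, each contributing $\ind[y = \yval] / \min(K, |S|)$. Consequently, the marginal contribution $\U^\knn_{\zval}(S \cup \{z_i\}) - \U^\knn_{\zval}(S)$ is nonzero only when either $|S| < K$ (so $z_i$ automatically joins the KNN set) or $z_i$ is closer to $\xval$ than the current $K$-th nearest neighbor in $S$ (so $z_i$ bumps exactly one point out). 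In both cases, whether $z_i$ enters the KNN set of $S \cup \{z_i\}$ depends only on how many members of $\{z_1, \ldots, z_{i-1}\}$ lie in $S$, not on their identities.

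Next, I would substitute this marginal contribution into the Shapley formula (\ref{eq:shapley-formula}) and reorganize the inner sum $\sum_{S \subseteq D_{-z_i}, |S| = k-1}$ by stratifying subsets according to $a := |S \cap \{z_1, \ldots, z_{i-1}\}|$. The number of such subsets is $\binom{i-1}{a}\binom{N-i}{k-1-a}$, and the marginal contribution is a function of $a$, $k$, $N$, $K$ together with label indicators $\ind[y_j = \yval]$ for $j \ge i$. Summing over $a$ and $k$ against the Shapley weights $\frac{1}{N}\binom{N-1}{k-1}^{-1}$ collapses via hypergeometric identities to a closed-form expression for $\phi^\knn_{z_i}$. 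To reach the recursive presentation in the statement, I would form the telescoping difference $\phi^\knn_{z_i} - \phi^\knn_{z_{i+1}}$: since $z_i$ and $z_{i+1}$ share the same structural relationship with the points ranked further away, most terms cancel, leaving a local quantity $f_i(D)$ depending only on $\ind[y_i = \yval]$, $\ind[y_{i+1} = \yval]$, $i$, $K$, and $N$. The base case $\phi^\knn_{z_N} = f_N(D)$ follows by directly evaluating (\ref{eq:shapley-formula}) for the farthest training point.

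For the runtime claim, sorting $D$ by distance to $\xval$ costs $O(N \log N)$; thereafter the recursion evaluates each $f_i(D)$ in $O(1)$, so the whole computation is dominated by the sort. Finally, the full validation set statement follows immediately from Theorem~\ref{thm:linearity}: since $\U^\knn_{\Dval} = \sum_{\zval \in \Dval} \U^\knn_{\zval}$, linearity of the Shapley value yields $\phi^\knn_{z_i}(\U^\knn_{\Dval}) = \sum_{\zval \in \Dval} \phi^\knn_{z_i}(\U^\knn_{\zval})$. The main obstacle is the combinatorial bookkeeping in step two: carefully handling the boundary regime $|S| < K$ separately from the bumping regime $|S| \ge K$, correctly tracking which label gets displaced when $z_i$ enters the top-$K$, and verifying the hypergeometric identities that are needed to close the double sum into a form that telescopes cleanly between adjacent indices.
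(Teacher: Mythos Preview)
The paper does not supply its own proof of this statement: Theorem~\ref{thm:knn-shapley} is quoted as a known result from \cite{jia2019towards,wang2023noteknn}, and Appendix~\ref{appendix:knnsv-full-version} merely restates the full formulas (Theorem~\ref{thm:knn-shapley-full}) without derivation. So there is no in-paper argument against which to compare your proposal.

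That said, your outline is the standard route taken in the cited references: sort by distance to $\xval$, observe that the marginal contribution is nonzero only when $z_i$ enters the top-$\min(K,|S|)$ neighbors, stratify subsets by the number of closer points they contain, collapse the resulting hypergeometric sums, and then telescope adjacent Shapley values to obtain the recursion. Your identification of the two regimes ($|S|<K$ versus the bumping regime) and of the cancellation that makes $\phi^\knn_{z_i}-\phi^\knn_{z_{i+1}}$ depend only on $\ind[y_i=\yval]$, $\ind[y_{i+1}=\yval]$, $i$, $K$, $N$ matches the structure of the explicit $f_i$ displayed in Theorem~\ref{thm:knn-shapley-full}. The runtime and full-validation-set claims are handled exactly as the paper indicates (sorting dominates; linearity via Theorem~\ref{thm:linearity}). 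Nothing in your plan conflicts with the paper; you are simply reconstructing what it imports from the literature.
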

% \begin{theorem}[KNN-Shapley (simplified) \cite{wang2023noteknn}]
% \label{thm:knn-shapley}
% Consider the utility function in (\ref{eq:new-util-func}). Given the test data point $(x^{(\test)}, y^{(\test)})$, assume that the input dataset $D = \{z_i = (x_i, y_i)\}_{i=1}^N$ is sorted according to $\norm{x_i - x^{(\test)}}$ in ascending order. Then, the Shapley value of each training point $\phi^\knn_{z_i}(D_{-z_i})$ can be calculated recursively as follows: 
% % \begin{small}
% \begin{align*}
%     \phi^\knn_{z_N}  &= \frac{\ind[N \ge 2]}{N} \left( \ind[y_N = y^{(\test)}] - \frac{ \sum_{i=1}^{N-1} \ind[y_i = y^{(\test)}] }{N-1} \right) \left( \sum_{j=1}^{ \min(K, N) - 1 } \frac{1}{j+1} \right) + \frac{1}{N} \left( \ind[y_N = y^{(\test)}] - \frac{1}{C} \right) \\
%     \phi^\knn_{z_i}  &= \phi^\knn_{z_{i+1}} + 
%     \frac{ \ind[y_i = y^{(\test)}] - \ind[y_{i+1} = y^{(\test)}] }{N-1} 
%     \left[
%     \sum_{j=1}^{\min(K, N)} \frac{1}{j} 
%     + \frac{\ind[N \ge K]}{K} \left( \frac{ \min(i, K) \cdot (N-1) }{i} - K
%     \right)
%     \right]
% \end{align*}
% % \end{small}
% \end{theorem}

% % % \vspace-1mm}
Theorem \ref{thm:knn-shapley} tells that for any validation data point $\zval$, we can compute the \emph{exact} Shapley value $\phi^\knn_{z_i} \left(D_{-z_i}; \U^\knn_{\zval} \right)$ for \emph{all} $z_i \in D$ by using a recursive formula within a total runtime of $O(N \log N)$. After computing the Shapley value $\phi^\knn_{z_i} \left(\U^\knn_{\zval} \right)$ for each $\zval \in \Dval$, one can compute the Shapley value corresponding to the full validation set by simply taking the sum of each $\phi^\knn_{z_i} \left(\U^\knn_{\zval} \right)$, due to the linearity property (Theorem \ref{thm:linearity}) of the Shapley value. 

% We can easily see that after sorting the training data points according to their distances to $\zval$, \emph{all of} $(\phi^\knn_{z_1}, \ldots, \phi^\knn_{z_N})$ can be computed within $O(N)$ time, hence the overall time complexity is $O(N \log N)$ (i.e., the runtime is dominated by the step of sorting the training set). 
\begin{remark}[\textbf{Criteria of Computational Efficiency}]
As prior data valuation literature \cite{ghorbani2019data}, we focus on the total runtime required to release \emph{all} data value scores $(\phi^\knn_{z_1}, \ldots, \phi^\knn_{z_N})$. This is because, in a practical data valuation scenario (e.g., profit allocation), we rarely want to compute the data value score for only a single data point; instead, a typical objective is to compute the data value scores for all data points within the training set. 
\label{remark:runtime}
\end{remark}

% \textbf{Releasing the Shapley value when using the full validation set.} 
% After computing the Data Shapley score $\phi^\knn_{z_i} \left(\U^\knn_{\zval} \right)$ for each $\zval \in \Dval$, one can compute the Data Shapley corresponding to the utility function $\U_{\Dval}$ on the overall validation set $\Dval$ through the linearity property (Theorem \ref{thm:linearity}) of the Shapley value, i.e., 
% $
% \phi^\knn_{z_i} \left(\U^\knn_{\Dval} \right) = \sum_{\zval \in \Dval} \phi^\knn_{z_i} \left(\U^\knn_{\zval} \right)
% $. 

\begin{figure}[t]
    \centering
    \includegraphics[width=\textwidth]{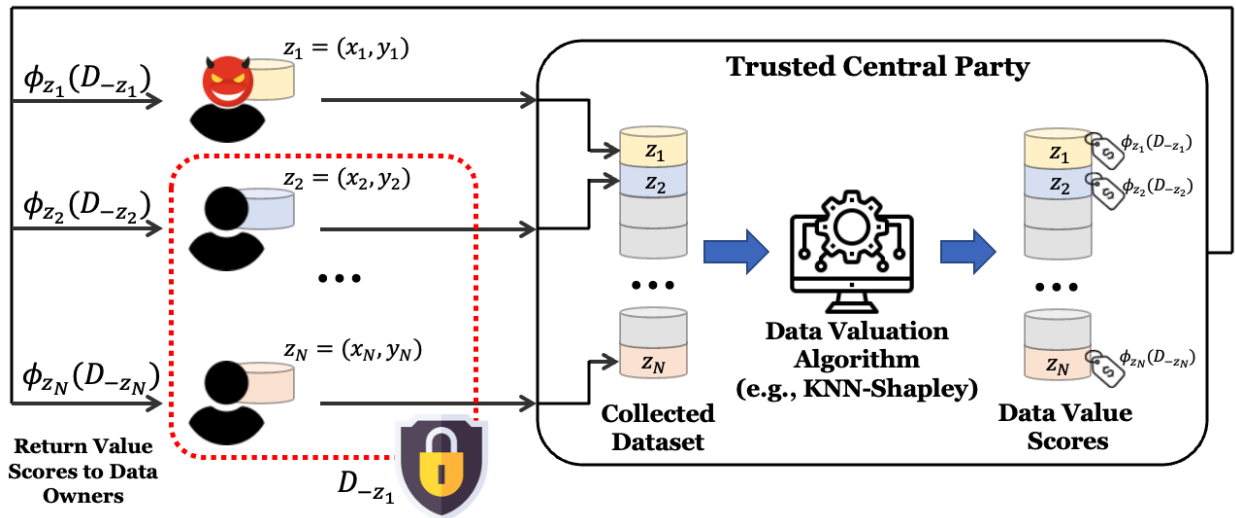}
    \caption{ 
    The potential privacy risks in data valuation arise from the dependency of the data value score $\phi_{z_i}$ on the rest of the dataset $D_{-z_i}$. Our goal is to privatize $\phi_{z_i}(D_{-z_i})$ such that it provides strong differential privacy guarantee for the rest of the dataset $D_{-z_i}$. 
    }
    \label{Fig:threat-model}
\end{figure}

% \vspace-2mm}
\section{Privacy Risks \& Challenges of Privatization for KNN-Shapley}
\label{sec:risk-challenge}

% \vspace-2mm}
\textbf{Scenario.} 
Figure \ref{Fig:threat-model} illustrates the data valuation scenario and potential privacy leakages considered in our paper. Specifically, 
% in a typical scenario of data valuation, 
a centralized, trusted server collects data point $z_i$ from data owner $i$ for each $i \in [N]$. 
The central server's role is to provide each data owner $i$ with an assessment of the value of their data $z_i$, e.g., the KNN-Shapley value $\phi^{\knn}_{z_i}$. 
\textbf{A real life example:} Mayo Clinic has created a massive digital health patient data marketplace platform \cite{mayo}, where the patients submit part of their medical records onto the platform, and life science companies/labs pay a certain amount of money to purchase patients’ data. The platform's responsibility is to gauge the worth of the data of each patient (i.e., the data owner) to facilitate fair compensation. 

% % % \vspace-1mm}

% \begin{wrapfigure}{R}{0.35 \textwidth}
%     \setlength\intextsep{0pt}
%     \setlength\abovecaptionskip{0pt}
%     \setlength\belowcaptionskip{-10pt}
%     \centering
%     % \vspace-3mm}
%     \includegraphics[width=0.35 \textwidth]{}
%     \caption{ Curves of KNN-Shapley of a fixed data point as a function of $K$ (the hyperparameter for KNN), with and without a particular nearby data point in the training set. }
%     \label{fig:leakpriv}
% \end{wrapfigure}
The privacy risks associated with KNN-Shapley (as well as other data valuation techniques) arise from the fact that $\phi^{\knn}_{z_i}(D_{-z_i})$ depends on other data owners' data $D_{-z_i}$. Consequently, the data value score $\phi^{\knn}_{z_i}$ may inadvertently reveal private information (e.g., membership) about the rest of the dataset. The dependency of a data value score on the rest of the dataset is an unavoidable aspect of data valuation, as the value of a data point is inherently a relative quantity determined by its role within the complete dataset. 

% \tianhao{a few sentences about the practical scenario of the threat model}

% % % \vspace-1mm}
\begin{remark}[\textbf{Other privacy risks in data valuation}]
\label{remark:other-privacy-risks}
It is important to note that in this work, we do not consider the privacy risks of revealing individuals' data to the central server. This is a different type of privacy risk that needs to be addressed using secure multi-party computation (MPC) technique \cite{yao1982protocols}, and it should be used together with differential privacy in practice. In addition, to use KNN-Shapley or many other data valuation techniques, the central server needs to maintain a clean, representative validation set, the privacy of which is not considered by this paper. 
\end{remark}

% % % % \vspace-1mm}
% \textbf{Experiments.} We further illustrate the privacy risks of revealing data value scores with a concrete example, where we investigate the impact of removing a data point on the KNN-Shapley score of another data point that is close to the removed one. 
% % Specifically, we first compute the KNN-Shapley score for a selected data point in the original dataset, and then repeat the process after removing one of its nearby data points. 
% Specifically, we calculate the KNN-Shapley score for a chosen data point in the dataset, and then repeat the process after eliminating one of its nearby points from the dataset. 
% % We show the results on CPU dataset \cite{cpudataset} in Figure \ref{fig:leakpriv}, and defer the results on additional datasets to Appendix \ref{appendix:experiment}. We can see from Figure \ref{fig:leakpriv} that there's a significant difference in KNN-Shapley score for the inspected data point depending on whether the nearby data point has been removed or not. 
% The results in Figure \ref{fig:leakpriv} (on CPU dataset \cite{cpudataset}) show a substantial difference in the KNN-Shapley score of the investigated data point depending on whether the nearby data point has been removed. 
% Detailed experiment settings as well as the results on additional datasets can be found in Appendix \ref{appendix:knnsv-privrisk-score}.

\begin{wraptable}{r}{0.5 \textwidth}
\centering
% \vspace-4mm}
\resizebox{0.5 \textwidth}{!}{
\begin{tabular}{@{}ccccccccc@{}}
\toprule
\textbf{}           & $K=1$ & $K=3$ & $K=5$ & $K=7$ & $K=9$ & $K=11$ & $K=13$ & $K=15$ \\ \midrule
\textbf{2DPlanes}   & 0.56  & 0.595 & 0.518 & 0.52  & 0.57  & 0.55   & 0.515  & 0.6    \\
\textbf{Phoneme}    & 0.692 & 0.54  & 0.513 & 0.505 & 0.505 & 0.588  & 0.512  & 0.502  \\
\textbf{CPU}        & 0.765 & 0.548 & 0.52  & 0.572 & 0.588 & 0.512  & 0.612  & 0.615  \\
\textbf{Fraud}      & 0.625 & 0.645 & 0.6   & 0.592 & 0.622 & 0.532  & 0.538  & 0.558  \\
\textbf{Creditcard} & 0.542 & 0.643 & 0.628 & 0.665 & 0.503 & 0.67   & 0.602  & 0.66   \\
\textbf{Apsfail}    & 0.532 & 0.595 & 0.625 & 0.6   & 0.53  & 0.645  & 0.532  & 0.52   \\
\textbf{Click}      & 0.61  & 0.525 & 0.588 & 0.538 & 0.588 & 0.582  & 0.622  & 0.618  \\
\textbf{Wind}       & 0.595 & 0.51  & 0.518 & 0.528 & 0.558 & 0.562  & 0.505  & 0.577  \\
\textbf{Pol}        & 0.725 & 0.695 & 0.7   & 0.62  & 0.57  & 0.522  & 0.535  & 0.532  \\ \bottomrule
\end{tabular}
}
\caption{
\add{Results of the AUROC of the MI attack proposed in Appendix \ref{appendix:mia} on KNN-Shapley. The higher the AUROC score is, the larger the privacy leakage is. The detailed algorithm description and experiment settings can be found in Appendix \ref{appendix:mia}.}
}
\label{tb:MIA}    
\end{wraptable}

% \vspace-1mm}
\textbf{A Simple Membership Inference (MI) Attack on KNN-Shapley (detailed in Appendix \ref{appendix:mia}).} We further illustrate the privacy risks of revealing data value scores with a concrete example.
% , where we exploit the impact of removing a data point on the KNN-Shapley score of another data point that is close to the removed one. 
% Our observation demonstrates the potential for KNN-Shapley scores to leak membership information of other data points in the dataset. 
Analogous to the classic membership inference attack on ML model \cite{shokri2017membership}, in Appendix \ref{appendix:mia} we show an example of privacy attack where an adversary could infer the presence/absence of certain data points in the dataset based on the variations in the KNN-Shapley scores. The design is analogous to the membership inference attack against ML models via the \emph{likelihood ratio test} \cite{carlini2022membership}. The AUROC score of the attack results is shown in Table \ref{tb:MIA}. As we can see, our MIA attack can achieve a detection performance that is better than the random guess (0.5) for most of the settings. On some datasets, the attack performance can achieve $> 0.7$ AUROC. This demonstrates that privacy leakage in data value scores can indeed lead to non-trivial privacy attacks, and underscores the need for privacy safeguards in data valuation.\footnote{We stress that the goal here is to demonstrate that the data value scores can indeed serve as another channel of privacy leakage. We do \emph{not} claim any optimality of the attack we construct here. Improving MI attacks for data valuation is an interesting future work.}

\subsection{ Privatizing KNN-Shapley is Difficult }
\label{sec:difficulty-in-privatize-knn}

% As data valuation techniques, especially for KNN-Shapley, become increasingly popular and relevant in various applications, mitigating its privacy risks is of utmost importance. 

% \vspace-1mm}
The growing popularity of data valuation techniques, particularly KNN-Shapley, underscores the critical need to mitigate the inherent privacy risks. In the quest for privacy protection, differential privacy (DP) \cite{dwork2006calibrating} has emerged as the leading framework. DP has gained considerable recognition for providing robust, quantifiable privacy guarantees, thereby becoming the de-facto choice in privacy protection. 
However, while DP may seem like the ideal solution, its integration with KNN-Shapley is riddled with challenges. 
In this section, we introduce the background of DP and highlight the technical difficulties in constructing a differentially private variant for the current version of KNN-Shapley. 

% constructing a differentially private variant for a 
% adopting DP for data valuation is not a straightforward task.

% % Differential privacy (DP) is the \emph{most} widely accepted framework for reasoning about the privacy guarantees of an algorithm. It provides a robust and quantitative privacy guarantee and is the go-to solution for privacy protection. 
% In this section, we introduce the background of DP and highlight the difficulties in constructing a differentially private variant for the current version of KNN-Shapley. 

% % % \vspace-1mm}
\textbf{Background of Differential Privacy.} We use $D, D' \in \MS$ to denote two datasets with an unspecified size over space $\cX$. We call two datasets $D$ and $D'$ \emph{adjacent} (denoted as $D \sim D'$) if we can construct one by adding/removing one data point from the other, e.g., $D = D' \cup \{z\}$ for some $z \in \cX$.

% Differential privacy (DP) is a framework to reason about the privacy guarantees of randomized algorithms that make use of sensitive data \cite{dwork2006calibrating}. DP provides a robust and quantitative privacy guarantee and has been widely accepted as the \emph{only} framework for reasoning about the privacy guarantees of an algorithm. 
% Formally, 

% % % \vspace-0.5mm}
\begin{definition}[Differential Privacy \cite{dwork2006calibrating}] 
\label{def:DP}
For $\eps, \delta \ge 0$, a randomized algorithm $\M : \MS \rightarrow \cY$ is 
{\em $(\eps, \delta)$-differentially private} if for every pair of adjacent datasets $D \sim D'$ and for every subset of possible outputs $E \subseteq \cY$, we have
$
\Pr_{\M}[\M(D) \in E] \le e^\eps \Pr_{\M}[\M(D') \in E] + \delta 
$. 
% where the randomness is over the coin flips of $\M$. 
\end{definition}

% % % \vspace-1mm}
That is, $(\eps, \delta)$-DP requires that for all adjacent datasets $D, D'$, the output distribution $\M(D)$ and $\M(D')$ are close, where the closeness is measured by the parameters $\eps$ and $\delta$. 
In our scenario, we would like to modify data valuation function $\phi_{z_i}(D_{-z_i})$ to satisfy $(\eps, \delta)$-DP in order to protect the privacy of the rest of the dataset $D_{-z_i}$. 
Gaussian mechanism \cite{dwork2006our} is a common way for privatizing a function; it introduces Gaussian noise aligned with the function's \emph{global sensitivity}, which is the maximum output change when a single data point is added/removed from any given dataset.

% Gaussian mechanism is a common way of privatizing a function; it adds Gaussian noise proportional to the function's \emph{global sensitivity}, which measures the maximum possible change in the function's output when a single data point in an arbitrary dataset is added/removed. 

% A common way to achieve DP is Gaussian mechanism, which adds
% % by adding 
% Gaussian noise proportional to the function's \emph{global sensitivity}, which measures the maximum possible change in the function's output when a single data point in an arbitrary dataset is added/removed. 

% % % \vspace-1mm}

\begin{definition}[Gaussian Mechanism \cite{dwork2006our}]
\label{thm:gaussian}
Define the global sensitivity of a function $f: \MS \rightarrow \mathbb{R}^d$ as $\Delta_2(f) := \sup_{D \sim D'} \norm{ f(D) - f(D') }_2$. The Gaussian mechanism $\M$ with noise level $\sigma$ is then given by $\M(D) := f(D)+ \N \left(0, \sigma^2 \ind_d\right)$ where $\M$ is $(\eps, \delta)$-DP with $\sigma = \Delta_2(f) \sqrt{\log (1.25 / \delta)} / \eps$. 
\end{definition}

\textbf{Challenges in making KNN-Shapley being differentially private (overview).} 
Here, we give an overview of the inherent difficulties in making the KNN-Shapley $\phi^{\knn}_{z_i}(D_{-z_i})$ (Theorem \ref{thm:knn-shapley}) to be differentially private, and we provide a more detailed discussion in Appendix \ref{appendix:challenge-in-dp}. 
\uline{\textbf{(1) Large global sensitivity:}} 
% It is challenging to tightly bound the global sensitivity of $\phi^\knn_{z_i}(D_{-z_i})$. 
In Appendix \ref{appendix:challenge-in-dp}, we show that the global sensitivity of $\phi^\knn_{z_i}(D_{-z_i})$ can significantly exceed the magnitude of $\phi^\knn_{z_i}$. Moreover, we prove that the global sensitivity bound cannot be further improved by constructing a specific pair of neighboring datasets that matches the bound. 
% is at least around $O(1)$ by constructing a specific pair of neighboring datasets. On the other hand, the magnitude of each $\phi^\knn_{z_i}$ is approximately $O(1/K)$. 
% Moreover, in Appendix \ref{appendix:background-more}, we show that the global sensitivity of $\phi^\knn_{z_i}(D_{-z_i})$ is at least around $O(1)$ by constructing a specific pair of neighboring datasets. On the other hand, the magnitude of each $\phi^\knn_{z_i}$ is approximately $O(1/K)$. 
Hence, if we introduce random noise proportional to the global sensitivity bound, the resulting privatized data value score could substantially deviate from its non-private counterpart, thereby compromising the utility of the privatized data value scores. 
%if we introduce random noise proportional to the global sensitivity, the resulting privatized data value score could substantially deviate from the non-private version, which makes the privatized data value scores have low utility. 
% \uline{\textbf{(2) Vulnerability to collusion:}} Even if a good global sensitivity bound for $\phi^\knn_{z_i}(D_{-z_i})$ can be derived, the privacy guarantee of releasing a noisy $\widehat \phi^\knn_{z_i}(D_{-z_i})$ to individual $i$ would heavily rely on the assumption that individuals do not share the noisy $\widehat \phi^\knn_{z_i}(D_{-z_i})$ with others. 
% % For example, if an adversary sees both $\widehat \phi^\knn_{z_1}(D_{-z_1})$ and $\widehat \phi^\knn_{z_2}(D_{-z_2})$, then the privacy guarantee for the rest of the dataset $D_{-\{z_1, z_2\}}$ is degraded. 
% % becomes fragile in the face of collusion where a group of individuals share their noisy $\phi^\knn_{z_i}(D_{-z_i})$, resulting in a breach of privacy guarantees. 
\uline{\textbf{(2) Computational challenges in incorporating privacy amplification by subsampling:}} ``Privacy amplification by subsampling'' \cite{balle2018privacy} is a technique where the subsampling of a dataset amplifies the privacy guarantees due to the reduced probability of an individual's data being included. Being able to incorporate such a technique is often important for achieving a decent privacy-utility tradeoff. 
However, in Appendix \ref{appendix:challenge-in-dp} we show that the recursive nature of KNN-Shapley computation 
% makes it hard to incorporate the subsampling techniques. 
causes a significant increase in computational demand compared to non-private KNN-Shapley.

% In Appendix \ref{appendix:background-more}, we show that the release of \emph{all} $(\phi^\knn_{z_i})_{z_i \in D}$ requires a runtime of $\widetilde{O}(N^2)$ if we incorporate the subsampling technique, which is a significant increase in computational demand compared to the non-private KNN-Shapley. 

% Recall that in a practical scenario of data valuation, the central server typically wants to release $\phi^\knn_{z_i}$ for \emph{all} $z_i \in D$. 

% However, to apply the subsampling technique, we first need to create a subsampled dataset and compute the KNN-Shapley for the target data point $z_i$. This operation requires a runtime of $\widetilde{O}(N)$ for the release of \emph{each} $\phi^\knn_{z_i}$, resulting in a final runtime of $\widetilde{O}(N^2)$ for the release of \emph{all} $(\phi^\knn_{z_i})_{z_i \in D}$. This is a significant increase in computational demand compared to the non-private KNN-Shapley. The recursive nature of KNN-Shapley computation further complicates the attempt of improving the computational efficiency. 

% % % \vspace-1mm}
These challenges underscore the pressing need for new data valuation techniques that retain the efficacy and computational efficiency of KNN-Shapley, while being amenable to privatization.

% These challenges underscore the need for new data valuation approaches that maintain the effectiveness and computational efficiency of KNN-Shapley, while being easily augmented to be differentially private. 

% \tianhao{For Yuqing: you can mention two points: (1) Deriving the global sensitivity for $\phi^\knn_{z_i}(D_{-z_i})$ is hard; (2) we want to release $\phi^\knn_{z_i}(D_{-z_i})$ for all $i$, but since the computation of $\phi^\knn_{z_i}(D_{-z_i})$ is a recursive formula, we cannot benefit from privacy amplification via subsampling techniques without sacrificing computational efficiency.}

% \vspace-2mm}
\section{The Shapley Value for Threshold-based Nearest Neighbor}
\label{sec:tknnsv}

% \vspace-2mm}
Considering the privacy concerns and privatization challenges associated with the original KNN-Shapley method, we introduce \emph{TKNN-Shapley}, a privacy-friendly alternative of KNN-Shapley which also achieves improved computational efficiency. At the core of this novel method is Threshold-KNN (TKNN) classifier, a simple variant of the KNN classifier. 
We will discuss how to incorporate DP for TKNN-Shapley in Section \ref{sec:dptknnshapley}. 

% Considering the privacy concerns and privatization challenges associated with the original KNN-Shapley method, we introduce \emph{TKNN-Shapley}, a privacy-friendly alternative of KNN-Shapley that retains the same computational efficiency and effectiveness. At the core of this novel method is Threshold-KNN (TKNN) classifier, a simple variant of the KNN classifier. 

% that considers neighbors within a pre-specified threshold of the query example, rather than exclusively focusing on the exact $K$ nearest neighbors. We derive the closed-form, efficiently computable Data Shapley formula for the TKNN classifier (i.e., TKNN-Shapley). TKNN-Shapley not only maintains the computational efficiency and efficacy of KNN-Shapley in evaluating the importance of data points, but also offers the extra advantage of being privacy-friendly. 

% We then redefine the utility function as the soft-label accuracy for TKNN. The refined utility function enables us to develop a more privacy-friendly version of KNN-Shapley, while maintaining the method's efficacy in evaluating the importance of data points in the dataset.

% \vspace-2mm}
\subsection{Threshold-based Nearest Neighbor Classifier (TKNN)}
\label{sec:tknn-main}

% \textbf{Motivation.} The sorting nature of searching for exact k-nearest neighbors makes `recursive process` unavoidable. However, our approach eliminates the need for recursion through a simple intuition:``the selection of private data depends solely on the data itself and the validation example, independent of any other private data. To realize this intution, we propose a threshold-base ...

% \vspace-1mm}
Threshold-KNN (TKNN) \cite{bentley1975survey, zhu2023private} is a variant of KNN classifier that considers neighbors within a pre-specified threshold of the query example, rather than exclusively focusing on the exact $K$ nearest neighbors. 
Formally, for a training set $S$ and a validation data point $\zval = (x^{(\test)}, y^{(\test)})$, we denote 
$
\NBtauxval(S) 
:= \{(x, y) | (x, y) \in S, d(x, x^{(\test)}) \le \tau \}
$
the set of neighbors of $x^{(\test)}$ in $S$ within a pre-specified threshold $\tau$. 
% , i.e., 
% $
% \NBtauxval(S) 
% := \{(x, y) | (x, y) \in S, d(x, x^{(\test)}) \le \tau \}
% $. 
Similar to the utility function for KNN, we define the utility function for TKNN classifier when using a validation set $\Dval$ as the aggregated prediction accuracy $\U^\tknn_{\Dval}(S) := \sum_{\zval \in \Dval} \U^\tknn_{\zval}(S)$ where 
% \begin{small}
\begin{align}
\U^\tknn_{\zval}(S) := 
\begin{cases} 
      \texttt{Constant} & |\NBtauxval(S)|=0 \\
      \frac{1}{ |\NBtauxval(S)| } \sum_{(x, y) \in \NBtauxval(S)} \ind[y = y^{(\test)}] & |\NBtauxval(S)| > 0
\end{cases}
\label{eq:threshold-util}
\end{align}
% \end{small}
where $\texttt{Constant}$ can be the trivial accuracy of random guess. 

\textbf{Comparison with standard KNN.} 
\textbf{(1) Robustness to outliers.} Compared with KNN, TKNN is better equipped to deal with prediction phase outliers \cite{bentley1975survey}. When predicting an outlier that is far from the entire training set, TKNN prevents the influence of distant, potentially irrelevant neighbors, leading to a more reliable prediction score for outliers.
%, as it considers only the neighbors within a pre-specified range. 
\textbf{(2) Inference Efficiency.} TKNN has slightly better computational efficiency compared to KNN, as it has $O(N)$ instead of $O(N \log N)$ inference time. This improvement is achieved because TKNN only requires the computation of neighbors within the threshold $\tau$, rather than searching for the exact $K$ nearest neighbors. 
\textbf{(3) TKNN is also a consistent estimator.} The consistency of standard KNN is a well-known theoretical result \cite{gyorfi2002distribution}. That is, for any target function that satisfies certain regularity conditions, KNN binary classifier/regressor is guaranteed to converge to the target function as the size of the training set grows to infinite. In Appendix \ref{appendix:consistency}, we derived a similar consistency result for TKNN binary classifier/regressor. 

\begin{remark}[\textbf{Intuition: Why we consider TKNN?}]
The `recursive form' of KNN-Shapley is due to the sorting operation in the prediction of the standard KNN. 
The recursive form of the formula causes difficulties in incorporating KNN-Shapley with differential privacy. 
In contrast, TKNN avoids the recursive formula for its Data Shapley value; the intuition is that for TKNN, the selection of neighbors for prediction solely depends on the queried example and the validation data, and is independent of the other training data points. 
% eliminates the need for recursion through a simple intuition:``the selection of private data depends solely on the data itself and the validation example, independent of any other private data. To realize this intution, we propose a threshold-base ...
\end{remark}

% \textbf{Why we consider TKNN?} `The sorting nature of searching for exact k-nearest neighbors makes `recursive process` unavoidable. However, our approach eliminates the need for recursion through a simple intuition:``the selection of private data depends solely on the data itself and the validation example, independent of any other private data. To realize this intution, we propose a threshold-base ...

% % % \vspace-1mm}
\subsection{Data Shapley for TKNN (TKNN-Shapley)}
\label{sec:tknn-shapley}

% \vspace-1mm}
With the introduction of TKNN classifier and its utility function, we now present our main result, the closed-form, efficiently computable Data Shapley formula for the TKNN classifier.

% By leveraging the utility function presented in Equation (\ref{eq:threshold-util}), we can express the Shapley value of each training point $z_i = (x_i, y_i) \in D$ for TKNN as follows:

\begin{theorem}[TKNN-Shapley (simplified version)]
\label{thm:TKNN-shapley}
Consider the utility function $\U^\tknn_{\zval}$ in (\ref{eq:threshold-util}). 
Given a validation data point $\zval = (x^{(\test)}, y^{(\test)})$ and a distance metric $d(\cdot, \cdot)$, the Shapley value $\phi^\tknn_{z_i}$ of each training point $z_i = (x_i, y_i) \in D$ corresponding to utility function $\U^\tknn_{\zval}$ can be calculated as 
% \begin{small}
\begin{align}
\phi^\tknn_{z_i} = f\left( \Ct(D_{-z_i}) \right)
\label{eq:tknn-shapley}
\end{align}
% \end{small}
where $\Ct := ( \cn, \ctau, \cA )$ is a 3-dimensional function/vector s.t. 
% \begin{small}
\begin{align*}
    \cn &= \cn(D_{-z_i}) := |D_{-z_i}|  & (\text{size of}~D_{-z_i}) \\
    \ctau &= \ctau( D_{-z_i} ) := 1 +  \left| \NBtauxval( D_{-z_i} ) \right|  & (1+\text{\# neighbors of}~\xval~\text{in}~D_{-z_i}) \\ 
    \cA &= \cA( D_{-z_i} ) := \sum_{(x, y) \in \NBtauxval( D_{-z_i} )} \ind[y = y^{(\test)}] & (\text{\# same-label neighbors in}~D_{-z_i})
\end{align*}
% \end{small} 
and $f(\cdot)$ is a function whose exact form can be found in Appendix \ref{appendix:tknn-shapley-proof-all}. 

% % % \vspace-0.5mm}
\emph{\textbf{Runtime:}} the computation of all $(\phi^\tknn_{z_1}, \ldots, \phi^\tknn_{z_N})$ can be achieved in $O(N)$ runtime \add{(see Appendix \ref{appendix:tknn-shapley-fullversion})}. 

% % % \vspace-0.5mm}
\emph{\textbf{The Shapley value when using full validation set:}} The Shapley value corresponding to the utility function $\U^\tknn_{\Dval}$ 
% on the overall validation set $\Dval$ 
can be calculated as 
$
\phi^\tknn_{z_i} \left(\U^\tknn_{\Dval} \right) = \sum_{\zval \in \Dval} \phi^\tknn_{z_i} \left( \U^\tknn_{\zval} \right)
$. 
\end{theorem}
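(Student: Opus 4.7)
The plan is to exploit the fact that $\U^\tknn_{\zval}(S)$ depends on $S$ only through the multiset of labels of the points in $\NBtauxval(S)$: non-neighbors of $\xval$ are completely inert. The first thing I would establish is that if $d(x_i, \xval) > \tau$, then $\NBtauxval(S \cup \{z_i\}) = \NBtauxval(S)$ for every $S \subseteq D_{-z_i}$, so every marginal contribution vanishes and $\phi^\tknn_{z_i} = 0$; this already shows the Shapley value depends on $z_i$ only through whether it lies in the threshold ball. From now on I assume $d(x_i, \xval) \le \tau$, and write $m := |\NBtauxval(D_{-z_i})|$ and $\cA := \sum_{(x,y) \in \NBtauxval(D_{-z_i})} \ind[y = \yval]$.

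Next I would decompose any $S \subseteq D_{-z_i}$ as $S = S_N \sqcup S_F$ where $S_N$ consists of the neighbors and $S_F$ of the far points. Since the utility depends only on $S_N$, the Shapley formula factorizes: summing the usual weight $w(k) = \frac{k!(N-1-k)!}{N!}$ over $S_F \subseteq D_{-z_i} \setminus \NBtauxval(D_{-z_i})$ of each fixed size $f$ gives the coefficient $W(s) := \sum_{f=0}^{\cn - m} \binom{\cn - m}{f} w(s+f)$, attached to every $S_N$ with $|S_N| = s$. Writing $w(s+f)$ as a Beta integral and collapsing the binomial sum via $\sum_f \binom{\cn-m}{f}(t/(1-t))^f = (1-t)^{-(\cn-m)}$ yields the clean identity $W(s) = \frac{1}{(m+1)\binom{m}{s}}$. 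Grouping the remaining sum by $a = |S_N|$ and $b = |\{(x,y) \in S_N : y = \yval\}|$, the number of such $S_N$ is $\binom{\cA}{b}\binom{m-\cA}{a-b}$, and the marginal contribution $\Delta(a,b)$ has a two-line case analysis: for $y_i = \yval$ it equals $(a-b)/[a(a+1)]$ when $a \ge 1$ and $1 - \texttt{Constant}$ when $a=0$; for $y_i \ne \yval$ it equals $-b/[a(a+1)]$ and $-\texttt{Constant}$ respectively. The ratio $\binom{\cA}{b}\binom{m-\cA}{a-b}/\binom{m}{a}$ is the hypergeometric pmf, so summing $b$ against $\Delta$ collapses $a-b$ and $b$ to their conditional means $a(m-\cA)/m$ and $a\cA/m$. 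The leftover sum over $a$ reduces to $\sum_{a=1}^{m} \frac{1}{a+1} = H_{m+1} - 1$, giving explicit harmonic-number formulas for $\phi^\tknn_{z_i}$ depending only on $(\cn, \ctau, \cA)$, which is exactly the asserted $f$.

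For the runtime claim and the full-validation-set claim, I would precompute, for each $\zval \in \Dval$, the global counts $M := |\NBtauxval(D)|$ and $A := \sum_{(x,y) \in \NBtauxval(D)} \ind[y=\yval]$ in a single $O(N)$ pass; then for every $z_i$, the triple $\Ct(D_{-z_i})$ is obtained in $O(1)$ by subtracting whether $z_i$ is a (same-label) neighbor, after which $f$ is evaluated in $O(1)$. The full-validation-set formula is immediate from the linearity axiom (Theorem~\ref{thm:linearity}) applied to $\U^\tknn_{\Dval} = \sum_\zval \U^\tknn_{\zval}$. The main technical obstacle I expect is the clean closed-form evaluation of $W(s)$: without the Beta-integral trick (or an equivalent Vandermonde/hypergeometric identity), the double sum over $(s, f)$ would resist consolidation, and it is this identity that both makes the $(\cn, \ctau, \cA)$-only dependence emerge and drives the $O(N)$ runtime. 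A secondary care is the boundary term at $a=0$ (the $\texttt{Constant}$ case), which must be tracked separately since $\Delta(a,b)$ has a different form there; making sure its coefficient $W(0) = 1/(m+1)$ lines up with the rest is straightforward but easy to get wrong.
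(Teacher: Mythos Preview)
Your proposal is correct and takes a genuinely different route from the paper. The paper first derives a general \emph{semivalue} formula: it keeps the weight $w(k)$ abstract, decomposes the subset sum by the neighbor part $S_0$ with $|S_0|=j$, collapses $\sum_{|S_0|=j}\sum_{(x,y)\in S_0}\ind[y=\yval]$ via a Vandermonde identity to $\cA\binom{\ctau-2}{j-1}$, and only at the end plugs in the Shapley weight $w(k)=\binom{N-1}{k-1}^{-1}$. Its final expression involves the quantity
\[
A_2=\sum_{k=0}^{\cn}\frac{1}{k+1}\Bigl[1-\tfrac{\binom{\cn-k}{\ctau}}{\binom{\cn+1}{\ctau}}\Bigr]-1,
\]
which ostensibly depends on $\cn$. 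Your approach instead specializes to the Shapley weight \emph{before} summing over far points, and the Beta-integral identity $W(s)=\frac{1}{(m+1)\binom{m}{s}}$ is exactly the statement that the far (null) players can be projected out: the Shapley value of $z_i$ in the $N$-player game equals its Shapley value in the $(m{+}1)$-player game on the neighbor set. The hypergeometric-mean step then yields the closed form $\phi_{z_i}=\bigl(\tfrac{\ind[y_i=\yval]}{\ctau}-\tfrac{\cA}{\ctau(\ctau-1)}\bigr)(H_{\ctau}-1)+\tfrac{\ind[y_i=\yval]-\texttt{Constant}}{\ctau}$ (for $\ctau\ge 2$), which shows in particular that $A_2=H_{\ctau}-1$ and that the dependence on $\cn$ in the paper's formula is in fact spurious. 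What your route buys is a cleaner formula and a conceptual explanation (null-player projection); what the paper's route buys is a semivalue-level result that specializes to Banzhaf, Beta-Shapley, etc.\ with no extra work. Your handling of the $a=0$ boundary term, the $m=0$ edge case, the $O(N)$ runtime via precomputing $(M,A)$ on $D$ and doing $O(1)$ corrections per $z_i$, and the linearity argument for $\Dval$ all match the paper.
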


The main technical challenges of proving Theorem \ref{thm:TKNN-shapley} lies in showing 
% From Theorem \ref{thm:TKNN-shapley} we can see that 
$\Ct = ( \cn, \ctau, \cA )$, three simple counting queries on $D_{-z_i}$, are the key quantities for computing $\phi^\tknn_{z_i}$. 
In later part of the paper, we will often view $\phi^\tknn_{z_i}$ as a function of $\Ct$ and denote $\phi^\tknn_{z_i}\left[ \Ct \right] := f( \Ct )$ when we want to stress this dependency. 
% Moreover, we denote $\Ct(D) := \left( \cn(D), \ctau(D), \cA(D) \right)$ when we want to stress that these statistics are computed for the dataset $D$. 
The full version and the proof for TKNN-Shapley can be found in Appendix \ref{appendix:tknn-shapley-fullversion}. Here, we briefly discuss why TKNN-Shapley can achieve $O(N)$ runtime \emph{in total}. 

% \vspace-1mm}
\textbf{Efficient Computation of TKNN-Shapley.} As we can see, all of the quantities in $\Ct$ are simply counting queries on $D_{-z_i}$, and hence $\Ct(D_{-z_i})$ can be computed in $O(N)$ runtime for any $z_i \in D$. A more efficient way to compute $\Ct(D_{-z_i})$ for \emph{all} $z_i \in D$, however, is to first compute $\Ct(D)$ on the full dataset $D$, and we can easily show that each of $\Ct(D_{-z_i})$ can be computed from that in $O(1)$ runtime (see Appendix \ref{sec:efficient-tknnsv} for details). 
Hence, we can compute TKNN-Shapley $\phi^\tknn_{z_i}(\U^\tknn_{\zval})$ for \emph{all} $z_i \in D$ within an overall computational cost of $O(N)$. 

% \vspace-1mm}
\textbf{Comparison with KNN-Shapley.} TKNN-Shapley offers several advantages over the original KNN-Shapley. 
\textbf{(1) Non-recursive:} In contrast to the KNN-Shapley formula (Theorem \ref{thm:knn-shapley}), which is recursive, TKNN-Shapley has an explicit formula for computing the Shapley value of every point $z_i$. This non-recursive nature not only simplifies the implementation, but also makes it straightforward to incorporate techniques like subsampling. 
\textbf{(2) Computational efficiency:} TKNN-Shapley has $O(N)$ runtime in total, which is better than the $O(N \log N)$ runtime for KNN-Shapley.

\section{Differentially Private TKNN-Shapley}
\label{sec:dptknnshapley}

% \tianhao{The privacy analysis could be improved if we do not concern about the membership privacy; the non-collusion assumption will be removed; it becomes joint DP; $\cn$ won't be private statistics anymore so the sensitivity will be reduced to $\sqrt{2}$}

% Having established the TKNN-Shapley method as an alternative to the KNN-Shapley method that shares many advantages, we now turn our attention to further enhancing its privacy guarantees. In the following section, we introduce a differentially private version of TKNN-Shapley (DP-TKNN-Shapley). 

% \vspace-2mm}
Having established TKNN-Shapley as an alternative to KNN-Shapley which shares many advantages, our next step is to develop a new version of TKNN-Shapley that incorporates differential privacy. In this section, we introduce our proposed Differentially Private TKNN-Shapley (DP-TKNN-Shapley). 

% \vspace-2mm}
\subsection{Differentially private release of $\phi^\tknn_{z_i}(\U^\tknn_{ \zval })$} 
\label{sec:dp-tknn-shapley-single}

% \vspace-2mm}
As $\Ct = ( \cn, \ctau, \cA )$ are the only quantities that depend on the dataset $D_{-z_i}$ in (\ref{eq:tknn-shapley}), privatizing these quantities is sufficient for overall privacy preservation, as DP guarantee is preserved under any post-processing \cite{dwork2014algorithmic}. Since all of $\cn, \ctau$ and $\cA$ are just counting queries, the function of $\Ct(\cdot)$  has global sensitivity $\sqrt{3}$ in $\ell_2$-norm. 
This allows us to apply the commonly used Gaussian mechanism (Theorem \ref{thm:gaussian}) to release $\Ct$ in a differentially private way, i.e., 
$\DPCt(D_{-z_i}) := \round \left( \Ct(D_{-z_i}) + \N\left(0, \sigma^2 \ind_3 \right) \right)$
% % % % \vspace-1mm}
% % \begin{small}
% \begin{align*}
% \DPCt(D_{-z_i}) := \round \left( \Ct(D_{-z_i}) + \N\left(0, \sigma^2 \ind_3 \right) \right)
% \end{align*}
% % \end{small}
% % % % \vspace-1mm}
where $\DPCt$ is the privatized version of $\Ct$, and the function $\round$ rounds each entry of noisy value to the nearest integer. 
The differentially private version of TKNN-Shapley value $\widehat \phi^\tknn_{z_i}(D_{-z_i})$ is simply the value score computed by (\ref{eq:tknn-shapley}) but use the privatized quantities $\DPCt$. The privacy guarantee of releasing DP-TKNN-Shapley $\widehat \phi^\tknn_{z_i}(D_{-z_i})$ follows from the guarantee of Gaussian mechanism (see Appendix \ref{appendix:privacy-proof} for proof). 
\begin{theorem}
\label{thm:privacy-gaussian}
For any $\zval \in \Dval$, releasing 
$
\widehat \phi^\tknn_{z_i}( \U^\tknn_{ \zval } )
:= \phi^\tknn_{z_i}\left[ \DPCt(D_{-z_i}) \right]
$ to data owner $i$ is $(\eps, \delta)$-DP with $\sigma = \sqrt{3} \cdot \sqrt{\log (1.25 / \delta)} / \eps$.
\end{theorem}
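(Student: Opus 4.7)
The plan is to reduce the statement to a sensitivity computation for the three-dimensional counting query $\Ct$, and then invoke the Gaussian mechanism together with the post-processing property of differential privacy.

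First, I would observe (using Theorem~\ref{thm:TKNN-shapley}) that the value $\phi^\tknn_{z_i}(\U^\tknn_{\zval})$ depends on the dataset $D_{-z_i}$ \emph{only} through the three-dimensional statistic $\Ct(D_{-z_i}) = (\cn, \ctau, \cA)$. Thus the randomized map $D_{-z_i} \mapsto \widehat\phi^\tknn_{z_i}(\U^\tknn_{\zval})$ factors as a composition of (i) the randomized mechanism $D_{-z_i} \mapsto \Ct(D_{-z_i}) + \N(0,\sigma^2 \ind_3)$, (ii) the deterministic rounding map, and (iii) the deterministic evaluation $\mathbf{c}\mapsto \phi^\tknn_{z_i}[\mathbf{c}] = f(\mathbf{c})$. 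Steps (ii) and (iii) are data-independent post-processing, so by the standard post-processing property of DP it suffices to verify that step (i) alone is $(\eps,\delta)$-DP.

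Next, I would bound the $\ell_2$ global sensitivity of $\Ct$ as a function of $D_{-z_i}$. Fix two adjacent datasets $D \sim D'$ differing in a single record $z=(x,y)$. Each coordinate of $\Ct$ is a counting query:
\begin{itemize}
\item $\cn$ changes by exactly $1$;
\item $\ctau = 1 + |\NBtauxval(\cdot)|$ changes by $0$ or $1$, depending on whether $d(x,\xval)\le \tau$;
\item $\cA$ changes by $0$ or $1$, depending on whether additionally $y = \yval$.
\end{itemize}
Hence $\norm{\Ct(D)-\Ct(D')}_2 \le \sqrt{1^2+1^2+1^2} = \sqrt{3}$, i.e.\ $\Delta_2(\Ct) \le \sqrt{3}$.

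Finally, I would apply the Gaussian mechanism (Theorem~\ref{thm:gaussian}) to the vector-valued query $\Ct$: with noise scale $\sigma = \Delta_2(\Ct)\sqrt{\log(1.25/\delta)}/\eps = \sqrt{3}\cdot\sqrt{\log(1.25/\delta)}/\eps$, the release of $\Ct(D_{-z_i}) + \N(0,\sigma^2\ind_3)$ is $(\eps,\delta)$-DP. Combining with the post-processing argument from the first paragraph then yields the claim. There is no real obstacle here — the only thing to be careful about is that the argument correctly identifies $D_{-z_i}$ (and \emph{not} $D$) as the input to be protected, and that $\phi^\tknn_{z_i}$ genuinely has no other dependence on $D_{-z_i}$ beyond $\Ct$; both are already established by Theorem~\ref{thm:TKNN-shapley}.
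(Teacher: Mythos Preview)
Your proposal is correct and follows essentially the same approach as the paper: bound the $\ell_2$ sensitivity of the three-dimensional counting vector $\Ct$ by $\sqrt{3}$, apply the Gaussian mechanism, and conclude via post-processing. If anything, your write-up is more careful than the paper's, since you explicitly separate out the rounding step and flag that the protected input is $D_{-z_i}$.
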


% Our approach may strike the readers as being very simple, but we emphasize that the simplicity is generally appreciated in DP for practical deployment. In our humble opinion all problems in DP should admit simple solutions and we are glad to have found one for data valuation.

% \vspace-1mm}
While our approach may seem simple, we stress that simplicity is appreciated in DP as complex mechanisms pose challenges for correct implementation and auditing \cite{lyu2016understanding, gaboardi2020programming}. 

%We believe all DP problems should have simple solutions; we are glad to find one for data valuation.

% \textbf{Differentially private release of $\phi^\tknn_{z_i} \left(\U^\tknn_{\Dval} \right)$.} Recall that in TKNN-Shapley, the final data value score is $\phi^\tknn_{z_i} \left(\U^\tknn_{\Dval} \right) = \sum_{\zval \in \Dval} \phi^\tknn_{z_i}(D_{-z_i}; \zval)$. We have introduced the mechanism for differentially private releasing $\phi^\tknn_{z_i}(D_{-z_i}; \zval)$ for each $\zval \in \Dval$. 
% We use the composition theorem \cite{dwork2006our} of differential privacy to analyze the privacy guarantee for releasing $\phi^\tknn_{z_i}(D_{-z_i}; \zval)$.

% \vspace-2mm}
\subsection{Advantages of DP-TKNN-Shapley (Overview)}

% \vspace-1mm}
We give an overview (detailed in Appendix \ref{appendix:advantage-dp-tknnsv}) of several advantages of DP-KNN-Shapley here, including the efficiency, collusion resistance, and simplicity in incorporating subsampling. 
% More detailed explanations and derivations can be found in Appendix \ref{appendix:dp-tknn-detail}. 

% \textbf{Efficient computation of DP-TKNN-Shapley by reusing privatized statistics.} 
% % % \vspace-1mm}
\textbf{By reusing privatized statistics, $\widehat \phi^\tknn_{z_i}$ can be efficiently computed for all $z_i \in D$.} 
Recall that in practical data valuation scenarios, it is often desirable to compute the data value scores for \emph{all of} $z_i \in D$. As we detailed in Section \ref{sec:tknn-shapley}, for TKNN-Shapley, such a computation only requires $O(N)$ runtime in total if we first compute $\Ct(D)$ and subsequently $\Ct(D_{-z_i})$ for each $z_i \in D$. In a similar vein, we can efficiently compute the DP variant $\widehat \phi^\tknn_{z_i}$ for \emph{all} $z_i \in D$. To do so, we first calculate $\DPCt(D)$ and then, for each $z_i \in D$, we compute $\DPCt(D_{-z_i})$. It is important to note that when releasing $\widehat \phi^\tknn_{z_i}$ to individual $i$, the data point they hold, $z_i$, is not private to the individual themselves. 
Therefore, as long as $\DPCt(D)$ is privatized, $\DPCt(D_{-z_i})$ and hence $\widehat \phi^\tknn_{z_i}$ also satisfy the same privacy guarantee due to DP's post-processing property. 

% % % \vspace-1mm}
\textbf{By reusing privatized statistics, DP-TKNN-Shapley is collusion resistance.} 
In Section \ref{sec:dp-tknn-shapley-single}, we consider the single Shapley value $\phi^\tknn_{z_i}(D_{-z_i})$ as the function to privatize. 
% In Section \ref{sec:dp-tknn-shapley-single}, we consider the release of a single Shapley value $\phi^\tknn_{z_i}(D_{-z_i})$. 
However, when we consider the release of all Shapley values $(\widehat \phi^\tknn_{z_1}, \ldots, \widehat \phi^\tknn_{z_N})$ as a unified mechanism, one can show that such a mechanism satisfies \emph{joint differential privacy} (JDP) \cite{kearns2014mechanism} if we reuse the privatized statistic $\DPCt(D)$ for the release of all $\widehat \phi^\tknn_{z_i}$. The consequence of satisfying JDP is that our mechanism is resilient against collusion among groups of individuals without any privacy degradation. That is, even if an arbitrary group of individuals in $[N] \setminus i$ colludes (i.e., shares their respective $\widehat \phi^\tknn_{z_j}$ values within the group), the privacy of individual $i$ remains uncompromised. Our method also stands resilient in scenarios where a powerful adversary sends multiple data points and receives multiple value scores.

% Additionally, for the scenario where a power adversary who holds multiple data points and receive multiple value scores, the privacy guarantee for the rest of the data points does not degraded. 

% More detailed explanation and derivations can be found in Appendix \ref{appendix:dp-tknn-detail}. 

% This powerful resistance to collusion demonstrates another big advantage of reusing the privatized statistics $\DPCt(D)$. For a more detailed explanation and derivations, we direct the reader to the relevant section in Appendix \ref{appendix:dp-tknn-detail}. 

% % % \vspace-1mm}
\textbf{Incorporating Privacy Amplification by Subsampling.} In contrast to KNN-Shapley, \add{the non-recursive nature of} DP-TKNN-Shapley allows for the straightforward incorporation of subsampling. Besides the privacy guarantee, subsampling also significantly boosts the computational efficiency of DP-TKNN-Shapley. 

% Details can be found in Appendix \ref{appendix:dp-tknn-detail}. 

% \begin{remark}[\textbf{Reusing privatized statistics enables collusion resistance}]
% In Theorem \ref{thm:privacy-gaussian}, we consider $\widehat \phi^\tknn_{z_i}(D_{-z_i})$ for each of $z_i \in D$ as a differentially private mechanism. However, when we consider the release of all Shapley values $(\widehat \phi^\tknn_{z_1}, \ldots, \widehat \phi^\tknn_{z_N})$ as a single, unified mechanism, one can show that such a mechanism satisfies \emph{joint differential privacy} (JDP) \cite{kearns2014mechanism} if we reuse the privatized statistic $\DPCt(D)$ for the release of all $\widehat \phi^\tknn_{z_i}$. The consequence of satisfying JDP is that our mechanism is resilient against collusion among groups of individuals without any privacy degradation. Specifically, if an arbitrary group of individuals that does not include $i$ colludes (i.e., shares their respective $\widehat \phi^\tknn_{z_j}$ values within the group), the privacy of the membership of $z_i$ remains uncompromised. This powerful resistance to collusion demonstrates another big advantage of reusing the privatized statistics $\DPCt(D)$. For a more detailed explanation and derivations, we direct the reader to the relevant section in Appendix \ref{appendix:dp-tknn-detail}. 
% \end{remark}

% \vspace-2mm}
\subsection{Differentially private release of $\phi^\tknn_{z_i}(\U^\tknn_{ \Dval })$} 

% \vspace-2mm}
Recall that the TKNN-Shapley corresponding to the full validation set $\Dval$ is $\phi^\tknn_{z_i} \left(\U^\tknn_{\Dval} \right) = \sum_{\zval \in \Dval} \phi^\tknn_{z_i}(D_{-z_i}; \zval)$. 
% We have introduced the mechanism for differentially private releasing $\phi^\tknn_{z_i}(D_{-z_i}; \zval)$ for each $\zval \in \Dval$, and 
We can compute privatized $\phi^\tknn_{z_i} \left(\U^\tknn_{\Dval} \right)$ by simply releasing $\phi^\tknn_{z_i}(D_{-z_i}; \zval)$ for all $\zval \in \Dval$. To better keep track of the privacy cost, we use the current state-of-the-art privacy accounting technique based on the notion of the Privacy Loss Random Variable (PRV) \cite{dwork2016concentrated}. 
% Here, we provide more details about the background of the compostion of differential privacy and the privacy accounting techniques we use. 
We provide more details about the background of privacy accounting in Appendix \ref{appendix:privacy-accounting}. \add{We note that PRV-based privacy accountant computes the privacy loss numerically, and hence the final privacy loss has no closed-form expressions.}

\section{Numerical Experiments}
\label{sec:eval}

% \vspace-2mm}
In this section, we systematically evaluate the practical effectiveness of our proposed TKNN-Shapley method. Our evaluation aims to demonstrate the following points: 
\textbf{(1)} TKNN-Shapley offers \ul{improved runtime efficiency} compared with KNN-Shapley. 
\textbf{(2)} The differentially private version of TKNN-Shapley (DP-TKNN-Shapley) achieves \ul{significantly better privacy-utility tradeoff} compared to naively privatized KNN-Shapley in discerning data quality. 
\textbf{(3)} Non-private TKNN-Shapley maintains a \ul{comparable performance} to the original KNN-Shapley. 
These observations highlight TKNN-Shapley's potential for data valuation in real-life applications. 
Detailed settings for our experiments are provided in Appendix \ref{appendix:experiment}. 

\begin{remark}
Given that differential privacy offers a provable guarantee against any potential adversaries, our experiments prioritize evaluating the utility of DP-TKNN-Shapley rather than its privacy properties. However, for readers interested in understanding the efficacy of DP in safeguarding training data, we have included an evaluation of the proposed MI attack on DP-TKNN-Shapley in Appendix \ref{appendix:mia}, where it shows a significant drop in attack performance compared to non-DP version.
\end{remark}

% \textbf{(3)} The differentially private version of TKNN-Shapley (DP-TKNN-Shapley) achieves \ul{significantly better privacy-utility tradeoff} compared to naively privatized KNN-Shapley, highlighting its potential for data valuation in real-life applications. 
% Detailed settings for our experiments are provided in Appendix \ref{appendix:experiment}. 

% In this section, we systematically investigate the practical effectiveness of the proposed TKNN-Shapley method through three sets of experiments, which are
% frequently used in previous studies: mislabeled data detection, noisy data detection, and point removal experiment. 
% Our evaluation aims to illustrate the following points: 
% \textbf{(1) In non-private setting: } TKNN-Shapley maintains a similar performance and computational efficiency as the original KNN-Shapley (and outperforms the traditional retrain-based data valuation techniques); 
% \textbf{(2) In private setting:} TKNN-Shapley achieves significantly better performance than KNN-Shapley. 
% Detailed settings are provided in Appendix \ref{appendix:experiment}. 

% \vspace-2mm}
\subsection{Computational Efficiency}
\label{sec:eval-runtime}

% \vspace-1mm}

\begin{wrapfigure}{R}{0.4 \textwidth}
    \setlength\intextsep{0pt}
    \setlength\abovecaptionskip{0pt}
    \setlength\belowcaptionskip{-15pt}
    \vspace{-4mm}
    \centering
    \includegraphics[width=0.4 \textwidth]{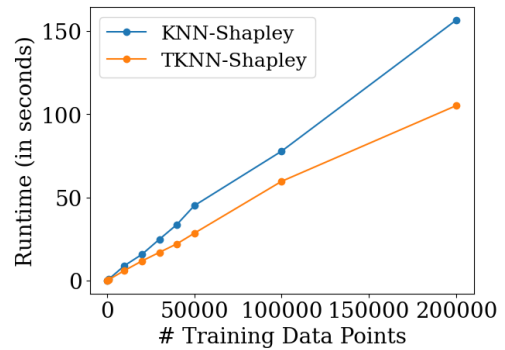}
    \caption{ Runtime comparison of TKNN-Shapley and KNN-Shapley across various training data sizes $N$. The plot shows the average runtime based on 5 independent runs. Experiments were conducted with an AMD 64-Core CPU Processor.
    }
    \label{fig:runtime}
\end{wrapfigure}
We evaluate the runtime efficiency of TKNN-Shapley in comparison to KNN-Shapley %using a synthetic dataset generated from a Gaussian distribution 
(see Appendix \ref{appendix:eval-runtime} for details as well as more experiments). We choose a range of training data sizes $N$, and compare the runtime of both methods at each $N$. As demonstrated in Figure \ref{fig:runtime}, TKNN-Shapley achieves better computational efficiency than KNN-Shapley across all training data sizes. In particular, TKNN-Shapley is around 30\% faster than KNN-Shapley for large $N$. 
This shows the computational advantage of TKNN-Shapley mentioned in Section \ref{sec:tknn-shapley}. 

\subsection{Discerning Data Quality}

% \vspace-1mm}
In this section, we evaluate the performance of both DP and non-DP version of TKNN-Shapley in discerning data quality. 
\textbf{\underline{Tasks:}} We evaluate the performance of TKNN-Shapley on two standard tasks: \underline{mislabeled data detection} and \underline{noisy data detection}, which are tasks that are often used for evaluating the performance of data valuation techniques in prior works \cite{kwon2022beta, wang2023data}. Since mislabeled/noisy data often negatively affect the model performance, it is desirable to assign low values to these data points. In the experiment of mislabeled (or noisy) data detection, we randomly choose 10\% of the data points and flip their labels (or add strong noise to their features). 
% For the task of mislabeled data detection, we randomly choose 10\% of the entire data points and change their labels to one of the other labels. For the task of noisy data detection, we randomly choose 10\% of the data points and add strong white noise to their features. 
% These tasks are frequently used in the prior works \cite{kwon2022beta, wang2023data} for evaluating the reliability of data value scores. 
% ability of data valuation methods to identify bad data points that could adversely impact the model's performance. 
\underline{\textbf{Datasets:}} We conduct our experiments on a diverse set of 13 datasets, where 11 of them have been used in previous data valuation studies \cite{kwon2022beta, wang2023data}. Additionally, we experiment on 2 NLP datasets (AG News \cite{wang2021n24news} and DBPedia \cite{auer2007dbpedia}) that have been rarely used in the past due to their high-dimensional nature and the significant computational resources required. 
\underline{\textbf{Settings \& Hyperparameters of TKNN-/KNN-Shapley:}} for both TKNN/KNN-Shapley, we use the popular cosine distance as the distance measure \cite{knntutorial}, which is always bounded in $[-1, +1]$. 
Throughout all experiments, we use $\tau = -0.5$ and $K=5$ for TKNN-/KNN-Shapley, respectively, as we found the two choices consistently work well across all datasets. We conduct ablation studies on the choice of hyperparameters in Appendix \ref{appendix:experiment}.

% \begin{wrapfigure}{R}{0.6 \textwidth}
%     \setlength\intextsep{0pt}
%     \setlength\abovecaptionskip{0pt}
%     \setlength\belowcaptionskip{-10pt}
%     % % \vspace-5mm}
%     \centering
%     \includegraphics[width=0.6 \textwidth]{image/2dplanes_tradeoff.pdf}
%     \caption{ 
%     Privacy-utility tradeoff of DP-TKNN-Shapley and DP-KNN-Shapley for (a) mislabeled and (b) noisy data detection task. The curves show the AUROC at different privacy budget $\eps$; the higher the curve is, the better the method is. The error bar is the standard deviation of AUROC across 5 independent runs. 
%     }
%     \label{fig:privacy-tradeoff}
% \end{wrapfigure}

% \vspace-2mm}
\subsubsection{Experiment for Private Setting}
\label{sec:eval-privacy}

% \vspace-2mm}
\underline{\textbf{Baselines:}} 
\textbf{(1) DP-KNN-Shapley without subsampling.} 
Recall from Section \ref{sec:difficulty-in-privatize-knn}, the original KNN-Shapley has a large global sensitivity. 
% it is not easy to bound global sensitivity of KNN-Shapley. Hence, we use the older version of KNN-Shapley from \cite{jia2019efficient} where the global sensitivity can be bounded (but still large), and 
Nevertheless, we can still use Gaussian mechanism to privatize it based on its global sensitivity bound. We call this approach as DP-KNN-Shapley. 
\textbf{(2) DP-KNN-Shapley with subsampling.} Recall from Section \ref{sec:difficulty-in-privatize-knn}, it is computationally expensive to incorporate subsampling techniques for DP-KNN-Shapley (detailed in Appendix \ref{appendix:difficulty-subsampling}). For instance, subsampled DP-KNN-Shapley with subsampling rate $q=0.01$ generally takes {$\mathbf{30\times}$\textbf{longer time} compared with non-subsampled counterpart. Nevertheless, we still compare with subsampled DP-KNN-Shapley for completeness. 
\add{These two baselines are detailed in Appendix \ref{appendix:baseline}.} 
We also note that an unpublished manuscript \cite{watson2022differentially} proposed a DP version of Data Shapley. However, their DP guarantee requires a hard-to-verify assumption of uniform stability (see Appendix \ref{appendix:relatedwork}). Hence, we do not compare with \cite{watson2022differentially}.

% \vspace-1mm}
\underline{\textbf{Results:}} We evaluate the privacy-utility tradeoff of DP-TKNN-Shapley. 
Specifically, for a fixed $\delta$, we examine the AUROC of mislabeled/noisy data detection tasks at different values of $\eps$, where $\eps$ is adjusted by changing the magnitude of Gaussian noise. 
In the experiment, we set the subsampling rate $q=0.01$ for TKNN-Shapley and subsampled KNN-Shapley. Table \ref{tb:privacy-tradeoff} shows the results on 3 datasets, and we defer the results for the rest of 10 datasets to Appendix \ref{appendix:eval-private}. As we can see, \ul{\textbf{DP-TKNN-Shapley shows a substantially better privacy-utility tradeoff compared to both DP-KNN-Shapley with/without subsampling.}} In particular, DP-TKNN-Shapley maintains a high AUROC even when $\eps \approx 0.1$. The poor performance of DP-KNN-Shapley is due to its relatively high global sensitivity.

\begin{table*}[t]
\setlength\intextsep{0pt}
\setlength\abovecaptionskip{2pt}
\setlength\belowcaptionskip{-10pt}
\centering
\resizebox{\textwidth}{!}{ 
\begin{tabular}{cccccccc} \toprule
\textbf{}                          & \textbf{}       & \multicolumn{3}{c}{\textbf{Mislabeled Data Detection}}                                                                                                                                                            & \multicolumn{3}{c}{\textbf{Noisy Data Detection}}                                                                                                                                                                 \\ \midrule
\textbf{Dataset}                   & \textbf{$\eps$} & \textbf{\begin{tabular}[c]{@{}c@{}}DP-TKNN-Shapley\\ (ours)\end{tabular}} & \textbf{\begin{tabular}[c]{@{}c@{}}DP-KNN-Shapley\\ (no subsampling)\end{tabular}} & \textbf{\begin{tabular}[c]{@{}c@{}}DP-KNN-Shapley\\ (with subsampling)\end{tabular}} & \textbf{\begin{tabular}[c]{@{}c@{}}DP-TKNN-Shapley\\ (ours)\end{tabular}} & \textbf{\begin{tabular}[c]{@{}c@{}}DP-KNN-Shapley\\ (no subsampling)\end{tabular}} & \textbf{\begin{tabular}[c]{@{}c@{}}DP-KNN-Shapley\\ (with subsampling)\end{tabular}} \\ \midrule
\multirow{3}{*}{\textbf{2dPlanes}} & \textbf{0.1}    & 0.883 (0.017)            & 0.49 (0.024)                                                                       & 0.733 (0.011)                                                                                     & 0.692 (0.014)            & 0.494 (0.023)                                                                      & 0.615 (0.01)                                                                                      \\
                                   & \textbf{0.5}    & 0.912 (0.009)            & 0.488 (0.022)                                                                      & 0.815 (0.006)                                                                                     & 0.706 (0.004)            & 0.494 (0.012)                                                                      & 0.66 (0.004)                                                                                      \\
                                   & \textbf{1}      & 0.913 (0.009)            & 0.504 (0.019)                                                                      & 0.821 (0.005)                                                                                     & 0.705 (0.007)            & 0.495 (0.011)                                                                      & 0.665 (0.004)                                                                                     \\ \midrule
\multirow{3}{*}{\textbf{Phoneme}}  & \textbf{0.1}    & 0.816 (0.011)            & 0.5 (0.014)                                                                        & 0.692 (0.011)                                                                                     & 0.648 (0.028)            & 0.475 (0.042)                                                                      & 0.566 (0.01)                                                                                      \\
                                   & \textbf{0.5}    & 0.826 (0.007)            & 0.497 (0.011)                                                                      & 0.738 (0.003)                                                                                     & 0.683 (0.014)            & 0.536 (0.033)                                                                      & 0.588 (0.004)                                                                                     \\ 
                                   & \textbf{1}      & 0.826 (0.005)            & 0.486 (0.01)                                                                       & 0.741 (0.002)                                                                                     & 0.685 (0.016)            & 0.494 (0.071)                                                                      & 0.59 (0.005)                                                                                      \\ \midrule
\multirow{3}{*}{\textbf{CPU}}      & \textbf{0.1}    & 0.932 (0.007)            & 0.49 (0.028)                                                                       & 0.881 (0.005)                                                                                     & 0.805 (0.037)            & 0.42 (0.074)                                                                       & 0.709 (0.011)                                                                                     \\
                                   & \textbf{0.5}    & 0.946 (0.004)            & 0.507 (0.029)                                                                      & 0.928 (0.002)                                                                                     & 0.838 (0.007)            & 0.472 (0.092)                                                                      & 0.746 (0.003)                                                                                     \\
                                   & \textbf{1}      & 0.948 (0.002)            & 0.512 (0.008)                                                                      & 0.931 (0.002)                                                                                     & 0.839 (0.003)            & 0.455 (0.079)                                                                      & 0.748 (0.002)                                                                                    
 \\ \bottomrule                                                                                  
\end{tabular}
}
\caption{ 
Privacy-utility tradeoff of DP-TKNN-Shapley and DP-KNN-Shapley for mislabeled/noisy data detection task on 3 datasets we use (see Appendix \ref{appendix:eval-private} for the results on the rest of 10 datasets). We set $\delta = 10^{-4}$ and show the AUROC at different privacy budgets $\eps$; the higher the AUROC is, the better the method is. We show the standard deviation of AUROC across 5 independent runs in (). 
}
\label{tb:privacy-tradeoff}
\end{table*}

% \vspace-2mm}
\subsubsection{Experiment for Non-private Setting} 

% \vspace-2mm}
\underline{\textbf{Baselines:}} Our main baseline for comparison is KNN-Shapley. For completeness, we also compare with other classic, yet much less efficient data valuation techniques, such as Data Shapley \cite{ghorbani2019data}, Data Banzhaf \cite{wang2023data}, and leave-one-out error (LOO) \cite{koh2017understanding}. Due to space constraints, we only show the results for the famous Data Shapley here and defer other methods' results to Appendix \ref{appendix:eval-nonprivate}.

% \vspace-1mm}
\underline{\textbf{Results:}} We use AUROC as the performance metric on mislabeled/noisy data detection tasks. 
% for different data valuation techniques. 
Due to space constraints, we defer the results for the task of noisy data detection to Appendix \ref{appendix:experiment}. 
Table \ref{tb:non-private} shows the results for the task of mislabeled data detection across all 13 datasets we use. 
As we can see, \ul{\textbf{TKNN-Shapley shows a comparable performance as KNN-Shapley}} across almost all datasets, demonstrating that TKNN-Shapley matches the effectiveness of KNN-Shapley in discerning data quality. 
Moreover, KNN-Shapley (and TKNN-Shapley) has a significantly better performance compared to Data Shapley, which is consistent with the observations in many existing studies \cite{jia2019scalability, pandl2021trustworthy}. 
The poor performance of Data Shapley is attributable to the sample inefficiency and stochasticity during retraining \cite{wang2023data}.

% \vspace-2mm}
\section{Conclusion}

\begin{wraptable}{r}{0.5 \textwidth}
\centering
\vspace{-3mm}
\setlength\intextsep{0pt}
\setlength\abovecaptionskip{2pt}
\setlength\belowcaptionskip{0pt}
\resizebox{0.5 \textwidth}{!}{
\begin{tabular}{cccc} \toprule
\textbf{Dataset}    & \textbf{TKNN-Shapley} & \textbf{KNN-Shapley} & \textbf{Data Shapley} \\ \midrule
\textbf{2dPlanes}   & 0.919 \red{(+0.006)}         & 0.913                & 0.552                 \\
\textbf{Phoneme}    & 0.826 \red{(-0.047)}        & 0.873                & 0.525                 \\
\textbf{CPU}        & 0.946 \red{(+0.014)}         & 0.932                & 0.489                 \\
\textbf{Fraud}      & 0.96 \red{(-0.007)}         & 0.967                & 0.488                 \\
\textbf{Creditcard} & 0.662 \red{(+0.016)}         & 0.646                & 0.517                 \\
% \textbf{Vehicle}    & 0.737 (-0.122)        & 0.859                & 0.552                 \\
\textbf{Apsfail}    & 0.958 \red{(+0.01)}          & 0.948                & 0.496                 \\
\textbf{Click}      & 0.572 \red{(+0.004)}         & 0.568                & 0.474                 \\
\textbf{Wind}       & 0.889 \red{(-0.007)}        & 0.896                & 0.469                 \\
\textbf{Pol}        & 0.871 \red{(-0.057)}        & 0.928                & 0.512                 \\
\textbf{MNIST}      & 0.962 \red{(-0.012)}        & 0.974                & -                     \\
\textbf{CIFAR10}    & 0.957 \red{(-0.034)}        & 0.991                & -                     \\
\textbf{AG News}    & 0.956 \red{(-0.015)}        & 0.971                & -                     \\
\textbf{DBPedia}    & 0.981 \red{(-0.01)}         & 0.991                & -                     \\ \hline
\end{tabular}
}
\caption{
AUROC scores of TKNN/KNN-Shapley and Data Shapley for mislabeled data detection tasks on various datasets. The higher the AUROC score is, the better the method is. In the column for TKNN-Shapley, we highlight its performance difference with KNN-Shapley in \red{()}. 
The results for Data Shapley on the last 4 datasets are omitted due to computational constraints. 
}
\label{tb:non-private}
\end{wraptable}
In this work, we uncover the inherent privacy risks associated with data value scores and introduce TKNN-Shapley, a privacy-friendly alternative to the widely-used KNN-Shapley. 
We demonstrate that TKNN-Shapley outperforms KNN-Shapley in terms of computational efficiency, and is as good as discerning data quality. 
Moreover, the privatized version of TKNN-Shapley significantly surpasses the naively privatized KNN-Shapley on tasks such as mislabeled data detection. 

% % % \vspace-1mm}
\textbf{Future Work.} 
\textbf{(1) Privacy risks of data revelation to central server:} in this work, we assume the existence of a trusted central server, and we do not consider the privacy risks associated with revealing individuals' data to the central server. Future work should consider integrating secure multi-party computation (MPC) techniques to mitigate this risk \cite{tian2022private}. MPC can allow the computation of KNN-Shapley without revealing individual data to the central server, thereby preserving privacy. 
We envision an end-to-end privacy-preserving data valuation framework that combines both DP and MPC. \textbf{(2) Impact of Randomization on Payment Fairness:} the incorporation of differential privacy necessarily adds a degree of randomness to the data value scores. This randomization could potentially impact the fairness of payments to data providers \cite{bagdasaryan2019differential}. The influence of this randomness, and its potential implications for payment fairness, remain areas for further investigation. 

% \section*{Acknowledgments}
% This work was supported in part by the National Science Foundation under grants CNS-2131938, CNS-1553437, CNS-1704105, the ARL’s Army Artificial Intelligence Innovation Institute (A2I2), the Office of Naval Research Young Investigator Award, the Army Research Office Young Investigator Prize, Schmidt DataX award, and Princeton E-ffiliates Award, \add{Amazon-Virginia Tech Initiative in Efficient and Robust Machine Learning,} and Princeton's Gordon Y. S. Wu Fellowship.

\newpage

\section*{Acknowledgments}
This work was supported in part by the National Science Foundation under grants CNS-2131938, CNS-1553437, CNS-1704105, CNS-2048091, IIS-2312794, IIS-2313130, OAC-2239622, the ARL’s Army Artificial Intelligence Innovation Institute (A2I2), the Office of Naval Research Young Investigator Award, the Army Research Office Young Investigator Prize, Schmidt DataX award, Princeton E-ffiliates Award, \add{Amazon-Virginia Tech Initiative in Efficient and Robust Machine Learning, the Commonwealth Cyber Initiative,} a Google PhD Fellowship, and a Princeton's Gordon Y. S. Wu Fellowship. We are grateful to anonymous reviewers at NeurIPS for their valuable feedback.

\newpage

\bibliographystyle{alpha}
\bibliography{ref}

%%%%%%%%%%%%%%%%%%%%%%%%%%%%%%%%%%%%%%%%%%%%%%%%%%%%%%%%%%%%

\newpage
\onecolumn

\appendix

\section{Extended Related Work}
\label{appendix:relatedwork}

\paragraph{KNN-Shapley \& Its Applications.} 
The Shapley value is known for being computationally expensive. Fortunately, \cite{jia2019efficient} found that computing the Data Shapley for K-Nearest Neighbors (KNN), one of the most classic yet still popular ML algorithms, is surprisingly easy and efficient. 
% Specifically, \cite{jia2019efficient} show that for unweighted KNN classifiers, one can compute the \emph{exact} Shapley value for \emph{all of the $N$} data points in $O(N \log N)$ runtime in total, instead of the complex, $O(2^N)$ runtime calculation as suggested by the original formula. 
To the best of our knowledge, unweighted KNN is the \emph{only} commonly-used ML model for which the exact Data Shapley can be efficiently computed, referred to as `KNN-Shapley'. Owing to its superior computational efficiency and effectiveness in distinguishing data quality, KNN-Shapley has become one of the most popular and practical data valuation techniques. In the realm of ML research, for instance, \cite{ghorbani2022data} extends KNN-Shapley to active learning, \cite{shim2021online} employs it in a continual learning setting, \cite{liang2020beyond,liang2021herald} utilize KNN-Shapley for removing confusing samples in NLP applications, and \cite{courtnage2021shapley} adopts KNN-Shapley for data valuation in semi-supervised learning. Furthermore, KNN-Shapley has also proven to be highly practical in real-life applications: \cite{pandl2021trustworthy} demonstrates that KNN-Shapley is the \emph{only} practical data valuation technique for valuing large amounts of healthcare data, and \cite{karlavs2022data} builds the first data debugging system for end-to-end ML pipelines based on KNN-Shapley.

\begin{remark}[KNN-Shapley vs General Data Shapley]
\add{In comparison to the work of general Data Shapley \cite{ghorbani2019data} (including Beta Shapley \cite{kwon2022beta} as well as Data Banzhaf \cite{wang2023data}), KNN-Shapley may have the following differences:
\textbf{(1)} KNN-Shapley focuses on KNN classifiers. As a result, the applicability of KNN-Shapley scores as a proxy of data points' value with respect to other ML models may not be straightforward. However, it is noteworthy that KNN is asymptotically Bayes optimal, implying that KNN-Shapley scores can be justified as a proxy for the data's value relative to the best possible model, i.e., the Bayes classifier, under certain asymptotic conditions.
\textbf{(2)} For high-dimensional data, such as images, KNN-Shapley requires a public model to first map the original data into data embeddings, and evaluates the value of these embeddings rather than the original data. While this is indeed a constraint in certain scenario, it is important to recognize that utilizing a publicly available foundation model to convert original data into embeddings, followed by the fine-tuning of the model's last layer, has become a common practice. Therefore, in many situations, it might be more desirable to evaluate the value of data embeddings instead of the original data.}
\end{remark}

\paragraph{Privacy and Data Valuation.} 
Few studies in the literature consider the privacy risks of data valuation. \cite{tian2022private} explores a scenario where a trusted server does not exist, and different data holders collaboratively compute each other's Shapley values without actually examining the data holder's data points, utilizing Multi-party Computation (MPC) techniques. The privacy risks addressed in \cite{tian2022private} are orthogonal to those in our paper, and we can combine both techniques for end-to-end privacy protection. Another orthogonal line of works \cite{liu2021dealer, fallah2022optimal, cummings2022optimal, kang2023fair} studies the scenario of a central platform collecting private data from privacy-aware agents and offering a differentially private statistic computed from the submitted data as a service in return. Agents consider the privacy costs and benefits of obtaining the statistic when deciding whether to participate and reveal their data truthfully. 
\add{\cite{luo2022feature} proposes a privacy attack on the Shapley value for feature attribution, while in this work we study the privacy risks when using the Shapley value for data valuation.}

The unpublished manuscript by \cite{watson2022differentially} is the work most closely related to ours. \cite{watson2022differentially} explores how to make the Shapley values of a data point to be differentially private against the rest of the dataset. 
However, instead of focusing on KNN-Shapley, they study the privatization of the less practical, retrain-based Data Shapley \cite{ghorbani2019data, jia2019towards}. Furthermore, their algorithm is highly restrictive in that the differential privacy guarantee relies on the "uniform stability" assumption, which is not verifiable for modern learning algorithms such as neural networks. Moreover, while \cite{watson2022differentially} argues the "uniform stability" assumption holds for Logistic regression on bounded data domain, it is unclear whether the uniform stability assumption still holds when the Logistic regression is trained by SGD (which may involve training stochasticity and early stopping). 
In addition, \cite{watson2022differentially} does not release the implementation.
% and only publishes limited experiment details. 
Hence, in our experiment, we do not compare with it. 

\begin{remark}[Brief background for the privacy risks in releasing aggregated statistics]
\add{Since 1998, researchers have observed that a lot of seemingly benevolent aggregate statistics of a dataset can be used to reveal sensitive information about individuals \cite{samarati1998protecting}. A classic example is Netflix Prize fiasco, where the researchers show that an anonymized dataset can leak many sensitive information about individuals \cite{narayanan2006break}. Dinur and Nissim \cite{dinur2003revealing} proved that “revealing too many statistics too accurately leads to data privacy breach”. A great amount of discussion and practical realization of these privacy attacks on aggregated statistics can be found in \cite{dwork2017exposed}. In 2020, the US Census Bureau used these privacy attacks to justify its use of differential privacy.}

\add{KNN-Shapley score for an individual is one kind of aggregated statistic that depends on the rest of the dataset. Hence, KNN-Shapley score intrinsically reveals private information about the rest of the dataset (where we use membership inference attack as a concrete example in our paper). In addition, when users collude, their KNN-shapley values can be combined to make joint inferences about the rest of the dataset.}
\end{remark}

\newpage

\section{Details about KNN-Shapley, its Privacy Risks \& Challenges of Privatization}
\label{appendix:background-more}

\subsection{Full version of KNN-Shapley}
\label{appendix:knnsv-full-version}

KNN-Shapley was originally proposed in \cite{jia2019efficient} and was later refined in \cite{wang2023noteknn}. Specifically, \cite{wang2023noteknn} considers the KNN's utility function formula (\ref{eq:new-util-func}) we present in the main text. 
Here is the main result of \cite{wang2023noteknn}:

\begin{theorem}[KNN-Shapley \cite{wang2023noteknn}]
\label{thm:knn-shapley-full}
% Consider the utility function in (\ref{eq:new-util-func}). Given the test data point $(x^{(\test)}, y^{(\test)})$, assume that the input dataset $D = \{z_i = (x_i, y_i)\}_{i=1}^N$ is sorted according to $\norm{x_i - x^{(\test)}}$ in ascending order. Then, the Shapley value of each training point $\phi^\knn_{z_i}(D_{-z_i})$ can be calculated recursively as follows: 
Consider the utility function in (\ref{eq:new-util-func}). 
Given a validation data point $\zval = (x^{(\test)}, y^{(\test)})$ and a distance metric $d(\cdot, \cdot)$, if we sort the training set $D = \{z_i = (x_i, y_i)\}_{i=1}^N$ according to $d(x_i, x^{(\test)})$ in ascending order, then the Shapley value of each data point $\phi^\knn_{z_i}$ corresponding to utility function $\U^\knn_{\zval}$ can be computed recursively as follows:
\begin{small}
\begin{align*}
    \phi^\knn_{z_N}  &= \frac{\ind[N \ge 2]}{N} \left( \ind[y_N = y^{(\test)}] - \frac{ \sum_{i=1}^{N-1} \ind[y_i = y^{(\test)}] }{N-1} \right) \left( \sum_{j=1}^{ \min(K, N) - 1 } \frac{1}{j+1} \right) + \frac{1}{N} \left( \ind[y_N = y^{(\test)}] - \frac{1}{C} \right) \\
    \phi^\knn_{z_i}  &= \phi^\knn_{z_{i+1}} + 
    \frac{ \ind[y_i = y^{(\test)}] - \ind[y_{i+1} = y^{(\test)}] }{N-1} 
    \left[
    \sum_{j=1}^{\min(K, N)} \frac{1}{j} 
    + \frac{\ind[N \ge K]}{K} \left( \frac{ \min(i, K) \cdot (N-1) }{i} - K
    \right)
    \right]
\end{align*}
\end{small}
where $C$ denotes the number of classes for the classification task. 
\end{theorem}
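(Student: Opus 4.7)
The plan is to derive both the base case $\phi^\knn_{z_N}$ and the recursion $\phi^\knn_{z_i} = \phi^\knn_{z_{i+1}} + (\cdot)$ by direct combinatorial manipulation of the Shapley formula (\ref{eq:shapley-formula}), taking full advantage of the fact that the training data is sorted by distance to $\xval$.

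For the recursion, I would use a pairing argument. Splitting each Shapley sum according to whether the ``other'' point is present in the subset, and noting that pairs $(T, T \cup \{z_{i+1}\})$ appearing in $\phi^\knn_{z_i}$ correspond to pairs $(T, T \cup \{z_i\})$ in $\phi^\knn_{z_{i+1}}$, the contributions telescope to give
\[
\phi^\knn_{z_i} - \phi^\knn_{z_{i+1}} = \frac{1}{N}\sum_{T \subseteq D \setminus \{z_i, z_{i+1}\}} \left[\binom{N-1}{|T|}^{-1} + \binom{N-1}{|T|+1}^{-1}\right] \bigl[\U^\knn_{\zval}(T \cup \{z_i\}) - \U^\knn_{\zval}(T \cup \{z_{i+1}\})\bigr].
\]
The key observation is that $z_i$ and $z_{i+1}$ are adjacent in the sorted order, so the inner difference factors as $\alpha(|T|, m) \cdot [\ind[y_i = \yval] - \ind[y_{i+1} = \yval]]$, where $m := |T \cap \{z_1, \ldots, z_{i-1}\}|$ counts the elements of $T$ strictly closer to $\xval$ than $z_i$. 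A short case analysis on $(|T|, m)$ gives $\alpha = 0$ if $m \geq K$ (both points are crowded out of the top-$K$), $\alpha = 1/(|T|+1)$ if $|T|+1 \leq K$, and $\alpha = 1/K$ if $|T|+1 > K$ and $m < K$.

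It then remains to group the resulting sum by $(|T|, m)$. For fixed $|T| = t$ and $m$, the number of admissible $T$ is $\binom{i-1}{m}\binom{N-i-1}{t-m}$. In the first regime ($t \leq K-1$) the constraint on $m$ is vacuous, Vandermonde's identity collapses the $m$-sum to $\binom{N-2}{t}$, and the elementary identity $\binom{N-1}{t}^{-1}\binom{N-2}{t} + \binom{N-1}{t+1}^{-1}\binom{N-2}{t} = N/(N-1)$ produces the clean harmonic term $\frac{1}{N-1}\sum_{j=1}^{\min(K,N)}\frac{1}{j}$. In the second regime ($t \geq K$), the $m$-sum is truncated at $\min(K-1, i-1)$; this is what eventually yields the correction $\frac{\ind[N \geq K]}{K}\left(\frac{\min(i,K)(N-1)}{i} - K\right)$, with the factor $\min(i, K)$ coming precisely from the upper limit on $m$ distinguishing the cases $i \leq K$ and $i > K$.

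For the base case, I would directly compute $\phi^\knn_{z_N}$ by noting that, as the farthest point, $z_N$'s marginal contribution vanishes on any $S \subseteq D_{-z_N}$ with $|S| \geq K$. For $|S| = 0$, the marginal is $\ind[y_N = \yval] - 1/C$, producing the isolated second term. For $1 \leq |S| = k-1 \leq K-1$, symmetry replaces $\sum_{z \in S}\ind[y_z = \yval]$ in the marginal by its average $\frac{k-1}{N-1}\sum_{j=1}^{N-1}\ind[y_j = \yval]$, and summing in $k$ with the Shapley weights telescopes into $\sum_{j=1}^{\min(K,N)-1} 1/(j+1)$. The main obstacle will be the careful bookkeeping in the second regime of the recursion, where the truncation $m \leq K-1$ and the restriction $m \leq i-1$ interact to produce the asymmetric correction; verifying that the surviving double sum collapses into exactly $\min(i,K)(N-1)/i - K$ (rather than some other combination) requires casework on $i \leq K$ versus $i > K$ together with a hockey-stick-style identity for the tail sums of $\binom{N-i-1}{t-m}$ against $[\binom{N-1}{t}^{-1} + \binom{N-1}{t+1}^{-1}]/K$.
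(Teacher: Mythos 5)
The paper does not actually prove Theorem~\ref{thm:knn-shapley-full}: it is imported verbatim from \cite{wang2023noteknn} (and the related recursion in Theorem~\ref{thm:from-jia} from \cite{jia2019efficient}), so there is no in-paper proof to compare against. Judged on its own merits, your plan follows the same route as those cited derivations, and its skeleton is sound. Your pairing identity is correct: one checks that $\frac{1}{N}\bigl[\binom{N-1}{t}^{-1}+\binom{N-1}{t+1}^{-1}\bigr]=\frac{1}{N-1}\binom{N-2}{t}^{-1}$, so your formula is exactly the standard two-player difference formula for the Shapley value. The reduction of $\U^\knn_{\zval}(T\cup\{z_i\})-\U^\knn_{\zval}(T\cup\{z_{i+1}\})$ to $\alpha(|T|,m)\cdot(\ind[y_i=\yval]-\ind[y_{i+1}=\yval])$ with $\alpha=\ind[m<\min(K,|T|+1)]/\min(K,|T|+1)$ is right, because $z_i$ and $z_{i+1}$ occupy the same rank $m+1$ in their respective sets; your three cases are this statement unpacked. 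The base case also checks out exactly: the marginal of $z_N$ on a size-$(k-1)$ subset with $2\le k\le\min(K,N)$ averages to $\frac1k\bigl(\ind[y_N=\yval]-\frac{1}{N-1}\sum_{j<N}\ind[y_j=\yval]\bigr)$, which reproduces the stated harmonic sum (though calling this ``telescoping'' is a misnomer; the binomial weights simply cancel against the subset count).

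The one genuine gap is the second regime. You correctly reduce $\phi^\knn_{z_i}-\phi^\knn_{z_{i+1}}$ there to
\begin{align*}
\frac{\ind[y_i=\yval]-\ind[y_{i+1}=\yval]}{K(N-1)}\sum_{t= K}^{N-2}\binom{N-2}{t}^{-1}\sum_{m=0}^{\min(K,i)-1}\binom{i-1}{m}\binom{N-i-1}{t-m},
\end{align*}
but the claim that this double sum collapses to $\frac{\min(i,K)(N-1)}{i}-K$ is asserted, not derived; this identity is the entire technical content of the recursion and needs an actual proof (spot checks, e.g.\ $N=5,K=2,i=3$, confirm it is true). Separately, a careful execution of your own plan would reveal that regime one actually yields $\frac{1}{N-1}\sum_{j=1}^{\min(K,N-1)}\frac1j$ rather than $\frac{1}{N-1}\sum_{j=1}^{\min(K,N)}\frac1j$; the two agree for $K\le N$ once the $\ind[N\ge K]$ correction is accounted for, but for $K>N$ the stated recursion overshoots by $\frac{1}{N(N-1)}$ per step (e.g.\ $N=2$, $K=3$ gives $\phi_{z_1}-\phi_{z_2}=\ind[y_1=\yval]-\ind[y_2=\yval]$ directly, versus $\tfrac32$ times that from the stated formula). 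So your derivation, completed carefully, would actually be slightly more accurate than the quoted statement in that degenerate regime; you should state $K\le N$ as a hypothesis or flag the discrepancy.
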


\subsubsection{Older version of KNN-Shapley from \cite{jia2019efficient}}

Prior to the current formulation of KNN-Shapley introduced by \cite{wang2023noteknn}, \cite{jia2019efficient} proposed an older version of the algorithm. There is only a small distinction between the updated version by \cite{wang2023noteknn} and the older version from \cite{jia2019efficient}. Specifically, the difference between the versions proposed by \cite{wang2023noteknn} and \cite{jia2019efficient} lies in the utility function considered when computing the Shapley value. \cite{wang2023noteknn} use the utility function (\ref{eq:new-util-func}) as presented in the main text, while the utility function used in \cite{jia2019efficient} is slightly less interpretable: 
\begin{align}
\U^\knnold_{\zval}(S) &:= \frac{1}{K} \sum_{j=1}^{\min(K, |S|)} \ind[y_{ \pi^{(S)}(j; \xval) } = y^{(\test)}]
\label{eq:util-func-RJ}
\end{align}
That is, (\ref{eq:new-util-func}) in the maintext divides the number of correct predictions by $\min(K, |S|)$, which can be interpreted as the likelihood of the soft-label KNN classifier predicting the correct label $\yval$ for $\xval$. On the other hand, the function above (\ref{eq:util-func-RJ}) divides the number of correct predictions by $K$, which is less interpretable when $|S| < K$. 

% That is, (\ref{eq:new-util-func}) divides the number of correct predictions by $\min(K, |S|)$, which can be justified as the likelihood of the soft-label KNN classifier in predicting the correct label $\yval$ for $\xval$. However, (\ref{eq:util-func-RJ}) divides the number of correct predictions by $K$, which no longer has such justification when $|S| < K$. 

% Hence, the KNN-Shapley formula from \cite{wang2023noteknn} is considered a more interpretable version of KNN-Shapley. 

% where the difference between (\ref{eq:util-func-RJ}) and (\ref{eq:new-util-func}) is that (\ref{eq:new-util-func}) divides the number of correct predictions by $\min(K, |S|)$, which can be justified as the likelihood of a soft-label KNN classifier in predicting the correct label $\yval$ for $\xval$. On the other hand, (\ref{eq:util-func-RJ}) divides the number of correct predictions by $K$, which no longer has such justification when $|S| < K$. 

Nevertheless, the main result in \cite{jia2019efficient} shows the following:
\begin{theorem}[Older Version of KNN-Shapley from \cite{jia2019efficient}\footnote{We state a more generalized version which does not require $N \ge K$.}]
\label{thm:from-jia}
Consider the utility function in (\ref{eq:util-func-RJ}). 
Given a validation data point $\zval = (x^{(\test)}, y^{(\test)})$ and a distance metric $d(\cdot, \cdot)$, if we sort the training set $D = \{z_i = (x_i, y_i)\}_{i=1}^N$ according to $d(x_i, x^{(\test)})$ in ascending order, then the Shapley value of each data point $\phi^\knnold_{z_i}$ corresponding to utility function $\U^\knnold_{\zval}$ can be computed recursively as follows:
% Given the test data point $(x^{(\test)}, y^{(\test)})$, assume that the input dataset $D = \{z_i = (x_i, y_i)\}_{i=1}^N$ is sorted according to $\norm{x_i - x^{(\test)}}$ in ascending order. Then, the Shapley value of each training point $\phi^\knnold_{z_i}(D_{-z_i})$ can be calculated recursively as follows: 
\begin{align*}
    \phi_{z_N}^\knnold &= \frac{ \ind[y_N = y_\test] }{ \max(K, N) } \\
    \phi_{z_i}^\knnold &= \phi_{z_{i+1}}^\knnold + \frac{ \ind[y_i = y_\test] - \ind[y_{i+1} = y_\test] }{K} \frac{\min(K, i)}{i}
\end{align*}
\end{theorem}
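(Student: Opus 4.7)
The plan is to prove the recursion by passing to pairwise Shapley \emph{differences} $\phi_{z_i}^\knnold-\phi_{z_{i+1}}^\knnold$ of adjacent points in the distance ordering. Starting from the standard Shapley formula, I would split each of the two sums defining $\phi_{z_i}^\knnold$ and $\phi_{z_{i+1}}^\knnold$ according to whether the ``other'' index is present in the subset $S$, and then combine using the algebraic identity $\tfrac{1}{N\binom{N-1}{k}}+\tfrac{1}{N\binom{N-1}{k+1}}=\tfrac{1}{(N-1)\binom{N-2}{k}}$. This yields the clean identity
$$\phi_{z_i}^\knnold-\phi_{z_{i+1}}^\knnold \;=\; \frac{1}{N-1}\sum_{S\subseteq D\setminus\{z_i,z_{i+1}\}}\frac{1}{\binom{N-2}{|S|}}\bigl[\U^\knnold_{\zval}(S\cup\{z_i\})-\U^\knnold_{\zval}(S\cup\{z_{i+1}\})\bigr],$$
reducing the problem to analyzing the marginal gap of $z_i$ against $z_{i+1}$ against a common backdrop $S$.

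Next I would analyze that gap using adjacency in the sorted order. Writing $S_L:=S\cap\{z_1,\ldots,z_{i-1}\}$ and $S_R:=S\cap\{z_{i+2},\ldots,z_N\}$, observe that if $|S_L|\ge K$ then the top-$K$ of both $S\cup\{z_i\}$ and $S\cup\{z_{i+1}\}$ is drawn entirely from $S_L$, so the gap is $0$; whereas if $|S_L|\le K-1$ then the top-$K$ sets differ only by swapping $z_i$ for $z_{i+1}$ (all other positions, whether drawn from $S_L$ or from the nearest elements of $S_R$, coincide). Hence
$$\U^\knnold_{\zval}(S\cup\{z_i\})-\U^\knnold_{\zval}(S\cup\{z_{i+1}\}) \;=\; \frac{\ind[y_i=y^{(\test)}]-\ind[y_{i+1}=y^{(\test)}]}{K}\,\ind[|S_L|\le K-1].$$

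The main obstacle is the combinatorial sum that remains. Parametrizing $S$ by $\ell=|S_L|$ and $m=|S_R|$, it becomes $\sum_{\ell=0}^{\min(K-1,i-1)}\binom{i-1}{\ell}\sum_{m=0}^{N-i-1}\binom{N-i-1}{m}/\binom{N-2}{m+\ell}$. I would evaluate the inner $m$-sum via the Beta representation $1/\binom{N-2}{m+\ell}=(N-1)\int_0^1 x^{m+\ell}(1-x)^{N-2-m-\ell}dx$; pulling out $x^\ell(1-x)^{N-2-\ell}$ and applying the binomial theorem to $\sum_m\binom{N-i-1}{m}(x/(1-x))^m=(1-x)^{-(N-i-1)}$ collapses the integral to $(N-1)\int_0^1 x^\ell(1-x)^{i-1-\ell}dx = (N-1)/[i\binom{i-1}{\ell}]$. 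The $\binom{i-1}{\ell}$ then cancels against the outer binomial, and the $\ell$-sum has $\min(K,i)$ terms, giving total value $(N-1)\min(K,i)/i$. Substituting back produces exactly $\phi_{z_i}^\knnold - \phi_{z_{i+1}}^\knnold = \tfrac{\ind[y_i=y^{(\test)}]-\ind[y_{i+1}=y^{(\test)}]}{K}\cdot\tfrac{\min(K,i)}{i}$.

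For the base case $\phi_{z_N}^\knnold$, I would compute directly: since $z_N$ is the farthest, $z_N$ enters the top-$K$ of $S\cup\{z_N\}$ iff $|S|\le K-1$, in which case $\U^\knnold_{\zval}(S\cup\{z_N\})-\U^\knnold_{\zval}(S)=\ind[y_N=y^{(\test)}]/K$, and otherwise the marginal is $0$. Plugging into the Shapley formula, the weights $1/(N\binom{N-1}{k})$ cancel the counts $\binom{N-1}{k}$ of subsets of each size, leaving $\phi_{z_N}^\knnold=\ind[y_N=y^{(\test)}]\cdot\min(K,N)/(NK)=\ind[y_N=y^{(\test)}]/\max(K,N)$, as claimed.
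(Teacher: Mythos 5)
Your proof is correct. The paper itself does not supply a proof of this theorem --- it is imported from \cite{jia2019efficient} --- so there is nothing to compare line by line, but your argument is complete and sound: the pairwise-difference identity for adjacent points, the observation that the marginal gap is $\frac{\ind[y_i=y^{(\test)}]-\ind[y_{i+1}=y^{(\test)}]}{K}\cdot\ind[|S_L|\le K-1]$, and the direct base-case computation (using that the empty sum in (\ref{eq:util-func-RJ}) makes $\U^\knnold_{\zval}(\emptyset)=0$, which is exactly what makes $\min(K,N)/(NK)=1/\max(K,N)$ come out with no extra constant) are all the right ingredients, and I verified the coefficient identity and the final count $\min(K-1,i-1)+1=\min(K,i)$. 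This is essentially the same route as the original derivation; the one place you diverge in technique is evaluating the double combinatorial sum via the Beta-integral representation of $\binom{N-2}{m+\ell}^{-1}$ rather than by elementary binomial identities --- a clean choice that collapses the inner $m$-sum in one step and makes the cancellation of $\binom{i-1}{\ell}$ transparent.
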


As we can see, both versions of KNN-Shapley have recursive forms. Since the utility functions they consider are very similar to each other except for the normalization term for small subsets, the two versions of KNN-Shapley have very close value scores in practice and as shown in the experiments of \cite{wang2023noteknn}, the two versions of KNN-Shapley perform very similarly.

\begin{remark}[\textbf{The use of older version of KNN-Shapley for DP-related experiments}]
In the experiments, we use the more advanced version of the KNN-Shapley from \cite{wang2023noteknn} except for the DP-related experiments. 
This is because the global sensitivity of KNN-Shapley is difficult to derive, and we can only derive the global sensitivity of the older version of KNN-Shapley from \cite{jia2019efficient} (as we will show in Appendix \ref{appendix:challenge-in-dp}). 
Hence, we can only use the older version of KNN-Shapley from \cite{jia2019efficient} as the baseline in DP-related experiments. 
It is important to note that for non-DP experiments, the performance of KNN-Shapley and its older variant is very close to each other, and \textbf{which version of the KNN-Shapley we use for non-DP experiments does not affect the final conclusion}. 
\end{remark}

\subsection{Settings \& Additional Experiments for Privacy Risks for KNN-Shapley}
\label{appendix:knnsv-privrisk}

\subsubsection{Experiment for the changes of data value after eliminating nearby points}
\label{appendix:knnsv-privrisk-score}

We first investigate the impact of removing a data point on the KNN-Shapley score of another data point that is close to the removed one. 
% Specifically, we first compute the KNN-Shapley score for a selected data point in the original dataset, and then repeat the process after removing one of its nearby data points. 
Specifically, we calculate the KNN-Shapley score for a chosen data point in the dataset, and then repeat the process after eliminating one of its nearby points from the dataset. 
% The results in Figure \ref{fig:leakpriv} (on CPU dataset \cite{cpudataset}) show a substantial difference in the KNN-Shapley score of the investigated data point depending on whether the nearby data point has been removed. 
% Detailed experiment settings as well as the results on additional datasets can be found in Appendix \ref{appendix:knnsv-privrisk-score}.

% In Section \ref{sec:difficulty-in-privatize-knn}, we demonstrated the privacy risks inherent in revealing data value scores using a specific example. This example investigated how removing a data point impacts the KNN-Shapley score of another data point in proximity to the removed one. In this section, we provide detailed settings and additional experimental results to further substantiate our findings.

\paragraph{Settings.}
We use commonly used datasets in the past literature, and the details for data preprocessing can be found in Appendix \ref{appendix:datasets}. We calculate the KNN-Shapley score for a randomly selected data point in the dataset, and then repeat the computation of the KNN-Shapley score after we eliminate its nearest neighbor from the dataset. 

\paragraph{Results.}
The results for a variety of other datasets are shown in Figure \ref{fig:priv-leak-JW}. We can see a significant difference in the KNN-Shapley score of the investigated data point depending on whether the nearest data point has been removed. We remark that how the data value change (increases or decreases) when the nearest data point is excluded depends on the label of the nearest data point as well as the validation data being used. 

\begin{figure}[h]
    \centering
    \includegraphics[width=\textwidth]{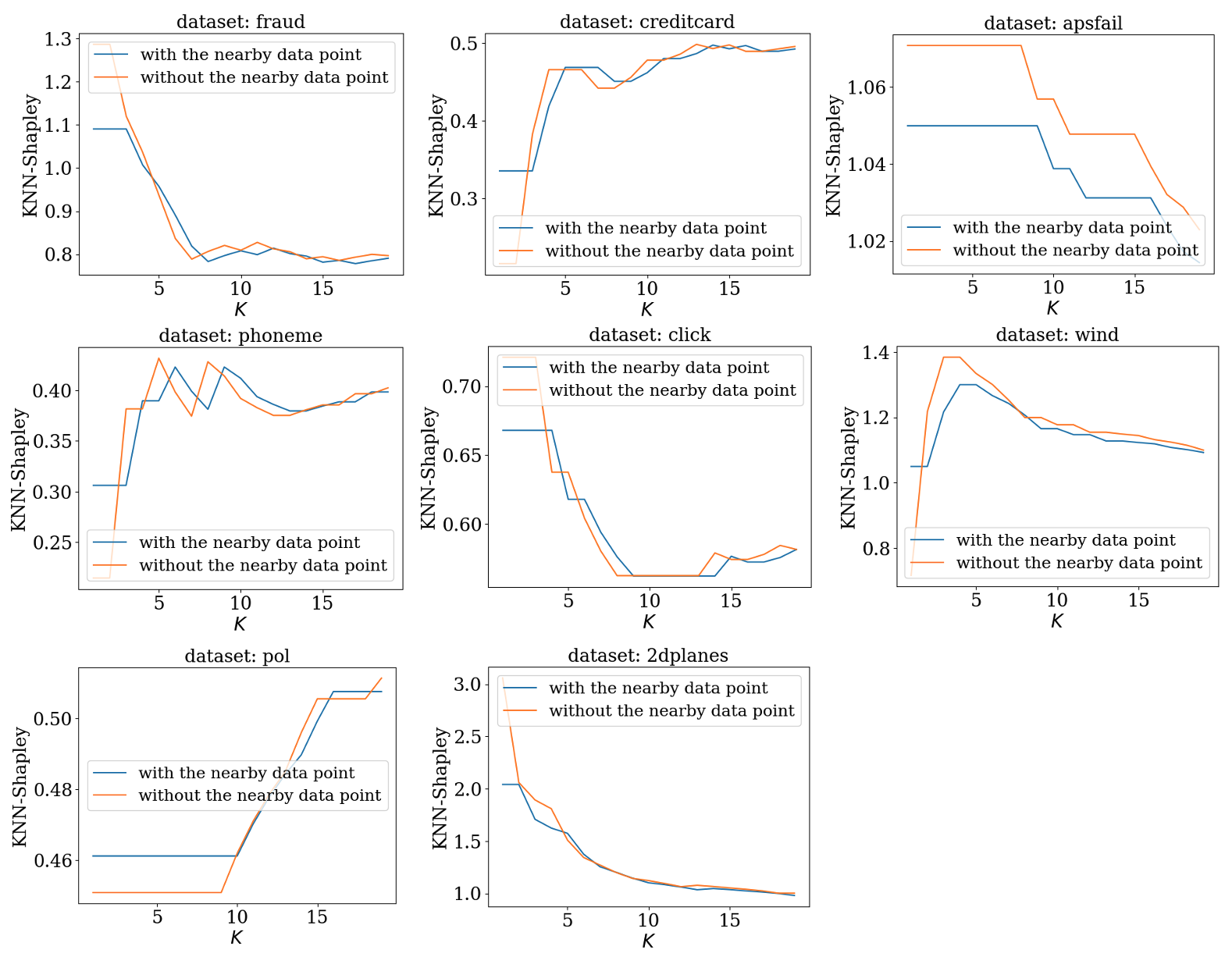}
    \caption{ Curves of KNN-Shapley of a fixed data point as a function of $K$ (the hyperparameter for KNN), with and without a particular nearby data point in the training set.
    }
    \label{fig:priv-leak-JW}
\end{figure}

For completeness, we also plot the same figures but for the older version of KNN-Shapley from \cite{jia2019efficient}, which shows similar results. 

\begin{figure}[h]
    \centering
    \includegraphics[width=\textwidth]{image/knnrj_privleak3.pdf}
    \caption{ 
    Curves of KNN-Shapley (old version from \cite{jia2019efficient}) of a fixed data point as a function of $K$ (the hyperparameter for KNN), with and without a particular nearby data point in the training set.
    }
    \label{fig:priv-leak-RJ}
\end{figure}

\subsubsection{An Instantiation of Membership Inference Attack via Data Value Scores}
\label{appendix:mia}

\add{
The observation in Appendix \ref{appendix:knnsv-privrisk-score}, demonstrates the potential for KNN-Shapley scores to leak membership information of other data points in the dataset.
To further demonstrate such privacy risk, in this section we show an example of a privacy attack where an adversary could infer the presence/absence of certain data points in the dataset based on the variations in the KNN-Shapley scores, analogous to the classic membership inference attack on ML model \cite{shokri2017membership}.}
% Analogous to the classic membership inference attack on ML model \cite{shokri2017membership}, in this section we show an example of privacy attack where an adversary could infer the presence/absence of certain data points in the dataset based on the variations in the KNN-Shapley scores. 
\add{The attack result further highlights the need for privacy protection when applying data valuation techniques.} 
%, as privacy risks may arise from the inherent dependency of data value scores on the rest of the dataset. 

\begin{remark}[Motivation of Membership Inference Attack]
\add{Membership inference attack, i.e., confirming one's membership in a database, could pose significant privacy risks. For example, if a medical database is known to contain data from patients with specific conditions, this could disclose members' health status. 
We choose MI attack as our main example of the privacy risk as it has long been recognized as a fundamental privacy challenge across various domains, and such attacks have been extended well into the field of machine learning.}
\end{remark}

\begin{remark}
We stress that our goal here is the proof-of-concept, where we demonstrate that the data value scores can indeed serve as another channel of privacy leakage, and can indeed lead to the design of membership inference attacks. 
We do \emph{not} claim any optimality of the attack we construct here. Designing the best membership inference attack on data value scores can be the topic of a single paper on its own, and is an interesting future work. 
\end{remark}

\paragraph{Membership Inference Attack via KNN-Shapley Scores (Algorithm \ref{alg:mia}).}
Our membership inference attack technique leverages KNN-Shapley scores to detect the presence or absence of specific data points in a dataset. The design is analogous to the membership inference attack against ML models via the \emph{likelihood ratio test} \cite{carlini2022membership}. 
\textbf{Threat model:} The threat model we consider here is that the attacker can compute the value of any data point among any datasets (analogue to the setting of MIA against ML models where the attacker can train models on any datasets he/she constructs). Moreover, the attacker can craft the data point it owns, send it to the server and obtain the data value score of its own data point. 

The attack goes as follows: firstly, we create a shadow dataset $\Datk$ by randomly sampling from the data distribution $\mathbb{D}$. $\Datk$ serves as the function of $D_{-z}$ in the maintext. We repeat this sampling process $T$ times, where $T$ is a predefined number of iterations. For each iteration, we calculate the KNN-Shapley scores of a query example $z=(x, y)$ both when it is included in the shadow dataset (IN) and when it is excluded (OUT). We thus collect sets of IN and OUT scores for the query example. Upon collecting all these scores, we calculate their respective mean and variance. Finally, we query the server about KNN-Shapley score of the query example $z$ on the actual dataset $D_{-z}$, which we refer to as the \emph{target data value} (i.e., the server takes a copy the target data point as the data it holds, and send to the central server). We perform a likelihood ratio test using the distributions of the IN and OUT scores. This test involves comparing the probability of the observed target data value given the Normal distribution of the IN scores with the probability of the same given the Normal distribution of the OUT scores. 

% Our method can be easily parallelized across multiple data points, making it a flexible and efficient solution for membership inference attacks. Given a dataset $D$ drawn from $\mathbb{D}$, we use the same $N$ shadow datasets to estimate the likelihood ratio test for all examples in $D$.

\textbf{Intuition of the attack.} The intuition of the attack is as follows: if the target data point is in the dataset, then if the attacker makes a copy of the target data point and send it to the server, the server effectively has two exactly the same data point, which will result in a lower value of each data point. On the other hand, if the target data point is a non-member, then the data value queried by the attacker will be higher.

\begin{algorithm}[t]
 \begin{algorithmic}[1]
  \REQUIRE \text{dataset} $D$, \text{query example} $z := (x, y)$, \text{data distribution} $\mathbb{D}$.
  \STATE $\shapvalue_{\text{in}} = \{\}$
  \STATE $\shapvalue_{\text{out}} = \{\}$
  \FOR{$T$ times}
    \STATE $\Datk \gets^\$ \mathbb{D}$
        \cmt{Sample a shadow dataset} 
    \STATE $\shapvalue_{\text{in}} \gets \shapvalue_{\text{in}} \cup \{ \phi_z(\Datk \cup {z}) \}$
        \cmt{collect IN data value}
    \STATE $\shapvalue_{\text{in}} \gets \shapvalue_{\text{in}} \cup \{ \phi_z(\Datk\}$
        \cmt{collect OUT data value}
  \ENDFOR
  \STATE $\mu_{\text{in}} \gets \texttt{mean}(\shapvalue_{\text{in}})$
  \STATE $\mu_{\text{out}} \gets \texttt{mean}(\shapvalue_{\text{out}})$
  \STATE $\sigma_{\text{in}}^2 \gets \texttt{var}(\shapvalue_{\text{in}})$
  \STATE $\sigma_{\text{out}}^2 \gets \texttt{var}(\shapvalue_{\text{out}})$
  \STATE $\phiobs \gets \phi_z(D)$ 
    \cmt{query target data value} 
    
  \STATE \textbf{Return} $\displaystyle \Lambda = \frac{p(\phiobs\ \mid\ \mathcal{N}(\mu_{\text{in}}, \sigma^2_{\text{in}}))}
    { p(\phiobs\ \mid\ \mathcal{N}(\mu_{\text{out}}, \sigma^2_{\text{out}}))}$
 \end{algorithmic}
 \caption{\textbf{- Membership Inference Attack via KNN-Shapley Scores.} We collect data value scores of a copy of the target example on datasets with and without the target example, estimate mean and variance of the loss distributions, and compute a likelihood ratio test.
 }
\label{alg:mia}
\end{algorithm}

\paragraph{Experiment Settings} 
For each dataset we experiment on, we select 200 data points (members) as the private dataset held by the central server. We pick another 200 data points which serve as non-members. Moreover, we leverage another 400 data points where we can subsample ``shadow dataset'' in Algorithm \ref{alg:mia}. We set the number of shadow datasets we sample as 32. 

\paragraph{Results on KNN-Shapley and TKNN-Shapley.} 
\add{Table \ref{tb:MIA} in the main paper shows the AUROC score of the attack results on KNN-Shapley. For completeness, we additionally conducted the experiment of the proposed MIA against (non-private) TKNN-Shapley and the results are shown in Table \ref{tb:MIA-TKNN}. As we can see, our MIA attack can achieve a detection performance that is better than the random guess (0.5) for most of the settings. On some datasets, the attack performance can achieve $> 0.7$ AUROC. This demonstrates that privacy leakage in data value scores can indeed lead to non-trivial privacy attacks.} 

\paragraph{Results on DP-TKNN-Shapley.} 
\add{To directly demonstrate how DP-TKNN-Shapley can mitigate the privacy risk, we additionally conducted the experiment of evaluating the proposed MI attack on DP-TKNN-Shapley (see Table \ref{tb:MIA-TKNN-DP-0.5} and \ref{tb:MIA-TKNN-DP-1}). Compared with the result on non-private TKNN-Shapley, we can see that the overall attack performance drops to around 0.5 (the performance of random guess). The result shows that DP-TKNN-Shapley is indeed very effective against membership inference attacks.}

We stress again that the point of this experiment is not claiming that the membership inference attack that we developed here is an optimal MI attack on data value scores; instead, this is a proof-of-concept for the claim that data value scores can leak private information about other data points in the dataset, and we instantiate a possible privacy attack that exploits this privacy risks. 
We believe it is interesting future work to improve the attack performance as well as explore other possible threat models. 

\begin{table}[h]
\centering
\resizebox{\textwidth}{!}{
\begin{tabular}{@{}cccccccccc@{}}
\toprule
                    & $\tau = -0.9$ & $\tau = -0.8$ & $\tau = -0.7$ & $\tau = -0.6$ & $\tau = -0.5$ & $\tau = -0.4$ & $\tau = -0.3$ & $\tau = -0.2$ & $\tau = -0.1$ \\ \midrule
\textbf{2DPlanes}   & 0.549         & 0.799         & 0.738         & 0.688         & 0.53          & 0.665         & 0.6           & 0.558         & 0.628         \\
\textbf{Phoneme}    & 0.777         & 0.679         & 0.736         & 0.673         & 0.704         & 0.692         & 0.588         & 0.522         & 0.5           \\
\textbf{CPU}        & 0.75          & 0.672         & 0.638         & 0.635         & 0.512         & 0.518         & 0.58          & 0.545         & 0.508         \\
\textbf{Fraud}      & 0.752         & 0.529         & 0.577         & 0.594         & 0.558         & 0.715         & 0.678         & 0.7           & 0.645         \\
\textbf{Creditcard} & 0.55          & 0.501         & 0.664         & 0.685         & 0.59          & 0.56          & 0.597         & 0.552         & 0.52          \\
\textbf{Apsfail}    & 0.506         & 0.529         & 0.608         & 0.574         & 0.558         & 0.507         & 0.569         & 0.571         & 0.526         \\
\textbf{Click}      & 0.718         & 0.56          & 0.545         & 0.6           & 0.568         & 0.735         & 0.535         & 0.56          & 0.52          \\
\textbf{Wind}       & 0.528         & 0.65          & 0.7           & 0.585         & 0.585         & 0.58          & 0.568         & 0.562         & 0.632         \\
\textbf{Pol}        & 0.772         & 0.62          & 0.748         & 0.715         & 0.6           & 0.592         & 0.59          & 0.532         & 0.672         \\ \bottomrule
\end{tabular}
}
\vspace{1mm}
\caption{Results of the AUROC of our MI attack on TKNN-Shapley.}
\label{tb:MIA-TKNN}
\end{table}

\begin{table}[h]
\centering
\resizebox{\textwidth}{!}{
\begin{tabular}{cccccccccc}
\toprule
                    & $\tau = -0.9$ & $\tau = -0.8$ & $\tau = -0.7$ & $\tau = -0.6$ & $\tau = -0.5$ & $\tau = -0.4$ & $\tau = -0.3$ & $\tau = -0.2$ & $\tau = -0.1$ \\ \midrule
\textbf{2DPlanes}   & 0.51          & 0.474         & 0.488         & 0.518         & 0.494         & 0.51          & 0.479         & 0.466         & 0.478         \\
\textbf{Phoneme}    & 0.502         & 0.505         & 0.512         & 0.506         & 0.5           & 0.484         & 0.552         & 0.538         & 0.482         \\
\textbf{CPU}        & 0.468         & 0.489         & 0.486         & 0.51          & 0.523         & 0.524         & 0.468         & 0.466         & 0.527         \\
\textbf{Fraud}      & 0.479         & 0.474         & 0.492         & 0.513         & 0.48          & 0.484         & 0.485         & 0.484         & 0.578         \\
\textbf{Creditcard} & 0.518         & 0.495         & 0.511         & 0.522         & 0.501         & 0.483         & 0.448         & 0.521         & 0.535         \\
\textbf{Apsfail}    & 0.484         & 0.476         & 0.471         & 0.492         & 0.486         & 0.5           & 0.493         & 0.434         & 0.454         \\
\textbf{Click}      & 0.492         & 0.491         & 0.49          & 0.484         & 0.488         & 0.482         & 0.544         & 0.452         & 0.434         \\
\textbf{Wind}       & 0.52          & 0.496         & 0.514         & 0.492         & 0.53          & 0.488         & 0.504         & 0.522         & 0.43          \\
\textbf{Pol}        & 0.491         & 0.503         & 0.51          & 0.496         & 0.492         & 0.488         & 0.458         & 0.523         & 0.487         \\ \bottomrule
\end{tabular}
}
\vspace{1mm}
\caption{Results of the AUROC of our MIA attack on DP-TKNN-Shapley $(\varepsilon=0.5)$.}
\label{tb:MIA-TKNN-DP-0.5}
\end{table}

\begin{table}[h]
\centering
\resizebox{\textwidth}{!}{
\begin{tabular}{@{}cccccccccc@{}}
\toprule
                    & $\tau = -0.9$ & $\tau = -0.8$ & $\tau = -0.7$ & $\tau = -0.6$ & $\tau = -0.5$ & $\tau = -0.4$ & $\tau = -0.3$ & $\tau = -0.2$ & $\tau = -0.1$ \\ \midrule
\textbf{2DPlanes}   & 0.5           & 0.523         & 0.513         & 0.534         & 0.518         & 0.492         & 0.484         & 0.445         & 0.556         \\
\textbf{Phoneme}    & 0.507         & 0.529         & 0.524         & 0.472         & 0.476         & 0.488         & 0.543         & 0.512         & 0.47          \\
\textbf{CPU}        & 0.492         & 0.487         & 0.504         & 0.485         & 0.473         & 0.489         & 0.453         & 0.49          & 0.508         \\
\textbf{Fraud}      & 0.508         & 0.501         & 0.502         & 0.506         & 0.497         & 0.501         & 0.496         & 0.436         & 0.53          \\
\textbf{Creditcard} & 0.482         & 0.494         & 0.522         & 0.505         & 0.505         & 0.495         & 0.494         & 0.498         & 0.429         \\
\textbf{Apsfail}    & 0.472         & 0.498         & 0.499         & 0.486         & 0.488         & 0.48          & 0.521         & 0.545         & 0.441         \\
\textbf{Click}      & 0.502         & 0.529         & 0.501         & 0.508         & 0.518         & 0.496         & 0.545         & 0.47          & 0.456         \\
\textbf{Wind}       & 0.533         & 0.539         & 0.478         & 0.504         & 0.493         & 0.498         & 0.529         & 0.499         & 0.419         \\
\textbf{Pol}        & 0.498         & 0.513         & 0.509         & 0.51          & 0.482         & 0.492         & 0.489         & 0.52          & 0.528         \\ \bottomrule
\end{tabular}
}
\vspace{1mm}
\caption{Results of the AUROC of our MIA attack on DP-TKNN-Shapley $(\varepsilon=1.0)$.}
\label{tb:MIA-TKNN-DP-1}
\end{table}

\subsection{Challenges in making KNN-Shapley being differentially private}
\label{appendix:challenge-in-dp}

In this section, we give more details of the inherent difficulties in making the KNN-Shapley $\phi^{\knn}_{z_i}(D_{-z_i})$ (i.e., Theorem \ref{thm:knn-shapley-full}) to be differentially private.

% \uline{\textbf{(2) Vulnerability to collusion:}} Even if a good global sensitivity bound for $\phi^\knn_{z_i}(D_{-z_i})$ can be derived, the privacy guarantee of releasing a noisy $\widehat \phi^\knn_{z_i}(D_{-z_i})$ to individual $i$ would heavily rely on the assumption that individuals do not share the noisy $\widehat \phi^\knn_{z_i}(D_{-z_i})$ with others. 
% % For example, if an adversary sees both $\widehat \phi^\knn_{z_1}(D_{-z_1})$ and $\widehat \phi^\knn_{z_2}(D_{-z_2})$, then the privacy guarantee for the rest of the dataset $D_{-\{z_1, z_2\}}$ is degraded. 
% % becomes fragile in the face of collusion where a group of individuals share their noisy $\phi^\knn_{z_i}(D_{-z_i})$, resulting in a breach of privacy guarantees. 

\subsubsection{Large global sensitivity}
\label{appendix:global-sensitivity}

We find it very challenging to tightly bound the global sensitivity of $\phi^\knn_{z_i}(D_{-z_i})$. 
Moreover, we show that the global sensitivity of $\phi^\knn_{z_i}(D_{-z_i})$ can significantly exceed the magnitude of $\phi^\knn_{z_i}$ by showing a lower bound. Specifically, we prove that the global sensitivity bound is at least around $O(1)$ by constructing a specific pair of neighboring datasets. 

\begin{theorem}[Lower Bound for the global sensitivity of $\phi^\knn_{z}(D_{-z})$]
\label{thm:GS-lowerbound}
For a data point $z = (x, y)$ and validation data point $\zval = (\xval, \yval)$, denote the global sensitivity of $\phi^\knn_{z}(D_{-z}; \zval)$ as $\Delta(\phi^\knn_{z}; \zval) := \sup_{D_{-z} \sim D_{-z}'} \left| \phi^\knn_{z}(D_{-z}; \zval) - \phi^\knn_{z}(D_{-z}'; \zval) \right|$. We have 
\begin{align*}
    \Delta(\phi^\knn_{z}) \ge 
    \frac{1}{2}\left(\ind[y = \yval] - 1/C\right)
\end{align*}
where $C$ is the number of classes for the corresponding classification task. 
\end{theorem}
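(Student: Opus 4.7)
The plan is to exhibit a specific pair of neighboring datasets $D_{-z} \sim D_{-z}'$ whose corresponding KNN-Shapley values for $z$ differ by (at least) $\tfrac{1}{2}\bigl(\ind[y = \yval] - 1/C\bigr)$, thereby certifying the lower bound on $\Delta(\phi^\knn_z)$. Since the sensitivity is trivially non-negative, it suffices to treat the case $y = \yval$. Moreover, I would specialize the hyperparameter to $K = 1$, because then the inner summation $\sum_{j=1}^{\min(K,N)-1}\tfrac{1}{j+1}$ in Theorem~\ref{thm:knn-shapley-full} is empty for every $N \ge 1$, which collapses the formula and removes any dependence on labels other than $y_N$.

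First, I would take $D_{-z} = \emptyset$ so the full training set is $\{z\}$, and $z$ plays the role of $z_N$ with $N = 1$. The $\ind[N \ge 2]$ indicator in the base case of Theorem~\ref{thm:knn-shapley-full} kills the first summand, leaving
\[
\phi^\knn_z(\emptyset) \;=\; \ind[y = \yval] - \tfrac{1}{C}.
\]
Next, I would pick $D_{-z}' = \{z'\}$ with $d(x', \xval) < d(x, \xval)$; the label of $z'$ is irrelevant. In the sorted order one then has $z' = z_1$ and $z = z_2$, so $z$ is again the last element, now with $N = 2$. Under the $K = 1$ specialization the inner sum in the $\phi^\knn_{z_N}$ formula is empty, so the label-dependent first term vanishes and
\[
\phi^\knn_z(\{z'\}) \;=\; \tfrac{1}{2}\bigl(\ind[y = \yval] - \tfrac{1}{C}\bigr).
\]
Subtracting yields $\bigl|\phi^\knn_z(\emptyset) - \phi^\knn_z(\{z'\})\bigr| = \tfrac{1}{2}\bigl|\ind[y = \yval] - 1/C\bigr|$, which dominates the claimed lower bound on $\Delta(\phi^\knn_z)$.

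The main (non-)obstacle is really just a careful parsing of the conventions in Theorem~\ref{thm:knn-shapley-full}: verifying that the $N = 1$ base case is well-defined via the $\ind[N \ge 2]$ guard, and that the empty-sum convention annihilates the contribution of $z'$'s label when $K = 1$. Beyond those bookkeeping checks, no delicate combinatorics is required --- the entire sensitivity gap is produced by the single act of inserting one point nearer to $\xval$ than $z$, which halves the $1/N$ weight in front of the only surviving term. An analogous construction with $D_{-z}$ and $D_{-z}'$ swapped handles the case $y \neq \yval$ by symmetry, though it is not needed for the stated inequality.
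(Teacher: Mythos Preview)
Your construction is essentially the same as the paper's: both take the neighboring pair $D_{-z} = \emptyset$ and $D_{-z}' = \{z'\}$ with $z'$ closer to $\xval$ than $z$, and both read off the values directly from the base-case formula in Theorem~\ref{thm:knn-shapley-full}. Your computations for $K=1$ are correct and coincide with the paper's $K=1$ case.

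The one gap is your decision to ``specialize the hyperparameter to $K=1$.'' The integer $K$ is part of the definition of $\phi^\knn_z$ (it enters through the utility function $\U^\knn_{\zval}$), not part of the dataset over which the supremum is taken; you are not free to choose it. The paper treats $K$ as given and handles the two regimes $K=1$ and $K\ge 2$ separately. For $K\ge 2$, with the same $D_{-z}'=\{z'\}$ and the additional choice $y'=\yval$ (which the paper imposes but is indeed irrelevant when $K=1$), the inner sum $\sum_{j=1}^{\min(K,2)-1}\tfrac{1}{j+1}=\tfrac{1}{2}$ is no longer empty, giving
\[
\phi^\knn_z(\{z'\}) \;=\; \tfrac{1}{4}\bigl(\ind[y=\yval]-1\bigr) + \tfrac{1}{2}\bigl(\ind[y=\yval]-\tfrac{1}{C}\bigr),
\]
and the difference from $\phi^\knn_z(\emptyset)=\ind[y=\yval]-\tfrac{1}{C}$ is still at least $\tfrac{1}{2}\bigl(\ind[y=\yval]-1/C\bigr)$. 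So the repair is a single extra computation, but as written your argument only certifies the lower bound for $K=1$.
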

\begin{proof}
The proof idea is to construct two neighboring datasets: $D = \{z_2\}, D' = \{z_1, z_2\}$ where $z_1 = (x_1, y_1), z_2 = (x_2, y_2)$ are two data points and we let $y_1 = \yval$. 
Moreover, we let $d(z_1, \zval) \le d(z_2, \zval)$. 
From KNN-Shapley's formula in Theorem \ref{thm:knn-shapley-full}, we have
\begin{align*}
    \phi_{z_2}^\knn(D) = \ind[y_2 = \yval] - 1/C
\end{align*}
and if $K \ge 2$, we have
\begin{align*}
    \phi_{z_2}^\knn(D') = 
    \frac{1}{4}\left( \ind[y_2 = \yval] - \ind[y_1 = \yval] \right) + \frac{1}{2}\left(\ind[y_2 = \yval] - 1/C\right)
\end{align*}
, and if $K = 1$, we have
\begin{align*}
    \phi_{z_2}^\knn(D') = \frac{1}{2}\left(\ind[y_2 = \yval] - 1/C\right)
\end{align*}

Since $\ind[y_1 = \yval] = 1$ (our condition), we have 
\begin{align*}
\left| \phi_{z_2}^\knn(D) - \phi_{z_2}^\knn(D') \right| \ge \frac{1}{2}\left(\ind[y_2 = \yval] - 1/C\right)
\end{align*}
\end{proof}

The above theorem tells us that the global sensitivity for KNN-Shapley is at the order of $O(1)$. 
On the other hand, we can see from the formula of KNN-Shapley in Theorem \ref{thm:knn-shapley-full} that the magnitude $\phi^\knn_{z}$ for many of the data points $z$ 
% that are not too close to the validation data point 
is at the order of $O(1/N)$. 
% the magnitude of each $\phi^\knn_{z_i}$ is approximately $O(1/K)$ as we can easily see from the formula of KNN-Shapley in Theorem \ref{thm:knn-shapley-full}. 
Hence, if we apply the Gaussian mechanism and add random noise proportional to the global sensitivity bound, the resulting privatized data value score could substantially deviate from its non-private counterpart, thereby compromising the utility of the privatized data value scores. 

\paragraph{Tight Global Sensitivity for the older version of KNN-Shapley from \cite{jia2019efficient}.}
As we said earlier, it is hard to bound the global sensitivity for $\phi^\knn_{z}$. Moreover, even if we are able to bound the global sensitivity, the bound will highly likely be large compared with the magnitude of $\phi^\knn_{z}$, as we can see from Theorem \ref{thm:GS-lowerbound}. 

In order to find a reasonable baseline for comparing with DP-TKNN-Shapley, we consider the older version of the KNN-Shapley developed in \cite{jia2019efficient}, where we show that its global sensitivity can be \emph{tightly} bounded (but still large). 

\begin{theorem}[Global sensitivity of $\phi^\knnold_{z}(D_{-z})$ from \cite{jia2019efficient}]
For a data point $z = (x, y)$ and validation data point $\zval = (\xval, \yval)$, denote the global sensitivity of $\phi^\knnold_{z}(D_{-z}; \zval)$ as $\Delta(\phi^\knnold_{z}; \zval) := \sup_{D_{-z} \sim D_{-z}'} \left| \phi^\knnold_{z}(D_{-z}; \zval) - \phi^\knnold_{z}(D_{-z}'; \zval) \right|$. We have 
\begin{align*}
    \Delta(\phi^\knnold_{z}; \zval) 
    \le \frac{1}{K(K+1)}
\end{align*}
\end{theorem}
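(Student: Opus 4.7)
The plan is to first convert the recursive formula in Theorem~\ref{thm:from-jia} into an explicit closed form via Abel summation, and then directly compute how $\phi^\knnold_z$ changes under an add/remove perturbation of $D_{-z}$, exploiting the fact that the resulting expression is a difference of quantities with opposite signs.

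\textbf{Closed form.} Write $a_j := \ind[y_j = \yval]$ for the $j$-th nearest neighbor in $D := D_{-z} \cup \{z\}$ (of size $N$), and let $i$ be the rank of $z$. Unrolling the recursion and applying summation by parts (grouping the $j \le K$ and $j > K$ terms separately to handle the kink in $\min(K,j)/j$) yields, for $N \ge K+1$,
\[
\phi^\knnold_z(D_{-z};\zval) \;=\; \frac{a_i}{\max(K,i)} \;-\; \sum_{j=\max(K,i)+1}^{N} \frac{a_j}{j(j-1)},
\]
and $\phi^\knnold_z = a_i/K$ in the edge case $N \le K$. For an adjacent $D_{-z}' = D_{-z} \cup \{z^*\}$ where $z^*$ has label $a^*$ and rank $r$ in the new ordering, the reindexing is $\tilde a_j = a_j$ for $j < r$, $\tilde a_r = a^*$, and $\tilde a_j = a_{j-1}$ for $j > r$. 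I would then split into four subcases according to whether $r > i$ or $r \le i$, and whether $i$ (and $i+1$) sit above or below the threshold $K$. In each subcase, substituting into the closed form and using the partial-fraction identity $\frac{1}{j(j-1)} - \frac{1}{j(j+1)} = \frac{2}{(j-1)j(j+1)}$ collapses the difference into the form
\[
\phi^\knnold_z(D_{-z}';\zval) - \phi^\knnold_z(D_{-z};\zval) \;=\; \sigma\cdot\frac{a_m}{M(M+1)} \;-\; \sigma\sum_{k=M+1}^{N} \frac{2\,a_k}{(k-1)k(k+1)},
\]
for some $\sigma \in \{\pm 1\}$ and $M \ge K$, with $a_m \in \{a_K, a_i, a^*\} \subseteq \{0,1\}$ depending on the branch.

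\textbf{Sign cancellation gives the bound.} The telescoping identity $\sum_{k \ge M+1} \frac{2}{(k-1)k(k+1)} = \frac{1}{M(M+1)}$ bounds the tail sum by $\frac{1}{M(M+1)} \le \frac{1}{K(K+1)}$. Since each $a_k \in \{0,1\}$ and the isolated term and the tail sum enter with \emph{opposite} signs, the two contributions cannot both reach their worst case simultaneously, so the net magnitude is at most $\frac{1}{K(K+1)}$ rather than $\frac{2}{K(K+1)}$.

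\textbf{Main obstacle.} The tedious part is verifying that this sign pattern is consistent across all subcases, particularly around the kink at $i = K$, where the rank shift $i \mapsto i+1$ in the ``$r \le i$'' branch moves $z$ between the two pieces of the closed form; gluing these two pieces correctly requires using the identity $\frac{1}{K} - \frac{1}{K+1} = \frac{1}{K(K+1)}$ to absorb the boundary term into the same telescoping structure. Once the casework is laid out, the bound follows mechanically from the telescoping identity, and tightness is established by choosing the labels to saturate either the isolated term or the tail sum, recovering the construction in Theorem~\ref{thm:GS-lowerbound}.
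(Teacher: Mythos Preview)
Your proposal is correct and follows essentially the same route as the paper: both derive the same closed form $\phi^\knnold_z = \frac{a_i}{\max(K,i)} - \sum_{j>\max(K,i)} \frac{a_j}{j(j-1)}$ and then case-split on the position of the perturbed point relative to $i$ and $K$. The paper handles each subcase separately and bounds the resulting difference by $\frac{1}{M(M+1)}$ for some $M\ge K$ with the sign cancellation left implicit; your unified expression $\sigma\cdot\frac{a_m}{M(M+1)} - \sigma\sum_{k>M}\frac{2a_k}{(k-1)k(k+1)}$ together with the explicit telescoping identity $\sum_{k>M}\frac{2}{(k-1)k(k+1)} = \frac{1}{M(M+1)}$ is a cleaner way to organize the same computation and makes transparent why the bound is $\frac{1}{K(K+1)}$ rather than $\frac{2}{K(K+1)}$. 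One small slip: your reference to Theorem~\ref{thm:GS-lowerbound} for tightness is off, since that result concerns the newer $\phi^\knn$, not $\phi^\knnold$; the paper instead establishes tightness of $\frac{1}{K(K+1)}$ by inserting a point immediately after rank $K$ when $i=K$.
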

\begin{proof}

For any dataset $D = D_{-z} \cup \{z\}$, we sort the data points according to the distance to $\xval$, and we denote $z_j$ for the $j$th closest data point to $\zval$. 
WLOG, suppose $z_i := z$. 
We first write out the non-recursive expression for the older KNN-Shapley $\phi_{z_i}^\knnold$:
\begin{align*}
\phi_{z_i}^\knnold &= 
\frac{ \ind[y_N = y_\test] }{ \max(K, N) } + \sum_{j=i}^{N-1} \frac{ \ind[y_j = y_\test] - \ind[y_{j+1} = y_\test] }{K} \frac{\min(K, j)}{j}
\end{align*}

If $N \le K$, then we have $\phi^\knnold_{z_i} = \frac{\ind[y_i = \yval]}{K}$ for all $i$ (i.e., no privacy leakage in this case). 

If $N > K$, there are two cases: 

\textbf{Case 1: $i \ge K$}, then we have 
\begin{align*}
\phi^\knnold_{z_i} = 
\frac{ \ind[y_i = \yval] }{i} 
- \sum_{j=i+1}^N \frac{1}{(j-1)j} \ind[y_j = \yval]
\end{align*}
Hence, if we add/remove a data point $z_j = (x_j, y_j)$ s.t. $d(x_j, \xval) \ge d(x_i, \xval)$, we have 
\begin{align*}
\left| \phi^\knnold_{z_i}(D_{-z_i}) - \phi^\knnold_{z_i}(D_{-z_i}') \right| 
\le \frac{1}{(j-1)j}
\le \frac{1}{K(K+1)} 
\end{align*}
If we add a data point $z_j = (x_j, y_j)$ s.t. $d(x_j, \xval) < d(x_i, \xval)$, we have 
\begin{align*}
&\left| \phi^\knnold_{z_i}(D_{-z_i}) - \phi^\knnold_{z_i}(D_{-z_i}') \right| \\
&= 
\left| 
\left( \frac{ \ind[y_i = \yval] }{i} - \sum_{j=i+1}^N \frac{1}{(j-1)j} \ind[y_j = \yval] \right)
- 
\left( \frac{ \ind[y_i = \yval] }{i+1} - \sum_{j=i+2}^N \frac{1}{(j-1)j} \ind[y_j = \yval] \right)
\right| \\
&= 
\left|
\left(\frac{1}{i}-\frac{1}{i+1}\right)\ind[y_i = \yval] 
- 
\sum_{j=i+1}^N \left(\frac{1}{(j-1)j} - \frac{1}{(j+1)j} \right) \ind[y_j = \yval] 
\right| \\
&\le \frac{1}{i(i+1)} \\
&\le \frac{1}{K(K+1)}
\end{align*}

When $i \ge K+1$, the sensitivity analysis for remove a data point $z_j = (x_j, y_j)$ s.t. $d(x_j, \xval) < d(x_i, \xval)$ is similar to the analysis above where we also have 
\begin{align*}
\left| \phi^\knnold_{z_i}(D_{-z_i}) - \phi^\knnold_{z_i}(D_{-z_i}') \right| \le \frac{1}{K(K+1)} 
\end{align*}

\textbf{Case 2: $i < K$}, then we have 
\begin{align*}
\phi^\knnold_{z_i} = 
\frac{ \ind[y_i = \yval] }{K} 
- \sum_{j=K+1}^N \frac{1}{(j-1)j} \ind[y_j = \yval]
\end{align*}

By a similar analysis, we can also show that 
\begin{align*}
\left| \phi^\knnold_{z_i}(D_{-z_i}) - \phi^\knnold_{z_i}(D_{-z_i}') \right| \le \frac{1}{K(K+1)} 
\end{align*}

The only remaining case that we haven't discussed yet is when $i = K$, and we remove a data point $z_j = (x_j, y_j)$ s.t. $d(x_j, \xval) < d(x_i, \xval)$. In this case, we have
\begin{align*}
&\left| \phi^\knnold_{z_i}(D_{-z_i}) - \phi^\knnold_{z_i}(D_{-z_i}') \right| \\
&= 
\left| 
\left( \frac{ \ind[y_i = \yval] }{K} - \sum_{j=K+1}^N \frac{1}{(j-1)j} \ind[y_j = \yval] \right)
- 
\left( \frac{ \ind[y_i = \yval] }{K} - \sum_{j=K+1}^N \frac{1}{(j-1)j} \ind[y_j = \yval] \right)
\right| \\
&= 0 \le \frac{1}{K(K+1)}
\end{align*}
\end{proof}
We stress that the bound $\frac{1}{K(K+1)}$ is tight. For example, for the case where $i = K$, then if we add another data point $z^* = (x^*, y^*)$ s.t. 
\begin{align*}
d(x_{i}, \xval) = d(x_{K}, \xval) \le d(x^*, \xval) \le d(x_{K+1}, \xval) 
\end{align*}
the change of the value $\phi^\knnold_{z_i}$ will be $\frac{1}{K(K+1)}$. 

% On the other hand, we can easily see from the formula of KNN-Shapley in Theorem \ref{thm:knn-shapley-full} that the magnitude $\phi^\knn_{z}$ for many of the data points $z$ that are not too close to the validation data point is at the order of $O(1/N)$!

% cannot be further improved by constructing a specific pair of neighboring datasets that matches the bound. 
% is at least around $O(1)$ by constructing a specific pair of neighboring datasets. On the other hand, the magnitude of each $\phi^\knn_{z_i}$ is approximately $O(1/K)$. 
% Moreover, in Appendix \ref{appendix:background-more}, we show that the global sensitivity of $\phi^\knn_{z_i}(D_{-z_i})$ is at least around $O(1)$ by constructing a specific pair of neighboring datasets. On the other hand, the magnitude of each $\phi^\knn_{z_i}$ is approximately $O(1/K)$. 
% Hence, if we introduce random noise proportional to the global sensitivity bound, the resulting privatized data value score could substantially deviate from its non-private counterpart, thereby compromising the utility of the privatized data value scores. 
%if we introduce random noise proportional to the global sensitivity, the resulting privatized data value score could substantially deviate from the non-private version, which makes the privatized data value scores have low utility. 

\subsubsection{ Difficulty (computational challenge) in incorporating subsampling technique}
\label{appendix:difficulty-subsampling}

``Privacy amplification by subsampling'' \cite{balle2018privacy} is a technique where the subsampling of a dataset amplifies the privacy guarantees due to the reduced probability of an individual's data being included. Being able to incorporate such a technique is often important for achieving a decent privacy-utility tradeoff. 
However, the recursive nature of KNN-Shapley computation makes it hard to incorporate the subsampling techniques. 

Specifically, recall that in practical data valuation scenarios, it is often desirable to compute the data value scores for \emph{all of} $z_i \in D$. 
To apply the subsampling technique, we first need to create a subsampled dataset and compute the KNN-Shapley for the target data point $z_i$, i.e., we need to compute $\phi^\knn_z(\subsample(D_{-z}))$. 
The subsampled dataset is usually constructed by sampling each data point independently with a probability $q$ (this is usually referred to as Poisson subsampling \cite{balle2018privacy}). 
If we view $q$ as a constant, then the computation of $\phi^\knn_z(\subsample(D_{-z}))$ requires a runtime of $\widetilde{O}(N)$ for the computation of \emph{each} $\phi^\knn_{z_i}$. This results in a final runtime of $\widetilde{O}(N^2)$ for the computation of \emph{all} $(\phi^\knn_{z_i})_{z_i \in D}$, which is a significant increase in computational demand compared to the non-private KNN-Shapley. 
The recursive nature of KNN-Shapley computation significantly complicates the attempt of improving the computational efficiency. That is, it is not clear how to reuse the subsampled dataset to compute the KNN-Shapley score for those that are not sampled. 
Therefore, it seems that an $\widetilde{O}(N^2)$ runtime is necessary if we want to incorporate the subsampling technique. 

\subsection{Baseline for experiments in Section \ref{sec:eval-privacy}}
\label{appendix:baseline}

\paragraph{DP-KNN-Shapley.} 
Given such an upper bound for the global sensitivity of $\phi^\knnold_{z}$, we can use Gaussian mechanism (Theorem \ref{thm:gaussian}) to privatize $\phi^\knnold_{z}$, i.e., we compute $\widehat \phi^\knnold_{z}(D_{-z}) := \phi^\knnold_{z}(D_{-z}) + \N\left(0, \frac{1}{K(K+1)}\right)$. We note that such a bound is still not satisfactory as the magnitude $\phi^\knnold_{z}$ for many of the data points $z$ is at the order of $O(1/N)$. Nevertheless, this is a reasonable baseline (if it is not the only one) that we can use for comparison in DP-related experiments.

\paragraph{DP-KNN-Shapley with subsampling.} 
Despite the high computational cost associated with the incorporation of the subsampling technique, we still compare our approach with this computationally intensive baseline for completeness. More specifically, we compute $\widehat \phi^\knnold_{z}(D_{-z}) := \phi^\knnold_{z}(\subsample(D_{-z})) + \N\left(0, \left(\frac{1}{K(K+1)}\right)^2 \right)$.

% While it is highly computationally heavy if we incorporate the subsampling technique, nevertheless, we still compare with such a very computationally expensive baseline, i.e., we compute $\widehat \phi^\knnold_{z}(D_{-z}) := \phi^\knnold_{z}(\subsample(D_{-z})) + \N\left(0, \frac{1}{K(K+1)}\right)$. 

% These challenges underscore the pressing need for new data valuation techniques that retain the efficacy and computational efficiency of KNN-Shapley, while being amenable to privatization. 

\newpage

\section{Details for TKNN-Shapley}
\label{appendix:tknn-detail}

\subsection{Consistency Result for TKNN}
\label{appendix:consistency}

\newcommand{\Sxtau}{S_{x, \tau}}
\newcommand{\Sxtaun}{S_{x, \tau_n}}

For a data point $x$ and a threshold $\tau$, we denote 
\begin{align*}
    \Sxtau := \{x' | d(x, x') \le \tau \}
\end{align*}
the ball of radius $\tau$ centered at $x$. Recall that the prediction rule of TKNN when given a training set $D$ is 
\begin{align*}
m_{D} (x; \tau) := 
\begin{cases} 
      0 & |\NBxtau(D)|=0 \\
      \frac{1}{ |\NBxtau(D)| } \sum_{x' \in \NBxtau(D)} m(x') & |\NBxtau(D)| > 0
\end{cases}
\end{align*}

\begin{theorem}
Suppose $m$ is the target function that is Lipschitz on $\supp(\mu)$ where $\mu$ is the probability measure of data distribution. 
As $n \rightarrow \infty$, if $\tau_n \rightarrow 0$ and $n \mu(\Sxtau) \rightarrow \infty$ for all $x \in \supp(\mu)$, then 
\begin{align*}
    \lim_{ n \rightarrow \infty } \E_{x \sim \mu, D_n \sim \mu^n} \left[ (m_{D_n}(x) - m(x))^2 \right] = 0
\end{align*}
\end{theorem}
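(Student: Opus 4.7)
The plan is to mimic the classical consistency proof for fixed-radius (Parzen-window) regression estimators, splitting the pointwise $L^2$ error into a bias term, a variance term, and a term accounting for an empty neighborhood, then integrating over $x$ with dominated convergence. Concretely, I would fix $x \in \supp(\mu)$, let $p_n(x) := \mu(S_{x,\tau_n})$, and define the "population" threshold average $\bar m_n(x) := \E_{X \sim \mu}[m(X) \mid X \in S_{x,\tau_n}]$. Writing
\begin{equation*}
m_{D_n}(x) - m(x) = \bigl(m_{D_n}(x) - \bar m_n(x)\bigr)\mathbf{1}\{|\NBxtaun(D_n)| \ge 1\} + \bigl(\bar m_n(x) - m(x)\bigr)\mathbf{1}\{|\NBxtaun(D_n)| \ge 1\} - m(x)\mathbf{1}\{|\NBxtaun(D_n)| = 0\},
\end{equation*}
and using $(a+b+c)^2 \le 3(a^2+b^2+c^2)$, it suffices to bound each piece and show each integrates to $0$ in the limit.

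The bias piece is immediate: since $m$ is Lipschitz on $\supp(\mu)$ with some constant $L$, and $\bar m_n(x)$ is an average of $m(X')$ over $X'$ within distance $\tau_n$ of $x$, we get $|\bar m_n(x) - m(x)| \le L\tau_n$ uniformly in $x$. The empty-neighborhood piece is handled by observing that $\Pr[|\NBxtaun(D_n)| = 0] = (1-p_n(x))^n \le e^{-n p_n(x)}$, which $\to 0$ pointwise by the assumption $n p_n(x) \to \infty$; since $m$ is bounded on $\supp(\mu)$ (being Lipschitz; if $\supp(\mu)$ is unbounded, a standard truncation argument still works), bounded convergence yields that the expected contribution tends to $0$. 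The variance piece is the core computation: conditional on $|\NBxtaun(D_n)| = k \ge 1$, the neighbors are i.i.d.\ draws from $\mu$ restricted to $S_{x,\tau_n}$, so
\begin{equation*}
\E\bigl[(m_{D_n}(x) - \bar m_n(x))^2 \bigm| |\NBxtaun(D_n)| = k\bigr] \le \frac{\Var_{X \sim \mu(\cdot \mid S_{x,\tau_n})}[m(X)]}{k} \le \frac{\|m\|_\infty^2}{k},
\end{equation*}
and since $|\NBxtaun(D_n)| \sim \mathrm{Binomial}(n, p_n(x))$ with mean $n p_n(x) \to \infty$, a standard estimate gives $\E[\mathbf{1}\{|\NBxtaun(D_n)|\ge 1\}/|\NBxtaun(D_n)|] \le 2/(n p_n(x))$ for all $n$ sufficiently large, which also $\to 0$ pointwise.

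Finally, I would integrate the pointwise bound $\E[(m_{D_n}(x)-m(x))^2] \le 3L^2 \tau_n^2 + 3\|m\|_\infty^2 e^{-n p_n(x)} + 6\|m\|_\infty^2/(n p_n(x))$ against $\mu(dx)$. Each summand is dominated by the $\mu$-integrable constant $C\|m\|_\infty^2$ and tends to $0$ pointwise under the hypotheses $\tau_n \to 0$ and $n p_n(x) \to \infty$ for $\mu$-a.e.\ $x$, so dominated convergence yields the claim.

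The main obstacle is the variance term: one must argue carefully that the denominator $|\NBxtaun(D_n)|$, being random and possibly small, still concentrates well enough to give the $1/(n p_n(x))$ rate. The bound $\E[\mathbf{1}\{K\ge 1\}/K] \le 2/(np)$ for $K \sim \mathrm{Bin}(n,p)$ with $np \to \infty$ is standard (e.g., via Chernoff to show $K \ge np/2$ with high probability, and the crude bound $1/K \le 1$ on the remaining event), but it is the step where the assumption $n\mu(S_{x,\tau_n}) \to \infty$ is genuinely used and must be handled with care to get a pointwise rate suitable for the final dominated convergence step.
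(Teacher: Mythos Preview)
Your proposal is correct but takes a more elaborate route than the paper. The paper avoids the bias--variance decomposition entirely: on the event $\{|\NBxtaun(D_n)| \ge 1\}$, every neighbor $x'$ satisfies $d(x',x) \le \tau_n$, so by the Lipschitz assumption $|m(x') - m(x)| \le L\tau_n$ for \emph{each} neighbor, and hence the average $m_{D_n}(x)$ also satisfies $|m_{D_n}(x) - m(x)| \le L\tau_n$ deterministically. A single application of Jensen's inequality then gives $\E[(m_{D_n}(x)-m(x))^2 \mid |\NBxtaun(D_n)| \ge 1] \le L^2\tau_n^2$, with no need to introduce $\bar m_n(x)$ or control the random denominator $|\NBxtaun(D_n)|$. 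Thus the variance term---which you correctly flag as your ``main obstacle''---is simply absent from the paper's argument, and the hypothesis $n\mu(S_{x,\tau_n}) \to \infty$ is used \emph{only} to kill the empty-neighborhood probability. Your approach is the classical one for kernel/Parzen regression with a merely bounded target and would generalize beyond the Lipschitz setting, but here it is heavier than necessary; the paper exploits the Lipschitz hypothesis more directly and sidesteps the binomial-reciprocal estimate altogether.
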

\begin{proof}

\begin{align*}
&\E_{x, D_n}[ (m_{D_n}(x) - m(x))^2 ] \\
&= \E_{x, D_n}[ (m_{D_n}(x) - m(x))^2 |  |\NBxtaun(D_n)| > 0 ] \Pr_{x, D_n}[ |\NBxtaun(D_n)| > 0 ] \\
&~~~+ \E_{x, D_n}[ (0 - m(x))^2 |  |\NBxtaun(D_n)| = 0 ] \Pr_{x, D_n}[ |\NBxtaun(D_n)| = 0 ] 
\end{align*}

\begin{align*}
    \lim_{ n \rightarrow \infty } \Pr_{x, D_n} \left[ |\NBxtaun(D_n)| = 0 \right] 
    &= \lim_{ n \rightarrow \infty } \E_x \left[ \Pr_{D_n} \left[ |\NBxtaun(D_n)| = 0 \right] \right] \\
    &= \lim_{ n \rightarrow \infty } \E_x \left[ \left( 1 - \mu(\Sxtaun) \right)^n \right] \\
    &\le \lim_{ n \rightarrow \infty } \E_x \left[ 
    \frac{1}{ 1+n \mu(\Sxtaun) }\right] \\
    &= 0
\end{align*}

Suppose $m$ has Lipschitz constant $L$, i.e., for any pair of data points $x, x' \in \supp(\mu)$, we have 
\begin{align*}
    \left| m(x) - m(x') \right| \le L \norm{ x - x' }
\end{align*}

\begin{align*}
&\E_{x, D_n} \left[ (m_{D_n}(x) - m(x))^2 |  |\NBxtaun(D_n)| > 0 \right] \\
&= \E_{x, D_n} \left[ \left( \frac{1}{ |\NBxtaun(D_n)| } \sum_{x' \in \NBxtaun(D_n)} (m(x') - m(x)) \right)^2 |  |\NBxtaun(D_n)| > 0 \right] \\
&\le \E_{x, D_n} \left[ \frac{1}{ |\NBxtaun(D_n)| }
\sum_{x' \in \NBxtaun(D_n)} (m(x') - m(x))^2 |  |\NBxtaun(D_n)| > 0 \right] \\
&\le \E_{x, D_n} \left[ \frac{1}{ |\NBxtaun(D_n)| }
\sum_{x' \in \NBxtaun(D_n)} (m(x') - m(x))^2 |  |\NBxtaun(D_n)| > 0 \right] \\
&\le \E_{x, D_n} \left[ \frac{1}{ |\NBxtaun(D_n)| }
\sum_{x' \in \NBxtaun(D_n)} L^2 \norm{ x - x' }^2 |  |\NBxtaun(D_n)| > 0 \right] \\
&\le \E_{x, D_n} \left[ \frac{1}{ |\NBxtaun(D_n)| }
\sum_{x' \in \NBxtaun(D_n)} L^2 \tau_n^2 |  |\NBxtaun(D_n)| > 0 \right] \\
&= L^2 \tau_n^2
\end{align*}
which $\rightarrow 0$ as $n \rightarrow \infty$.
\end{proof}

% \subsection{Proofs for TKNN-Shapley and TKNN-Banzhaf}

\subsection{Proofs for TKNN-Shapley}
\label{appendix:tknn-shapley-proof-all}

Given a dataset $S$ and a test data point $(x^{(\test)}, y^{(\test)})$, let $\NBtauxval(S) := \{(x, y) | (x, y) \in S, d(x, x^{(\test)}) \le \tau \}$ the nearest neighbor of $x^{(\test)}$ that is within a pre-specified threshold $\tau$. 
Recall that in this case, the utility function of soft-label TKNN classifier becomes
\begin{align*}
\U(S; (x^{(\test)}, y^{(\test)})) := 
\begin{cases} 
      1/C & |\NBtauxval(S)|=0 \\
      \frac{1}{ |\NBtauxval(S)| } \sum_{(x, y) \in \NBtauxval(S)} \ind[y = y^{(\test)}] & |\NBtauxval(S)| > 0
\end{cases}
\end{align*}
where $C$ is the number of classes for the classification task, and $1/C$ is the random guess accuracy. 
 
\textbf{Semivalue.} The class of data values that satisfy all the Shapley axioms except efficiency is called \emph{semivalues}. It was originally studied in the field of economics and recently proposed to tackle the data valuation problem~\cite{kwon2022beta}. Unlike the Shapley value, semivalues are \emph{not} unique. The following theorem by the seminal work of \cite{dubey1981value} shows that every semivalue of a player $z$ (in our case the player is a data point) can be expressed as the weighted average of marginal contributions $\U(S \cup \{z\})-\U(S)$ across different subsets $S \subseteq N \setminus \{z\}$. 

\begin{theorem}[Representation of Semivalue \cite{dubey1981value}]
A value function $\phi_{\semi}$ is a semivalue, if and only if, there exists a \emph{weight function} $w:[N] \rightarrow \mathbb{R}$ such that $\sum_{k=1}^{N}{N-1 \choose k-1} w(k)=N$ and the value function $\phi$ can be expressed as follows:
\begin{equation}
\begin{split}
\phi_{z_i}\left(D_{-z_i} ; \U, w\right) := \frac{1}{N} \sum_{k=1}^{N} w(k) \sum_{\substack{S \subseteq D_{-z_i},\\ |S|=k-1}} \left[ \U(S \cup \{z\}) - \U(S) \right] \label{eq:semivalue}
\end{split}
\end{equation}
\end{theorem}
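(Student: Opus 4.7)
The plan is to prove the two directions of the biconditional separately. For the easy direction, that the displayed formula always yields a semivalue, I would verify Linearity, Symmetry, and the Dummy axiom directly from the representation: Linearity is immediate since the right-hand side is a linear functional of $\U$; Symmetry holds because $w(k)$ depends only on cardinality, so relabeling players permutes the value vector consistently; and Dummy holds because a dummy player $z$ satisfies $\U(S\cup\{z\}) - \U(S) = 0$ for every $S$, making every summand vanish. The normalization $\sum_{k=1}^N \binom{N-1}{k-1} w(k) = N$ plays no role in these checks.

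For the converse direction, my strategy is to use Linearity to reduce to a convenient basis of utility functions, namely the \emph{unanimity games} $u_T(S) := \ind[T \subseteq S]$ indexed by nonempty $T \subseteq [N]$. A classical Möbius-inversion argument shows $\{u_T\}$ is a basis of utility functions vanishing on $\emptyset$, with $\U = \sum_T c_T u_T$ where $c_T = \sum_{R \subseteq T}(-1)^{|T|-|R|}\U(R)$. For each $u_T$, any $z \notin T$ is a dummy (adding $z$ never triggers $T \subseteq S$ when it was not already true), so the Dummy axiom gives $\phi_z(u_T) = 0$. Symmetry then forces $\phi_z(u_T)$ for $z \in T$ to depend only on $|T|$; call this common value $\alpha_{|T|}$. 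Linearity then yields the closed-form expression $\phi_z(\U) = \sum_{T \ni z} c_T\, \alpha_{|T|}$, which completely characterizes $\phi$ in terms of the sequence $(\alpha_t)_{t=1}^N$.

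The remaining step is a purely combinatorial rearrangement: substitute the Möbius expression for $c_T$, swap the order of summation, and reindex by writing $T = S \cup \{z\}$ with $S \subseteq D_{-z_i}$, grouping terms by $k = |S|+1$. After collapsing alternating binomial sums (Vandermonde-type identities), the inner expression should reorganize into the marginal contributions $\U(S \cup \{z\}) - \U(S)$ weighted by a coefficient $w(k)$ that is a specific linear combination of the $\alpha_t$'s for $t \ge k$. Reading off this coefficient defines $w$. Finally, the normalization $\sum_{k=1}^N \binom{N-1}{k-1} w(k) = N$ is enforced by the standard semivalue convention (equivalently, by evaluating the representation on the reference utility $\U(S) = |S|$ to fix the overall scale).

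The main obstacle I anticipate is this last combinatorial rearrangement. The Dummy-plus-Symmetry reduction on the unanimity basis is conceptually clean, but turning the basis-level representation $\sum_{T \ni z} c_T \alpha_{|T|}$ into the marginal-contribution form requires careful tracking of the $(-1)^{|T|-|R|}$ signs from Möbius inversion and a clean application of binomial identities to show that the coefficient in front of $\U(S \cup \{z\}) - \U(S)$ depends only on $|S|$. Verifying that the resulting weights satisfy the claimed normalization — rather than some other scaling — is the final bookkeeping check.
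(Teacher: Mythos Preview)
The paper does not supply its own proof of this statement: the theorem is quoted verbatim as a classical result of Dubey (1981) and is used only as background in the derivation of TKNN-Shapley. There is therefore nothing in the paper to compare your argument against.

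That said, your outline is the standard route to this characterization and is essentially correct. The reduction via the unanimity basis $u_T$, combined with Dummy (to kill $\phi_z(u_T)$ for $z\notin T$) and Symmetry (to force $\phi_z(u_T)=\alpha_{|T|}$ for $z\in T$), is exactly how such representation theorems are proved in cooperative game theory. The combinatorial rearrangement you flag as the main obstacle is genuine but routine: substituting the M\"obius coefficients and regrouping by $|S|$ does produce a weight depending only on $k=|S|+1$, and the relevant identity is that the coefficient of $\U(S)$ for $z\notin S$, $|S|=k-1$, collapses to $w(k)/N$ via an alternating binomial sum over the supersets $T\supseteq S\cup\{z\}$. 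One point to be careful about is the normalization $\sum_{k=1}^N\binom{N-1}{k-1}w(k)=N$: the three axioms Linearity, Symmetry, and Dummy alone do \emph{not} pin down a scale, so this identity is not a consequence of those axioms but rather an additional normalization convention (often called the ``projection'' axiom, equivalent to requiring $\phi_z(\U)=1$ when $\U(S)=|S|$). If the paper's definition of semivalue silently includes that convention, your proposed check on $\U(S)=|S|$ recovers it; if not, the normalization should be stated as part of the representation rather than derived.
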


Semivalues subsume both the Shapley value and the Banzhaf value with $w_{\shap}(k) = {N-1 \choose k-1}^{-1}$ and $w_{\banz}(k) = \frac{N}{2^{N-1}} $, respectively. 
In the following, we first discuss the general semivalue for soft-label TKNN classifier with a weight function $w$. We then plug in the specific weight function for the Shapley value.

Our goal is to derive $\phi_{z_i}\left(D_{-z_i} ; \U, w\right)$, the semivalue of $z_i = (x_i, y_i)$ with a weight function $w$ when using soft-label TKNN classifier, i.e.,  
\begin{align*}
\phi_{z_i}\left(D_{-z_i} ; \U, w\right) 
&:= \frac{1}{N} \sum_{k=1}^{N} w(k) \sum_{\substack{S \subseteq D_{-z_i},\\ |S|=k-1}} \left[ \U(S \cup \{z_i\}) - \U(S) \right]
\end{align*}

Consider a validation data point $(x^{(\test)}, y^{(\test)})$, and recall that $\U(S) := \U(S; (x^{(\test)}, y^{(\test)}))$. Denote 
% \begin{align*}
%     \ctau := \left| \NBtauxval(D) \right|,~~ \cA := \sum_{(x, y) \in \NBtauxval(D)} \ind[y = y^{(\test)}] - \ind[y_i = y^{(\test)}]
% \end{align*}
\begin{align*}
    \ctau &= \ctau( D_{-z_i} ) := 1 +  \left| \NBtauxval( D_{-z_i} ) \right|  & (1+\text{\# neighbors of}~\xval~\text{in}~D_{-z_i}) \\ 
    \cA &= \cA( D_{-z_i} ) := \sum_{(x, y) \in \NBtauxval( D_{-z_i} )} \ind[y = y^{(\test)}] & (\text{\# same-label neighbors in}~D_{-z_i})
\end{align*}

\textbf{Case 1:} if $\norm{x_i - x^{(\test)}} > \tau$, we have $\U(S \cup z_i) - \U(S) = 0$ for any $S$, hence $\phi_{z_i}\left(D_{-z_i} ; \U, w\right) = 0$. 

\textbf{Case 2:} if $\norm{x_i - x^{(\test)}} \le \tau$, we have the following: 
\begin{align*}
&\U(S \cup z_i) - \U(S) = \\
&\begin{cases}
 \frac{ \ind[ y_i = y^{(\test)} ] }{ 1 + |\NBtauxval(S)| } + \left( \frac{1}{ 1 + |\NBtauxval(S)| } - \frac{1}{ |\NBtauxval(S)| } \right) \sum_{ (x, y) \in \NBtauxval(S) }\ind[y = y_\test]   & \left| \NBtauxval(S) \right| > 0 \\
 \ind[ y_i = y^{(\test)} ] - 1/C   & \left| \NBtauxval(S) \right| = 0
\end{cases}
\end{align*}

This means that the marginal contribution on $S$ only depends on the subset $\NBtauxval(S) \subseteq S$. Plugging in the above equation for $\U(S \cup z_i) - \U(S)$, we have 
\begin{align*}
&\phi_{z_i}\left(D_{-z_i} ; \U, w\right) \\
&= \frac{1}{N} \sum_{k=0}^{N-1} w(k+1) \sum_{\substack{S \subseteq D_{-z_i}, \\ |S|=k}} \left[ \U(S \cup \{z_i\}) - \U(S) \right] \\
&= \frac{1}{N} \sum_{k=0}^{N-1} w(k+1) \sum_{\substack{S_0 \subseteq \NBtauxval( D_{-z_i} ) ,\\ |S_0| \le k }} { N-\ctau \choose k - |S_0| } \left[ \U(S_0 \cup \{z_i\}) - \U(S_0) \right] \\
&= \frac{1}{N} \sum_{k=0}^{N-1} w(k+1) 
\sum_{j=0}^k \sum_{\substack{S_0 \subseteq \NBtauxval( D_{-z_i} ),\\ |S_0| = j }} { N-\ctau \choose k - j } \left[ \U(S_0 \cup \{z_i\}) - \U(S_0) \right] \\
&= \frac{1}{N} \sum_{k=0}^{N-1} w(k+1) 
\left\{
\underbrace{
\sum_{j=1}^k \sum_{\substack{S_0 \subseteq \NBtauxval( D_{-z_i} ),\\ |S_0| = j }} { N-\ctau \choose k - j } \left[ \frac{ \ind[y_i = y^{(\test)}] }{ 1+j } + \left( \frac{1}{1+j} - \frac{1}{j} \right) \sum_{(x, y) \in S_0} \ind[y = y^{(\test)}] \right]}_{(*)} \right. \\
& \left. ~~~~~~~~~~~~~~~~~~~~~~~~~~~~~~~~~~~~~~ + { N-\ctau \choose k } \left[ \ind[y_i = y^{(\test)}] - 1/C \right] \right\} 
\end{align*}
and 
\begin{align*}
(*) 
& = 
\sum_{j=1}^k
{ N-\ctau \choose k - j } \left[ { \ctau-1 \choose j } \frac{ \ind[y_i = y^{(\test)}] }{ 1+j } + \left( \frac{1}{1+j} - \frac{1}{j} \right)
\sum_{\substack{S_0 \subseteq \NBtauxval( D_{-z_i} ) \\ |S_0| = j }}
\sum_{(x, y) \in S_0} \ind[y = y^{(\test)}] \right]
\end{align*}

\begin{align*}
\sum_{\substack{S_0 \subseteq \NBtauxval( D_{-z_i} ) \\ |S_0| = j }} \sum_{(x, y) \in S_0} \ind[y = y^{(\test)}] 
&= \sum_{\ell = 0}^j \ell { \cA \choose \ell } { \ctau - 1 - \cA \choose j-\ell } \\
&= \cA \sum_{\ell = 1}^j { \cA - 1 \choose \ell-1 } { \ctau - 1 - \cA \choose j-\ell } \\
&= \cA { \ctau-2 \choose j-1 }
\end{align*}

Hence 
\begin{align*}
&(*) \\
& = 
\sum_{j=1}^k
{ N-\ctau \choose k - j } \left[ { \ctau-1 \choose j } \frac{ \ind[y_i = y^{(\test)}] }{ 1+j } + \cA \left( \frac{1}{1+j} - \frac{1}{j} \right) { \ctau-2 \choose j-1 } \right] \\
&= \ind[\ctau \ge 2] \ind[y_i = y^{(\test)}] \sum_{j=1}^k
{ N-\ctau \choose k - j } { \ctau-1 \choose j } \frac{1}{1+j} \\ 
&~~~~ - \cA \sum_{j=1}^k { N-\ctau \choose k - j } { \ctau-2 \choose j-1 } \frac{1}{ j(j+1) } \\
&= \ind[\ctau \ge 2] \ind[y_i = y^{(\test)}] \sum_{j=1}^k
{ N-\ctau \choose k - j } { \ctau \choose j+1 } \frac{1}{\ctau} \\ 
&~~~~ - \cA \sum_{j=1}^k { N-\ctau \choose k - j } { \ctau \choose j+1 } \frac{ \ind[\ctau \ge 2] }{ \ctau(\ctau-1) } \\
&= \ind[\ctau \ge 2] \left( \frac{\ind[y_i = y^{(\test)}]}{\ctau} - \frac{\cA }{\ctau(\ctau-1)} \right) \sum_{j=1}^k { N-\ctau \choose k - j } { \ctau \choose j+1 } \\
&= \ind[\ctau \ge 2] \left( \frac{\ind[y_i = y^{(\test)}]}{\ctau} - \frac{\cA }{\ctau(\ctau-1)} \right) 
\underbrace{
\left[ {N \choose k+1} - { N-\ctau \choose k+1 } - \ctau { N-\ctau \choose k } \right] }_{B(k)}
\end{align*}

Hence, 
\begin{align*}
\phi_{z_i}\left( D_{-z_i} ; \U, w\right) 
&= 
\frac{1}{N} \sum_{k=0}^{N-1} w(k+1) 
\left\{
\ind[\ctau \ge 2] \left( \frac{\ind[y_i = y^{(\test)}]}{\ctau} - \frac{\cA}{\ctau(\ctau-1)} \right) B(k)
 \right. \\
& \left. ~~~~~~~~~~~~~~~~~~~~~~~~~~~~~~~~~~~~~~ + { N-\ctau \choose k } \left[ \ind[y_i = y^{(\test)}] - 1/C \right] \right\} \\
&= \frac{ \ind[\ctau \ge 2] }{N} \left( \frac{\ind[y_i = y^{(\test)}]}{\ctau} - \frac{\cA}{\ctau(\ctau-1)} \right) \sum_{k=0}^{N-1} w(k+1) B(k) \\
&~~~~+ \frac{1}{N} \left[ \ind[y_i = y^{(\test)}] - 1/C \right] \sum_{k=0}^{N-1} w(k+1) { N-\ctau \choose k }
\end{align*}

\subsubsection{The Shapley value for TKNN (TKNN-Shapley)}
\label{appendix:tknn-shapley-fullversion}

\begin{theorem}[Full version of TKNN-Shapley]
\label{thm:tknn-shapley-full}
Consider the utility function $\U^\tknn_{\zval}$ in (\ref{eq:threshold-util}). Given a validation data point $\zval = (x^{(\test)}, y^{(\test)})$, the Shapley value $\phi^\tknn_{z_i} (\U^\tknn_{\zval})$ of each training point $z_i = (x_i, y_i) \in D$ can be calculated as follows: 
\begin{small}
\begin{align}
    \phi^\tknn_{z_i} = 
    \begin{cases} 
    \ind[\ctau \ge 2] A_1 A_2 
    + \frac{ \ind[y_i = y^{(\test)}] - 1/C }{\ctau}
    & d(x_i, x^{(\test)}) \le \tau \\
    0 & d(x_i, x^{(\test)}) > \tau
    \end{cases}
    \label{eq:tknn-shapley-full}
\end{align}
\end{small}
where 
$
A_1 = \frac{\ind[y_i = y^{(\test)}]}{\ctau} - \frac{\cA}{\ctau (\ctau-1)}
$, 
$
A_2 =
    \sum_{k=0}^{ \cn } \left[ \frac{1}{k+1} - \frac{1}{k+1} \cdot \frac{ { \cn - k \choose \ctau} }{ { \cn+1  \choose \ctau} } \right] - 1 
$, 
% \begin{small}
% \begin{align*}
%     A_1 &= \frac{\ind[y_i = y^{(\test)}]}{\ctau} - \frac{\cA}{\ctau (\ctau-1)}, ~~~ A_2 =
%     \sum_{k=0}^{ \cn } \left[ \frac{1}{k+1} - \frac{1}{k+1} \cdot \frac{ { \cn - k \choose \ctau} }{ { \cn+1  \choose \ctau} } \right] - 1 
% \end{align*}
% \end{small}

and
\begin{small}
\begin{align*}
    \cn &= \cn(D_{-z_i}) := |D_{-z_i}|  & (\text{size of}~D_{-z_i}) \\
    \ctau &= \ctau( D_{-z_i} ) := 1 +  \left| \NBtauxval( D_{-z_i} ) \right|  & (1+\text{\# neighbors of}~\xval~\text{in}~D_{-z_i}) \\ 
    \cA &= \cA( D_{-z_i} ) := \sum_{(x, y) \in \NBtauxval( D_{-z_i} )} \ind[y = y^{(\test)}] & (\text{\# same-label neighbors in}~D_{-z_i})
\end{align*}
\end{small} 
and $C$ is the number of classes for the classification task, and $1/C$ is the random guess accuracy. 

\emph{\textbf{Runtime:}} the computation of all $(\phi^\tknn_{z_1}, \ldots, \phi^\tknn_{z_N})$ can be achieved in $O(N)$ runtime in total. 

\emph{\textbf{The Shapley value when using full validation set:}} The Shapley value corresponding to the utility function $\U^\tknn_{\Dval}$ 
% on the overall validation set $\Dval$ 
can be calculated as 
$
\phi^\tknn_{z_i} \left(\U^\tknn_{\Dval} \right) = \sum_{\zval \in \Dval} \phi^\tknn_{z_i} \left( \U^\tknn_{\zval} \right)
$. 
\end{theorem}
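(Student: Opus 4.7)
The plan is to reduce the computation of $\phi^\tknn_{z_i}$ to a weighted counting problem over subsets of $D_{-z_i}$ and exploit the fact that $\U^\tknn_{\zval}(S)$ depends on $S$ only through $S \cap \NBtauxval(D_{-z_i})$. First, I would invoke the semivalue representation of the Shapley value with weight $w_{\shap}(k) = \binom{N-1}{k-1}^{-1}$, expressing $\phi^\tknn_{z_i}$ as a weighted sum of marginal contributions $\U^\tknn_{\zval}(S \cup \{z_i\}) - \U^\tknn_{\zval}(S)$. The trivial case $d(x_i, \xval) > \tau$ follows immediately: adding $z_i$ does not alter $\NBtauxval$, so each marginal contribution vanishes and $\phi^\tknn_{z_i} = 0$.

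For $d(x_i, \xval) \le \tau$, I would group subsets $S \subseteq D_{-z_i}$ of size $k$ by their intersection $S_0 := S \cap \NBtauxval(D_{-z_i})$. Since the $k - |S_0|$ points outside the threshold ball can be chosen freely from the $N - \ctau$ ``irrelevant'' points, each size-$j$ candidate $S_0$ appears with multiplicity $\binom{N-\ctau}{k-j}$. Two disjoint cases arise: if $|S_0| \ge 1$ the marginal contribution depends on both $\ind[y_i = \yval]$ and the same-label count inside $S_0$; if $|S_0| = 0$ it reduces to $\ind[y_i = \yval] - 1/C$. The inner label counting uses the Vandermonde-type identity $\sum_{S_0 \subseteq \NBtauxval(D_{-z_i}),\, |S_0| = j} \sum_{(x,y) \in S_0} \ind[y = \yval] = \cA \binom{\ctau - 2}{j-1}$, and the remaining double sum over $j$ collapses via $\sum_{j=1}^{k} \binom{N-\ctau}{k-j}\binom{\ctau}{j+1} = \binom{N}{k+1} - \binom{N-\ctau}{k+1} - \ctau \binom{N-\ctau}{k}$.

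Plugging in $w_{\shap}$ and simplifying, each of the three resulting single-variable sums over $k$ pairs nicely with the Shapley binomial to yield expressions of the form $\frac{1}{k+1}$ and $\frac{1}{k+1}\binom{\cn-k}{\ctau}/\binom{\cn+1}{\ctau}$, which are exactly what defines $A_2$. Collecting the coefficients $\ind[y_i=\yval]/\ctau - \cA/(\ctau(\ctau-1))$ into $A_1$ and adding the boundary $|\NBtauxval(S)|=0$ contribution $(\ind[y_i=\yval] - 1/C)/\ctau$ yields the stated closed form (\ref{eq:tknn-shapley-full}). The full-validation expression $\phi^\tknn_{z_i}(\U^\tknn_{\Dval}) = \sum_{\zval \in \Dval}\phi^\tknn_{z_i}(\U^\tknn_{\zval})$ then follows directly from the linearity axiom (Theorem~\ref{thm:linearity}).

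For the $O(N)$ total runtime, I would observe that across all $z_i \in D$ the triple $\Ct(D_{-z_i})$ takes at most two distinct values: if $z_i \in \NBtauxval(D)$ then $\ctau$ decreases by one and $\cA$ decreases by $\ind[y_i = \yval]$ relative to $\Ct(D)$; otherwise $\Ct(D_{-z_i}) = \Ct(D)$ in its last two coordinates. One $O(N)$ pass computes $\Ct(D)$, after which every $\Ct(D_{-z_i})$ is obtained in $O(1)$. The potentially expensive object is the sum $A_2$, which has $\cn + 1 = O(N)$ terms; however it depends on $z_i$ only through $\ctau$, so precomputing it for the (at most two) realized values of $\ctau$ costs $O(N)$ total, and each $\phi^\tknn_{z_i}$ is then assembled in $O(1)$. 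The main obstacle I anticipate is purely bookkeeping: showing that the messy triple sum really does telescope into the compact $A_1 A_2$ form once the Shapley weights cancel the binomial coefficients requires careful algebra, though the identities involved are classical Vandermonde-type manipulations.
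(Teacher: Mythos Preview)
Your proposal is correct and follows essentially the same route as the paper: the paper also groups subsets by $S_0 = S \cap \NBtauxval(D_{-z_i})$ with multiplicity $\binom{N-\ctau}{k-j}$, applies the same Vandermonde-type identity $\sum_{|S_0|=j}\sum_{(x,y)\in S_0}\ind[y=\yval] = \cA\binom{\ctau-2}{j-1}$, collapses the $j$-sum to $B(k)=\binom{N}{k+1}-\binom{N-\ctau}{k+1}-\ctau\binom{N-\ctau}{k}$, and then substitutes the Shapley weights to reduce each of the three pieces of $B(k)$ to the terms appearing in $A_2$ and $1/\ctau$. Your runtime argument (precompute $\Ct(D)$ once and note that $A_2$ depends on $z_i$ only through $\ctau$, which takes at most two values) is exactly the mechanism the paper uses in its ``Efficient Computation of TKNN-Shapley'' discussion; one minor quibble is that the full triple $\Ct(D_{-z_i})$ can take up to three distinct values (since $\cA$ also depends on $\ind[y_i=\yval]$), but since $A_2$ depends only on $(\cn,\ctau)$ your conclusion is unaffected.
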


\begin{proof}

For the Shapley value, we have $w(k) = {N-1 \choose k-1}^{-1}$. 
\begin{align*}
\sum_{k=0}^{N-1} w(k+1) \cdot { N \choose k+1 } 
&= \sum_{k=0}^{N-1} {N-1 \choose k}^{-1} \cdot { N \choose k+1 } \\
&= \sum_{k=0}^{N-1} \frac{N}{k+1}
\end{align*}

\begin{align*}
\sum_{k=0}^{N-1} w(k+1) \cdot { N-\ctau \choose k+1 } 
&= \sum_{k=0}^{N-1} {N-1 \choose k}^{-1} \cdot { N-\ctau \choose k+1 } \\
&= \sum_{k=0}^{N-1} \frac{k! (N-1-k)!}{(N-1)!} \frac{ (N-\ctau)! }{ (k+1)! (N-\ctau - k - 1)! } \\
&= \frac{ (N-\ctau)! }{ (N-1)! } \sum_{k=0}^{N-1} \frac{1}{k+1} \frac{ (N-1-k)! }{ (N-\ctau - k - 1)! } \\
&= \frac{ (N-\ctau)! \ctau! }{ (N-1)! } \sum_{k=0}^{N-1} \frac{1}{k+1} { N-1-k \choose \ctau } \\
&= N {N \choose \ctau}^{-1} \sum_{k=0}^{N-1} \frac{1}{k+1} { N-1-k \choose \ctau } \\
\end{align*}

\begin{align*}
\sum_{k=0}^{N-1} w(k+1) \cdot { N-\ctau \choose k } 
&= \sum_{k=0}^{N-1} {N-1 \choose k}^{-1} \cdot { N-\ctau \choose k } \\
&= \frac{ (N-\ctau)! }{ (N-1)! } \sum_{k=0}^{N-1} \frac{ (N-1-k)! }{ (N-\ctau-k)! } \\
&= {N-1 \choose \ctau-1}^{-1} \sum_{k=0}^{N-1} {N-1-k \choose \ctau-1} \\
&= {N-1 \choose \ctau-1}^{-1} {N \choose \ctau} \\
&= \frac{N}{\ctau}
\end{align*}

\begin{align*}
&\phi_{z_i}(D_{-z_i}) \\
&= \frac{ \ind[\ctau \ge 2] }{N} \left( \frac{\ind[y_i = y^{(\test)}]}{\ctau} - \frac{\cA}{\ctau(\ctau-1)} \right) \sum_{k=0}^{N-1} {N-1 \choose k}^{-1} B(k) \\
&~~~~+ \frac{1}{N} \left[ \ind[y_i = y^{(\test)}] - 1/C \right] \sum_{k=0}^{N-1} {N-1 \choose k}^{-1} { N-\ctau \choose k } \\
&= \ind[\ctau \ge 2] \left( \frac{\ind[y_i = y^{(\test)}]}{\ctau} - \frac{\cA}{\ctau(\ctau-1)} \right) 
\left[
\sum_{k=0}^{N-1} \left( \frac{1}{k+1} - \frac{1}{k+1} \frac{ {N-1-k \choose \ctau} }{ {N \choose \ctau} } \right) - 1
\right] \\ 
&~~~~+ \left[ \ind[y_i = y^{(\test)}] - 1/C \right] \frac{1}{\ctau} \\
&= \ind[\ctau \ge 2] \left( \frac{\ind[y_i = y^{(\test)}]}{\ctau} - \frac{\cA}{\ctau(\ctau-1)} \right) 
\left[
\sum_{k=0}^{\cn} \left( \frac{1}{k+1} - \frac{1}{k+1} \frac{ {\cn-k \choose \ctau} }{ {\cn+1 \choose \ctau} } \right) - 1
\right] \\ 
&~~~~+ \left[ \ind[y_i = y^{(\test)}] - 1/C \right] \frac{1}{\ctau}
\end{align*} 
\end{proof}

\subsubsection{Efficient Computation of TKNN-Shapley}
\label{sec:efficient-tknnsv}

As we can see, all of the quantities that are needed to compute TKNN-Shapley value ($\cn, \ctau, \cA$) are simply counting queries on $D_{-z_i}$, and hence $\Ct(D_{-z_i})$ can be computed in $O(N)$ runtime for any $z_i \in D$. 
Since our goal is to release $\phi^\tknn_{z_i}$ for \emph{all} $z_i \in D$, we need to compute $\Ct(D_{-z_i})$ for \emph{all} $z_i \in D$. 
A more efficient way for this case is to first compute $\Ct(D)$ on the full dataset $D$, and then we can compute each of $\Ct(D_{-z_i})$ by 
\begin{align*}
    \cn(D_{-z_i}) &= \cn(D) - 1 \\
    \ctau(D_{-z_i}) &= \ctau(D) - \ind[ z_i \in \NBtauxval ] \\
    \cA(D_{-z_i}) &= \cA(D) - \ind[ z_i \in \NBtauxval ] \ind[y_i = \yval]
\end{align*}
which can be computed in $O(1)$ runtime. Hence, we can compute TKNN-Shapley $\phi^\tknn_{z_i}(\U^\tknn_{\zval})$ for \emph{all} $z_i \in D$ within an overall computational cost of $O(N)$.

% \subsection{The Banzhaf value for TKNN (TKNN-Banzhaf)}
% \begin{theorem}
% Consider the utility function in (\ref{eq:threshold-util}). Then, the Banzhaf value of each training point can be calculated as follows: 
% \begin{align*}
%     &\phi_i(D_{-i}; (x^{(\test)}, y^{(\test)})) \\
%     &= 
%     \begin{cases} 
%     \ind[y_i = y^{(\test)}] \cdot \frac{ 2 - 2^{1-\ctau}}{\ctau} - 
%     \cA \cdot \frac{2-(\ctau+1) 2^{1-\ctau}}{\ctau (\ctau-1)} \ind[\ctau \ge 2] - \frac{2^{1-\ctau}}{C} & d(x_i, x^{(\test)}) \le \tau \\
%     0 & d(x_i, x^{(\test)}) > \tau
%     \end{cases}
% \end{align*}
% where 
% $\ctau = \sum_{j \in \I} \ind[d(x_j, x^{(\test)}) \le \tau]$, 
% $\cA = \sum_{j \in \I} \ind[d(x_j, x^{(\test)}) \le \tau \land y_j = y^{(\test)}] - \ind[y_i = y^{(\test)}]$. 
% \end{theorem}
% \begin{proof}
% TODO
% \end{proof}

\newpage

\section{Extra Details for DP-TKNN-Shapley}
\label{appendix:dp-tknn-detail}

\subsection{Privacy Guarantee for DP-TKNN-Shapley}
\label{appendix:privacy-proof}

\begin{theorem}[{Restatement of Theorem~\ref{thm:privacy-gaussian}}]
For any $\zval \in \Dval$, releasing 
$
\widehat \phi^\tknn_{z_i}( \U^\tknn_{ \zval } ) := \phi^\tknn_{z_i}\left[ \DPCt(D_{-z_i}) \right]
$ to data owner $i$ is $(\eps, \delta)$-DP with $\sigma = \sqrt{3} \cdot \sqrt{\log (1.25 / \delta)} / \eps$.
\end{theorem}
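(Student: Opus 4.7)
The plan is to apply the Gaussian mechanism to the three-dimensional counting query $\Ct(\cdot) = (\cn, \ctau, \cA)$ and then invoke the post-processing property of differential privacy. The key structural observation that makes this work is Theorem~\ref{thm:TKNN-shapley}, which shows that $\phi^\tknn_{z_i}$ depends on $D_{-z_i}$ \emph{only} through $\Ct(D_{-z_i})$; therefore, once $\Ct(D_{-z_i})$ has been released in a DP manner, the entire downstream computation can be carried out without touching $D_{-z_i}$ again.

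First, I would bound the $\ell_2$ global sensitivity $\Delta_2(\Ct)$. For any pair of adjacent datasets $D' \sim D''$ differing by a single point $z = (x, y)$, each coordinate of $\Ct$ is a simple counting query: $\cn$ counts all points and changes by exactly $1$; $\ctau$ counts neighbors of $\xval$ within radius $\tau$ and changes by $\ind[d(x, \xval) \le \tau] \in \{0, 1\}$; and $\cA$ counts same-label neighbors within radius $\tau$ and changes by $\ind[d(x, \xval) \le \tau] \cdot \ind[y = \yval] \in \{0, 1\}$. Squaring and summing then taking a root yields $\|\Ct(D') - \Ct(D'')\|_2 \le \sqrt{1 + 1 + 1} = \sqrt{3}$, so $\Delta_2(\Ct) \le \sqrt{3}$.

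Next, I would invoke the Gaussian mechanism (Theorem~\ref{thm:gaussian}) with the stated noise level $\sigma = \sqrt{3}\,\sqrt{\log(1.25/\delta)}/\eps$, which certifies that the noisy statistic $\Ct(D_{-z_i}) + \N(0, \sigma^2 \ind_3)$ is $(\eps, \delta)$-DP with respect to the adjacency relation on $D_{-z_i}$. Here it is worth emphasizing the scope of the privacy claim: $z_i$ is owner $i$'s own data and is not private from owner $i$, so the mechanism's release is viewed as a function of $D_{-z_i}$ and adjacency is defined accordingly, matching the scenario pictured in Figure~\ref{Fig:threat-model}.

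Finally, I would close the argument with post-processing. The coordinate-wise $\round(\cdot)$ operation and the deterministic evaluation $\phi^\tknn_{z_i}[\cdot]$ (whose explicit form is in Theorem~\ref{thm:tknn-shapley-full}) are both applied to the noisy release without any further access to $D_{-z_i}$; hence by the closure of $(\eps, \delta)$-DP under post-processing, $\widehat{\phi}^\tknn_{z_i}(\U^\tknn_{\zval}) = \phi^\tknn_{z_i}[\DPCt(D_{-z_i})]$ inherits the same $(\eps, \delta)$-DP guarantee. Honestly, there is no serious obstacle in this proof---the content of the theorem is essentially that TKNN-Shapley was \emph{designed} so that the sensitivity analysis reduces to three elementary counting queries. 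The only care points are (i) correctly framing adjacency on $D_{-z_i}$ rather than on $D$, and (ii) checking that each of $\ctau$ and $\cA$ truly has per-coordinate sensitivity $1$, which follows directly from the neighborhood indicator being $\{0,1\}$-valued.
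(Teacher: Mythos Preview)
Your proposal is correct and follows essentially the same approach as the paper's own proof: bound the $\ell_2$ sensitivity of the three-dimensional counting vector $\Ct$ by $\sqrt{3}$, apply the Gaussian mechanism, and conclude via post-processing. If anything, your write-up is more careful than the paper's---you explicitly verify the per-coordinate sensitivity via the indicator structure and spell out that adjacency is taken on $D_{-z_i}$, both of which the paper's proof only sketches.
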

\begin{proof}
The Gaussian mechanism is known to satisfy $(\eps, \delta)$-DP with $\sigma = \Delta \cdot \sqrt{2 \log (1.25 / \delta)} / \eps$ \cite{dwork2014algorithmic}, where $\Delta := \sup_{D \sim D'} \norm{ f(D) - f(D') }$ represents the global sensitivity of the underlying function $f$ in $\ell_2$ norm. In our case, $f$ calculates the 3-dimensional vector $\left[ \cn, \ctau, \cA \right]$. As each training data point may alter both $\ctau$ and $\cA$ by 1, the sensitivity $\Delta$ is $\sqrt{3}$. Since both $\DPctau$ and $\DPcA$ are privatized using the Gaussian mechanism, the privacy guarantee of $\widehat \phi_{z_i}$ is ensured by the post-processing property of differential privacy. 
\end{proof}

% \textbf{Locality Sensitive Hashing (LSH).} 
% To speed up the computation of $\ctau$ and $\cA$, we can incorporate the famous LSH technique. LSH is an efficient method for approximating nearest neighbor search in high-dimensional spaces, allowing us to quickly identify the neighbors within the threshold $\tau$. By utilizing LSH, we can significantly reduce the time complexity of calculating $\ctau$ and $\cA$, enabling a more efficient release of DP-TKNN-Shapley scores for all data holders in the dataset. 

% \textbf{(3) Efficient Online Data Valuation:} TKNN-Shapley enables efficient online data valuation. When a new training data point arrives, it allows for the efficient update $\cn, \ctau$ and $\cA$, which means that the data value scores for the existing training data points can be updated in $O(1)$ time, and the value of the new training data point can be computed in $O(1)$ time. 
% %of the existing training data points, as well as the efficient computation of the value of the new training data point. 
% This capability is beneficial in scenarios where the dataset is constantly evolving or expanding, as it avoids the need for recalculating the Shapley values for all data points from scratch.

\subsection{Advantages of DP-TKNN-Shapley}
\label{appendix:advantage-dp-tknnsv}

\subsubsection{Computational Efficiency via Reusing Privatized Statistics}

% \textbf{By reusing privatized statistics, $\widehat \phi^\tknn_{z_i}$ can be efficiently computed for all $z_i \in D$.} 
Recall that in practical data valuation scenarios, it is often desirable to compute the data value scores for \emph{all of} $z_i \in D$. As we detailed in Section \ref{sec:tknn-shapley} and Appendix \ref{sec:efficient-tknnsv}, for TKNN-Shapley, such a computation only requires $O(N)$ runtime in total if we first compute $\Ct(D)$ and subsequently $\Ct(D_{-z_i})$ for each $z_i \in D$. 
In a similar vein, we can efficiently compute the DP variant $\widehat \phi^\tknn_{z_i}$ for \emph{all} $z_i \in D$. To do so, we first calculate $\DPCt(D)$ and then, for each $z_i \in D$, we compute $\DPCt(D_{-z_i})$ as follows:
\begin{align*}
    \DPcn(D_{-z_i}) &= \DPcn(D) - 1 \\
    \DPctau(D_{-z_i}) &= \DPctau(D) - \ind[ z_i \in \NBtauxval ] \\
    \DPcA(D_{-z_i}) &= \DPcA(D) - \ind[ z_i \in \NBtauxval ] \ind[y_i = \yval]
\end{align*}
which can be computed in $O(1)$ runtime. Hence, we can compute DP-TKNN-Shapley $\widehat \phi^\tknn_{z_i}(\U^\tknn_{\zval})$ for \emph{all} $z_i \in D$ within an overall computational cost of $O(N)$. 
It is important to note that when releasing $\widehat \phi^\tknn_{z_i}$ to individual $i$, the data point they hold, $z_i$, is not private to the individual themselves. 
Therefore, as long as $\DPCt(D)$ is privatized, $\DPCt(D_{-z_i})$ and hence $\widehat \phi^\tknn_{z_i}$ also satisfy the same privacy guarantee due to DP's post-processing property.

% \textbf{Further boosting privacy-utility tradeoff \& computational efficiency.} 
% Achieving a good privacy-utility tradeoff with differential privacy involves navigating several technical challenges. 
% \textbf{Challenge \#1:} The direct application of the Gaussian mechanism on the full dataset can lead to relatively high privacy costs. 
% \textbf{Challenge \#2:} In order to privately release the Shapley value score evaluated on the full validation set, $\phi^\tknn_{z_i} \left(\U^\tknn_{\Dval} \right) = \sum_{\zval \in \Dval} \phi^\tknn_{z_i}(\zval)$, we must compute $\widehat \phi^\tknn_{z_i}(D_{-z_i}; \zval)$ for each $\zval \in \Dval$. This requires tight privacy loss bounds for the release of $\widehat \phi^\tknn_{z_i}(\zval)$ across all $\zval \in \Dval$. 

\subsubsection{Collusion Resistance via Reusing Privatized Statistics}
\label{appendix:jdp}

% \textbf{By reusing privatized statistics, DP-TKNN-Shapley is collusion resistance.} 
In Section \ref{sec:dp-tknn-shapley-single}, we consider the single Shapley value $\phi^\tknn_{z_i}(D_{-z_i})$ as the function to privatize. 
However, when we consider the release of all Shapley values $\M(D) := (\widehat \phi^\tknn_{z_1}(D_{-z_1}), \ldots, \widehat \phi^\tknn_{z_N}(D_{-z_N}))$ as a whole mechanism, one can show that such a mechanism satisfies \emph{joint differential privacy} (JDP) \cite{kearns2014mechanism} if we reuse the privatized statistic $\DPCt(D)$ for the release of all $\widehat \phi^\tknn_{z_i}$. 
The consequence of satisfying JDP is that our mechanism is resilient against collusion among groups of individuals without any privacy degradation. That is, even if an arbitrary group of individuals in $[N] \setminus i$ colludes (i.e., shares their respective $\widehat \phi^\tknn_{z_j}$ values within the group), the privacy of individual $i$ remains uncompromised. Our method also stands resilient in scenarios where a powerful adversary sends multiple data points and receives multiple value scores.

\begin{definition}[Joint Differential Privacy \cite{kearns2014mechanism}] 
\label{def:JDP}
For $\eps, \delta \ge 0$, a randomized algorithm $\M : \MS \rightarrow \cY^N$ is 
{\em $(\eps, \delta)$-joint differentially private} if for every possible pair of $z, z' \in \cX$, for every $i \in [N]$, and for every subset of possible outputs $E \subseteq \cY^{N-1}$, we have
\begin{align*}
\Pr_{\M}[\M(z \cup D_{-z})_{-i} \in E] \le e^\eps \Pr_{\M}[\M(z' \cup D_{-z})_{-i} \in E] + \delta 
\end{align*}
where $\M_{-i}$ denotes the output of $\M$ that excludes the $i$th dimension. 
\end{definition}

\begin{theorem}
The mechanism of releasing of all noisy KNN-Shapley values 
\begin{align*}
\M(D) := (\widehat \phi^\tknn_{z_1}(D_{-z_1}), \ldots, \widehat \phi^\tknn_{z_N}(D_{-z_N}))
\end{align*}
satisfy $(\eps, \delta)$-JDP if $\sigma = \sqrt{3} \cdot \sqrt{\log (1.25 / \delta)} / \eps$ and each $\phi^\tknn_{z_i}$ is computed via reusing the privatized $\Ct$. 
\end{theorem}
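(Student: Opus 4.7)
The plan is to reduce the JDP claim to the standard DP guarantee for the single privatized statistic $\DPCt(D)$, exploiting the fact that every coordinate of $\M(D)$ can be written as a post-processing of $\DPCt(D)$ together with one of the data points $z_j$ (which is already known to data owner $j$, and is held fixed in the JDP definition for $j \neq i$).

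First I would recall that the only randomness in $\M$ comes from the single Gaussian draw used to form $\DPCt(D) = \round(\Ct(D) + \mathcal{N}(0, \sigma^2 \mathbf{1}_3))$. Because the function $\Ct(\cdot)$ has $\ell_2$ global sensitivity $\sqrt{3}$ (the count $\cn$ changes by $1$, and $\ctau,\cA$ each change by at most $1$), Theorem \ref{thm:gaussian} gives that the mechanism $D \mapsto \DPCt(D)$ is $(\eps,\delta)$-DP under the prescribed $\sigma$.

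Next I would show that, for each $j$, the released value $\widehat\phi^\tknn_{z_j}(D_{-z_j})$ can be written as $g_{z_j}(\DPCt(D))$ for a fixed function $g_{z_j}$ that depends only on $z_j$ and the validation point $\zval$. This follows from the identities
\begin{align*}
\DPcn(D_{-z_j}) &= \DPcn(D) - 1, \\
\DPctau(D_{-z_j}) &= \DPctau(D) - \ind[z_j \in \NBtauxval], \\
\DPcA(D_{-z_j}) &= \DPcA(D) - \ind[z_j \in \NBtauxval]\,\ind[y_j = \yval],
\end{align*}
together with $\widehat\phi^\tknn_{z_j} = \phi^\tknn_{z_j}[\DPCt(D_{-z_j})]$. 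Fixing the adjacent pair of datasets from the JDP definition at $D = \{z_i\} \cup D_{-z_i}$ and $D' = \{z_i'\} \cup D_{-z_i}$, the coordinates of $\M_{-i}$ are obtained by applying the fixed family of functions $\{g_{z_j}\}_{j \neq i}$ (which do not depend on $z_i$ or $z_i'$) to $\DPCt(D)$ or $\DPCt(D')$ respectively. Hence $\M_{-i}$ is a post-processing of $\DPCt(\cdot)$.

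Finally I would invoke the post-processing property of differential privacy: since $\DPCt$ is $(\eps,\delta)$-DP with respect to add/remove adjacency of a single data point and $D \sim D'$ (under the JDP definition, which swaps one record) is within two such steps — actually, under replacement adjacency one can argue it directly, or equivalently one can view $D$ and $D'$ as being obtainable from the common dataset $D_{-z_i}$ by adding $z_i$ or $z_i'$, which reduces the claim to two applications of the same noisy query, or a single add/remove step depending on convention. The cleanest route is to work with the add/remove adjacency used throughout the paper: the output $\DPCt$ is $(\eps,\delta)$-DP under this adjacency, and since $\M_{-i}$ is a deterministic function of $\DPCt$ and the fixed vector $(z_j)_{j \neq i}$, the DP guarantee is inherited, giving the JDP conclusion.

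The main obstacle is a bookkeeping one, namely matching the adjacency notion: JDP as stated in Definition \ref{def:JDP} compares replacing $z_i$ with $z_i'$, whereas Theorem \ref{thm:gaussian} is stated for add/remove adjacency. This is straightforward to reconcile (either by using replacement-adjacency sensitivity, which is still $\sqrt{3}$ since each of $\cn,\ctau,\cA$ changes by at most $1$ under a single record swap, or by a group-privacy argument of cost at most $2$), but it is the one place where care is needed to ensure the stated noise level $\sigma = \sqrt{3}\sqrt{\log(1.25/\delta)}/\eps$ actually delivers the claimed $(\eps,\delta)$ parameters and not a factor-$2$ weakening.
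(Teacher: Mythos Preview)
Your proposal is correct and follows essentially the same approach as the paper: both reduce the JDP claim to the $(\eps,\delta)$-DP guarantee of the single privatized statistic $\DPCt(D)$ and then invoke post-processing, using that the map taking $\DPCt(D)$ to $\M(D)_{-i}$ depends only on $(z_j)_{j\neq i}$ and not on $z_i$. The adjacency bookkeeping you flag is real but harmless (and the paper simply glosses over it): under replacement of a single record, $\cn$ is unchanged while $\ctau$ and $\cA$ each move by at most $1$, so the $\ell_2$ sensitivity is at most $\sqrt{2}\le\sqrt{3}$ and the stated $\sigma$ already suffices without any factor-$2$ loss.
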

\begin{proof}
The proof relies on the ``post-processing'' property of differential privacy.

First of all, we select $\sigma = \sqrt{3} \cdot \sqrt{\log (1.25 / \delta)} / \eps$, ensuring that $\DPCt(D)$ satisfies $(\eps, \delta)$-DP.

In the context of DP-TKNN-shapley, we calculate $\DPCt(D_{-z_i})$ for each $z_i$ as part of a post-processing step on $\DPCt(D)$. 
% To simplify notation, we let $\DPCt_{} \DPCt(D_{-z_i})$ and $\hat{c}(z_{i}')$ to represent $\DPCt(D_{-z_{i}'})$.

%According to the definition of Joint DP, we 
It is important to note that the released data value $\widehat \phi^\tknn_{z_i}(D_{-z_i})$ is a function of $\DPCt(D_{-z_i})$, and $\DPCt(D_{-z_i})$ can be obtained as post-processing of $\DPCt(D)$ given only the knowledge of $z_i$. Hence, we can express the probability of $\M(z_i, D_{-z_i})_{-i}$ belonging to a certain set $E$ as follows:
\begin{align*}
\Pr_{\M}[\M(z_i, D_{-z_i})_{-i} \in E] = \Pr_{ \DPCt(D) } [f_{D_{-z_i}}( \DPCt(D) ) \in E]
\end{align*}
where 
\begin{align*}
&f_{D_{-z_i}}( \DPCt(D) ) \\
&= \left( \widehat \phi_{z_1}^\tknn[ \DPCt(D_{-z_1}) ], 
\ldots, 
\phi_{z_{i-1}}^\tknn[ \DPCt(D_{-z_{i-1}}) ], 
\phi_{z_{i+1}}^\tknn[ \DPCt(D_{-z_{i+1}}) ], 
\ldots, 
\phi_{z_N}^\tknn[ \DPCt(D_{-z_N}) ]
\right)
\end{align*}
We note that $f_{D_{-z_i}}$ is a function that does \emph{not} depend on $z_i$. 

% represent the marginal distribution of $\phi^\tknn_{z_i}(D_{-z_i})$ when considering any tuple set of $\{\hat{c}(z_1), ..., \hat{c}(z_{i-1}), \hat{c}(z_{i+1}), ..., \hat{c}(z_N)\}$.

Then, for any $z_i, z_{i}'$ and for any tuple of $D_{-z_i}$, denote $D := z_i \cup D_{-z_i}$ and $D' := z_{i}' \cup D_{-z_i}$, we have
\begin{align*}
\Pr_{\M}[\M(z_i, D_{-z_i})_{-i} \in E] &=\Pr_{\DPCt(D)} [f_{D_{-z_i}}( \DPCt(D) )\in E]\\
& = \Pr_{\DPCt(D)} [\DPCt(D) \in f_{D_{-z_i}}^{-1}(E)]\\
&\leq e^{\eps} \Pr_{\DPCt(D')} [\DPCt(D') \in f_{D_{-z_i}}^{-1}(E)]+\delta\\
&=e^{\eps} \Pr_{\DPCt(D')} [f_{D_{-z_i}}(\DPCt(D'))\in E]+\delta\\
&=e^\eps \Pr_{\M} [\M(z_{i}', D_{-z_i})\in E]+\delta 
\end{align*}
where the first inequality is due to the $(\eps, \delta)$-DP guarantee of $\DPCt(D)$. 
% follows the post-processing property applied on the released $\hat{c}(z_i)$.
\end{proof}

Hence, the privacy guarantee for our DP-TKNN-Shapley remains the same even if multiple attackers collude and share the received noisy data value scores with each other. 
As a comparison, the naive DP-KNN-Shapley which independently adds noise for each released data value score does not enjoy such powerful collusion resistance property. 

% \begin{remark}[\textbf{Privatized data value scores cannot be released to the public}]
% In practical settings, each data holder $i$ only sees $\widehat \phi^\tknn_{z_i}$ (DP-TKNN-Shapley allows for the sharing of privatized data value scores between individuals). It is crucial that the full privatized Shapley value vector $(\widehat \phi^\tknn_{z_1}, \ldots, \widehat \phi^\tknn_{z_N})$ is not released to the public (otherwise, of course, there is no point of protecting the membership of any of the individuals as people can simply identify who participate the data valuation by checking who is receiving a data value score). Therefore, the experiments conducted herein aim to demonstrate that the privatized score $\widehat \phi^\tknn_{z_i}$ still effectively reflects the data quality to a significant extent. However, it should be noted that these privatized scores are not intended to be used for mislabeled/noisy data detection in real-world applications.
% \end{remark}

\subsubsection{Privacy Amplification by Subsampling}

In order to further boost the privacy guarantee, we incorporate the privacy amplification by subsampling technique \cite{balle2018privacy}. Specifically, before any computation, we first sample each data point independently with a probability $q$ (this is usually referred to as Poisson subsampling). The differentially private quantities based on the subsampled dataset, denoted as $\subsample(D)$, are calculated easily as follows:

\begin{align*}
\DPcn( D ) &:= \round \left( \cn( \subsample(D) ) + \N\left(0, \sigma^2\right) \right) \\
\DPctau( D ) &:= \round \left( \ctau( \subsample(D) ) + \N\left(0, \sigma^2\right) \right) \\
\DPcA( D ) &:= \round \left( \cA( \subsample(D) ) + \N\left(0, \sigma^2\right) \right) 
\end{align*}

% \begin{align*}
% \DPcn( D ) &:= \round \left( (1/q) \left( \cn( \subsample(D) ) + \N\left(0, \sigma^2\right) \right) \right) \\
% \DPctau( D ) &:= \round \left( (1/q) \left( \ctau( \subsample(D) ) + \N\left(0, \sigma^2\right) \right) \right) \\
% \DPcA( D ) &:= \round \left( (1/q) \left( \cA( \subsample(D) ) + \N\left(0, \sigma^2\right) \right) \right)
% \end{align*}

The function $\round$ rounds a noisy value to the nearest integer. If the noisy value is out of range (e.g., $<0$), then it is rounded to the nearest possible value within the valid range. 
By subsampling the data points, the privacy guarantee is amplified. This is because the subsampling process itself is a randomized operation that provides a level of privacy, making it harder to infer information about individual data points. When combined with the Gaussian mechanism, the overall privacy guarantee is improved. 

The subsampling technique can be incorporated easily for DP-TKNN-Shapley as above, where the subsampling technique only improves the computational efficiency. As a comparison, incorporating subsampling technique for DP-KNN-Shapley will introduce a significantly higher computational cost, as we discussed in Appendix \ref{appendix:difficulty-subsampling}. 

% The amplified privacy guarantee can be particularly useful in scenarios where stronger privacy protection is desired or when the dataset contains sensitive information.

\subsection{Differentially private release of $\phi^\tknn_{z_i}(\U^\tknn_{ \Dval })$ (Privacy Accounting)}
\label{appendix:privacy-accounting}

Recall that the TKNN-Shapley corresponding to the full validation set $\Dval$ is $\phi^\tknn_{z_i} \left(\U^\tknn_{\Dval} \right) = \sum_{\zval \in \Dval} \phi^\tknn_{z_i}(D_{-z_i}; \zval)$. 
% We have introduced the mechanism for differentially private releasing $\phi^\tknn_{z_i}(D_{-z_i}; \zval)$ for each $\zval \in \Dval$, and 
We can compute privatized $\phi^\tknn_{z_i} \left(\U^\tknn_{\Dval} \right)$ by simply releasing $\phi^\tknn_{z_i}(D_{-z_i}; \zval)$ for all $\zval \in \Dval$\footnote{Directly privatizing $\phi^\tknn_{z_i} \left(\U^\tknn_{\Dval} \right)$ is very difficult. Releasing more privatized statistics for the ease of privacy analysis is common in DP, e.g., DP-SGD \cite{abadi2016deep} releases all privatized gradients.}. 
To better keep track of the overall privacy cost, we use the current state-of-the-art privacy accounting technique based on the notion of the Privacy Loss Random Variable (PRV) \cite{dwork2016concentrated}. 
Here, we provide more details about the background of the composition of differential privacy and the privacy accounting techniques we use. 

\textbf{Background: Composition of Differential Privacy (``Privacy Accounting'').} 
In practice, multiple differentially private mechanisms may be applied to the same dataset, denoted as $\M(D) = \M_1 \circ \M_2 (D) := (\M_1(D), \M_2(D))$.\footnote{Multiple DP mechanisms can be \emph{adaptively} composed in the sense that the output of one mechanism can be used as an input to another mechanism, i.e., $\M(D) = \M_1 \circ \M_2 (D) := (\M_1(D), \M_2(D, \M_1(D)))$.} 
Differential privacy offers strong composition guarantees, and the guarantees are derived by various composition theorems or privacy accounting techniques, including the basic composition theorem \cite{dwork2006our}, advanced composition theorem \cite{dwork2010boosting}, and Moments Accountant \cite{abadi2016deep}. For example, the basic composition theorem states that if $\M_1$ is $(\eps_1, \delta_1)$-DP and $\M_2$ is $(\eps_2, \delta_2)$-DP, then the composition of $\M_1$ and $\M_2$ is $(\eps_1 + \eps_2, \delta_1 + \delta_2)$-DP. 
In our scenario, each individual mechanism is the release of $\widehat \phi^\tknn_{z_i}(D_{-z_i}; \zval)$ for each of $\zval \in \Dval$, and we would like to keep track of the privacy loss of releasing all of them. 

\textbf{Privacy Accounting Techniques based on Privacy Loss Random Variable (PRV).}
To better keep track of the privacy cost, we use the current state-of-the-art privacy accounting based on the notion of the Privacy Loss Random Variable (PRV) \cite{dwork2016concentrated}. The PRV accountant was introduced in \cite{koskela2020computing} and later refined in \cite{koskela2021computing, gopi2021numerical}. 
For any DP algorithm, one can easily compute its $(\eps, \delta)$ privacy guarantee based on the distribution of its PRV. The key property of PRVs is that, under (adaptive) composition, they simply add up; the PRV $Y$ of the composition $\M = \M_1 \circ M_2 \circ \dots \circ M_k$ is given by $Y = \sum_{i=1}^k Y_i$, where $Y_i$ is the PRV of $\M_i$. Therefore, one can then find the distribution of $Y$ by convolving the distributions of $Y_1, Y_2, \dots, Y_k$. Prior works \cite{koskela2021computing, gopi2021numerical} approximate the distribution of PRVs by truncating and discretizing them, then using the Fast Fourier Transform (FFT) to efficiently convolve the distributions. 
In our experiment, we use the FFT-based accountant from \cite{gopi2021numerical}, the current state-of-the-art privacy accounting technique.

% % \textbf{Releasing Differentially Private TKNN-Shapley $\phi^\tknn_{z_i} \left(\U^\tknn_{\Dval} \right)$.} 
% Recall that in TKNN-Shapley, the final data value score is $\phi^\tknn_{z_i} \left(\U^\tknn_{\Dval} \right) = \sum_{\zval \in \Dval} \phi^\tknn_{z_i}(D_{-z_i}; \zval)$. We have introduced the mechanism for differentially private releasing $\phi^\tknn_{z_i}(D_{-z_i}; \zval)$ for each $\zval \in \Dval$. 
% We use the composition property of differential privacy to analyze the privacy guarantee for releasing $\phi^\tknn_{z_i}(D_{-z_i}; \zval)$. Specifically, when multiple differentially private mechanisms being applied to the same dataset, the combined mechanism $\M(D) = (\M_1(D), \M_2(D))$ is still differentially private, and the specific privacy guarantee is captured by privacy accounting techniques. 
% In our case, each individual mechanism $\M_i$ corresponds to the release of $\phi^\tknn_{z_i}(\zval)$ for each $\zval \in \Dval$. 
% Various privacy accounting techniques have been proposed \cite{dwork2006our, dwork2010boosting, abadi2016deep}. To better keep track of the privacy cost, we use the current state-of-the-art privacy accounting technique based on the notion of the Privacy Loss Random Variable (PRV) \cite{dwork2016concentrated}. We provide more details about the PRV accountant in Appendix \ref{appendix:dp-tknn-detail}. 

\newpage

\section{Additional Experiment Settings \& Results}
\label{appendix:experiment}

\subsection{Datasets}
\label{appendix:datasets}

A comprehensive list of datasets and sources is summarized in Table \ref{tb:datasets}. Similar to the existing data valuation literature \cite{ghorbani2019data, kwon2022beta, jia2019towards, wang2023data}, we preprocess datasets for ease of training. For Fraud, Creditcard, and all datasets from OpenML, we subsample the dataset to balance positive and negative labels. For these datasets, if they have multi-class, we binarize the label by considering $\ind[y=1]$. 
For the image dataset MNIST, CIFAR10, we apply a ResNet50 \cite{he2016deep} that is pre-trained on the ImageNet dataset as the feature extractor. This feature extractor produces a 1024-dimensional vector for each image. We employ sentence embedding models \cite{reimers-2019-sentence-bert} to extract features for the text classification dataset AGNews and DBPedia, and the extracted features are 1024-dimensional vectors for each text instance. We perform L2 normalization on the extracted features as part of the pre-processing step. 

The size of each dataset we use is shown in Table \ref{tb:datasets}. For some of the datasets, we use a subset of the full set. 
We stress that we are using a \emph{much larger} size of the dataset compared with prior data valuation literature \cite{jia2019towards, kwon2022beta, wang2023data} (these works usually only pick a very small subset of the full dataset, e.g., 2000 data points for binarized CIFAR10). 
The validation data size we use is 10\% of the training data size. 

%, we follow the common procedure in prior works \cite{ghorbani2019data, jia2019towards, kwon2021beta}: we extract the penultimate layer outputs from the pre-trained ResNet18 \cite{he2016deep}. The pre-training is done with the ImageNet dataset \cite{deng2009imagenet} and the weight is publicly available from PyTorch. We choose features from the class of Dog and Cat. The extracted outputs have dimension 512. For the image dataset MNIST and FMNIST, we directly train on the original data format, which is a more challenging setting compared with the previous literature. 

\begin{table}[h]
\centering
\begin{tabular}{@{}cccc@{}}
\toprule
\textbf{Dataset} & \textbf{Number of classes} & \textbf{Size of dataset} &  \textbf{Source}                        \\ \midrule
MNIST            &10 &50000 &\cite{lecun1998mnist}                \\
CIFAR10          &10 &50000 &\cite{krizhevsky2009learning}        \\
AGnews           &4  &10000 &\cite{wang2021n24news}                \\
DBPedia          &14 &10000 &\cite{auer2007dbpedia}        \\
Click            & 2 &2000 &\url{https://www.openml.org/d/1218}  \\
Fraud            & 2&2000 &\cite{dal2015calibrating}            \\
Creditcard       & 2&2000 &\cite{yeh2009comparisons}            \\
Apsfail          & 2&2000 &\url{https://www.openml.org/d/41138} \\
Phoneme          & 2&2000 &\url{https://www.openml.org/d/1489}  \\
Wind             & 2&2000 &\url{https://www.openml.org/d/847}   \\
Pol              & 2&2000 &\url{https://www.openml.org/d/722}   \\
CPU              & 2&2000 &\url{https://www.openml.org/d/761}   \\
2DPlanes         & 2&2000 &\url{https://www.openml.org/d/727}    
\\ \bottomrule
\end{tabular}
\caption{A summary of datasets used in Section \ref{sec:eval}'s experiments.}
\label{tb:datasets}
\end{table}

\subsection{Settings \& Additional Experiments for Runtime Experiments}
\label{appendix:eval-runtime}

For the runtime comparison experiment in Section \ref{sec:eval-runtime}, we follow similar experiment settings from prior study \cite{kwon2023data} and use a synthetic binary classification dataset. To generate the synthetic dataset, we sample data points from a $d$-dimensional standard Gaussian distribution. The labels are assigned based on the sign of the sum of the two features. 

We choose a range of training data sizes $N$, and compare the runtime of both TKNN/KNN-Shapley at each $N$. We use 100 validation data points. For Figure \ref{fig:runtime} in the maintext, the data dimension $d = 10$. Here, we show additional experiment results when $d = 100$. As we can see from Figure \ref{fig:runtime-d100}, again TKNN-Shapley achieves better computational efficiency than KNN-Shapley across all training data sizes, and is around 30\% faster than KNN-Shapley for large $N$. 

\begin{figure}[h]
    \centering
    \includegraphics[width=0.5 \columnwidth]{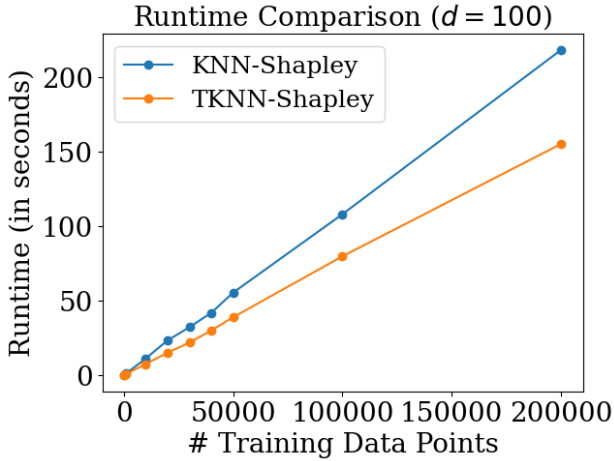}
    \caption{ Runtime comparison of TKNN-Shapley and KNN-Shapley across various training data sizes $N$ when data dimension $d = 100$. 
    The plot shows the average runtime based on 5 independent runs. Experiments were conducted with an AMD 64-Core CPU Processor.
    }
    \label{fig:runtime-d100}
\end{figure}

% , TKNN-Shapley achieves better computational efficiency than KNN-Shapley across all training data sizes. In particular, TKNN-Shapley is around 30\% faster than KNN-Shapley for large $N$. 

\subsection{Settings \& Additional Experiments for Distinguishing Data Quality Experiments}
\label{appendix:eval-dist}

\subsubsection{Details for Mislabel/Noisy Data Detection Experiment}

In the experiment of mislabeled (or noisy) data detection, we randomly choose 10\% of the data points and flip their labels (or add strong noise to their features). 
For mislabel data detection, we flip 10\% of the labels by picking an alternative label from the rest of the classes uniformly at random. 
For noisy data detection, we add zero-mean Gaussian noise to data features, where the standard deviation of the Gaussian noise added to each feature dimension is equal to the average absolute value of the feature dimension across the full dataset. 

\subsubsection{Additional Experiment Results for Private Setting}
\label{appendix:eval-private}

Table \ref{tb:privacy-tradeoff-full} provides a comprehensive comparison of DP-TKNN-Shapley and DP-KNN-Shapley. We conducted the experiments on DP-KNN-Shapley with subsampling on only half of the datasets due to the considerable computational cost (at least $30 \times$ longer to execute compared to its non-subsampled counterpart). Additionally, \uline{because DP-KNN-Shapley injects Gaussian noise independently into the value score of each data holder, it does not have the property of being robust against collusion.}

As evidenced by the results, DP-TKNN-Shapley consistently demonstrates a markedly superior privacy-utility tradeoff compared to DP-KNN-Shapley without subsampling across all datasets. Even when comparing with DP-KNN-Shapley with subsampling, DP-TKNN-Shapley still shows superior performance, and often significantly outshining this time-intensive and collusion-susceptible baseline. Notably, DP-TKNN-Shapley manages to maintain high AUROC values even when $\eps \approx 0.1$. The low performance of DP-KNN-Shapley can be attributed to its relatively high global sensitivity, as we discussed in Appendix \ref{appendix:global-sensitivity}.

\begin{table}[h]
\centering
\resizebox{\textwidth}{!}{ 
\begin{tabular}{@{}cccccccc@{}}
\toprule
\textbf{}                            & \textbf{}       & \multicolumn{3}{c}{\textbf{Mislabeled Data Detection}}                                                                                                                                               & \multicolumn{3}{c}{\textbf{Noisy Data Detection}}                                                                                                                                                    \\ \midrule
\textbf{Dataset}                     & \textbf{$\eps$} & \textbf{DP-TKNN-Shapley} & \textbf{\begin{tabular}[c]{@{}c@{}}DP-KNN-Shapley\\ (no subsampling)\end{tabular}} & \textbf{\begin{tabular}[c]{@{}c@{}}DP-KNN-Shapley\\ (with subsampling)\end{tabular}} & \textbf{DP-TKNN-Shapley} & \textbf{\begin{tabular}[c]{@{}c@{}}DP-KNN-Shapley\\ (no subsampling)\end{tabular}} & \textbf{\begin{tabular}[c]{@{}c@{}}DP-KNN-Shapley\\ (with subsampling)\end{tabular}} \\ \midrule
\multirow{3}{*}{\textbf{2dplanes}}   & 0.1             & 0.883 (0.017)            & 0.49 (0.024)                                                                       & 0.733 (0.011)                                                                        & 0.692 (0.014)            & 0.494 (0.023)                                                                      & 0.615 (0.01)                                                                         \\
                                     & 0.5             & 0.912 (0.009)            & 0.488 (0.022)                                                                      & 0.815 (0.006)                                                                        & 0.706 (0.004)            & 0.494 (0.012)                                                                      & 0.66 (0.004)                                                                         \\
                                     & 1               & 0.913 (0.009)            & 0.504 (0.019)                                                                      & 0.821 (0.005)                                                                        & 0.705 (0.007)            & 0.495 (0.011)                                                                      & 0.665 (0.004)                                                                        \\ \midrule
\multirow{3}{*}{\textbf{phoneme}}    & 0.1             & 0.816 (0.011)            & 0.5 (0.014)                                                                        & 0.692 (0.011)                                                                        & 0.648 (0.028)            & 0.475 (0.042)                                                                      & 0.566 (0.01)                                                                         \\
                                     & 0.5             & 0.826 (0.007)            & 0.497 (0.011)                                                                      & 0.738 (0.003)                                                                        & 0.683 (0.014)            & 0.536 (0.033)                                                                      & 0.588 (0.004)                                                                        \\
                                     & 1               & 0.826 (0.005)            & 0.486 (0.01)                                                                       & 0.741 (0.002)                                                                        & 0.685 (0.016)            & 0.494 (0.071)                                                                      & 0.59 (0.005)                                                                         \\ \midrule
\multirow{3}{*}{\textbf{CPU}}        & 0.1             & 0.932 (0.007)            & 0.49 (0.028)                                                                       & 0.881 (0.005)                                                                        & 0.805 (0.037)            & 0.42 (0.074)                                                                       & 0.709 (0.011)                                                                        \\
                                     & 0.5             & 0.946 (0.004)            & 0.507 (0.029)                                                                      & 0.928 (0.002)                                                                        & 0.838 (0.007)            & 0.472 (0.092)                                                                      & 0.746 (0.003)                                                                        \\
                                     & 1               & 0.948 (0.002)            & 0.512 (0.008)                                                                      & 0.931 (0.002)                                                                        & 0.839 (0.003)            & 0.455 (0.079)                                                                      & 0.748 (0.002)                                                                        \\ \midrule
\multirow{3}{*}{\textbf{Creditcard}} & 0.1             & 0.661 (0.008)            & 0.511 (0.026)                                                                      & 0.584 (0.011)                                                                        & 0.558 (0.017)            & 0.479 (0.115)                                                                      & 0.519 (0.014)                                                                        \\
                                     & 0.5             & 0.663 (0.008)            & 0.503 (0.021)                                                                      & 0.624 (0.011)                                                                        & 0.571 (0.058)            & 0.513 (0.089)                                                                      & 0.541 (0.008)                                                                        \\
                                     & 1               & 0.668 (0.005)            & 0.491 (0.026)                                                                      & 0.627 (0.009)                                                                        & 0.585 (0.048)            & 0.525 (0.107)                                                                      & 0.545 (0.006)                                                                        \\ \midrule
\multirow{3}{*}{\textbf{Click}}      & 0.1             & 0.566 (0.017)            & 0.512 (0.013)                                                                      & 0.511 (0.024)                                                                        & 0.55 (0.012)             & 0.439 (0.094)                                                                      & 0.497 (0.017)                                                                        \\
                                     & 0.5             & 0.567 (0.013)            & 0.502 (0.018)                                                                      & 0.529 (0.012)                                                                        & 0.555 (0.01)             & 0.506 (0.071)                                                                      & 0.503 (0.018)                                                                        \\
                                     & 1               & 0.567 (0.007)            & 0.502 (0.021)                                                                      & 0.532 (0.01)                                                                         & 0.559 (0.009)            & 0.488 (0.076)                                                                      & 0.504 (0.017)                                                                        \\ \midrule
\multirow{3}{*}{\textbf{Wind}}       & 0.1             & 0.861 (0.021)            & 0.489 (0.022)                                                                      & 0.836 (0.01)                                                                         & 0.622 (0.167)            & 0.464 (0.282)                                                                      & 0.557 (0.017)                                                                        \\
                                     & 0.5             & 0.883 (0.002)            & 0.511 (0.021)                                                                      & 0.873 (0.004)                                                                        & 0.668 (0.055)            & 0.554 (0.282)                                                                      & 0.601 (0.007)                                                                        \\
                                     & 1               & 0.882 (0.002)            & 0.486 (0.022)                                                                      & 0.875 (0.004)                                                                        & 0.681 (0.032)            & 0.595 (0.257)                                                                      & 0.605 (0.008)                                                                        \\ \midrule
\multirow{3}{*}{\textbf{Fraud}}      & 0.1             & 0.947 (0.004)            & 0.497 (0.02)                                                                       & -                                                                                    & 0.813 (0.033)            & 0.468 (0.127)                                                                      & -                                                                                    \\
                                     & 0.5             & 0.952 (0.002)            & 0.502 (0.026)                                                                      & -                                                                                    & 0.85 (0.017)             & 0.548 (0.192)                                                                      & -                                                                                    \\
                                     & 1               & 0.953 (0.003)            & 0.502 (0.008)                                                                      & -                                                                                    & 0.859 (0.007)            & 0.497 (0.162)                                                                      & -                                                                                    \\ \midrule
\multirow{3}{*}{\textbf{Apsfail}}    & 0.1             & 0.903 (0.078)            & 0.501 (0.018)                                                                      & -                                                                                    & 0.693 (0.2)              & 0.456 (0.357)                                                                      & -                                                                                    \\
                                     & 0.5             & 0.952 (0.001)            & 0.521 (0.023)                                                                      & -                                                                                    & 0.873 (0.014)            & 0.489 (0.139)                                                                      & -                                                                                    \\
                                     & 1               & 0.952 (0.0)              & 0.49 (0.031)                                                                       & -                                                                                    & 0.882 (0.003)            & 0.494 (0.179)                                                                      & -                                                                                    \\ \midrule
\multirow{3}{*}{\textbf{Pol}}        & 0.1             & 0.844 (0.006)            & 0.508 (0.012)                                                                      & -                                                                                    & 0.604 (0.082)            & 0.53 (0.212)                                                                       & -                                                                                    \\
                                     & 0.5             & 0.858 (0.006)            & 0.5 (0.029)                                                                        & -                                                                                    & 0.673 (0.039)            & 0.626 (0.153)                                                                      & -                                                                                    \\
                                     & 1               & 0.857 (0.001)            & 0.516 (0.02)                                                                       & -                                                                                    & 0.695 (0.02)             & 0.622 (0.195)                                                                      & -                                                                                    \\ \midrule
\multirow{3}{*}{ \textbf{MNIST} }      & 0.1    & 0.801 (0.002)   & 0.492 (0.007)                                                             & -                                                                           & 0.841 (0.002)   & 0.384 (0.129)                                                             & -                                                                           \\
                            & 0.5    & 0.948 (0.002)   & 0495 (0.005)                                                              & -                                                                           & 0.873 (0.001)   & 0.389 (0.125)                                                             & -                                                                           \\
                            & 1      & 0.958 (0.002)   & 0.495 (0.005)                                                             & -                                                                           & 0.881 (0.002)   & 0.4 (0.002)                                                               & -                                                                           \\ \midrule
\multirow{3}{*}{ \textbf{CIFAR10} }    & 0.1    & 0.924 (0.002)   & 0.496 (0.014)                                                             & -                                                                           & 0.821 (0.002)   & 0.558 (0.013)                                                             & -                                                                           \\
                            & 0.5    & 0.936 (0.002)   & 0.497 (0.012)                                                             & -                                                                           & 0.847 (0.004)   & 0.560 (0.009)                                                             & -                                                                           \\
                            & 1      & 0.949 (0.003)   & 0.497 (0.012)                                                             & -                                                                           & 0.852 (0.001)   & 0.560 (0.006)                                                             & -                                                                           \\ \midrule
\multirow{3}{*}{ \textbf{AGNews} }     & 0.1    & 0.725 (0.002)   & 0.491 (0.002)                                                             & -                                                                           & 0.611 (0.003)   & 0.421 (0.065)                                                             & -                                                                           \\
                            & 0.5    & 0.733 (0.005)   & 0.492 (0.002)                                                             & -                                                                           & 0.621 (0.001)   & 0.426 (0.075)                                                             & -                                                                           \\
                            & 1      & 0.738 (0.002)   & 0.494 (0.002)                                                             & -                                                                           & 0.627 (0.001)   & 0.426 (0.075)                                                             & -                                                                           \\ \midrule
\multirow{3}{*}{ \textbf{DBPedia} }    & 0.1    & 0.652 (0.005)   & 0.499 (0.002)                                                             & -                                                                           & 0.579 (0.002)   & 0.360 (0.07)                                                              & -                                                                           \\
                            & 0.5    & 0.654 (0.002)   & 0.499 (0.002)                                                             & -                                                                           & 0.586 (0.002)   & 0.360 (0.09)                                                              & -                                                                           \\
                            & 1      & 0.657 (0.001)   & 0.5 (0.002)                                                               & -                                                                           & 0.588 (0.001)   & 0.361 (0.07)                                                              & -                                                                          
\\ \bottomrule
\end{tabular}
}
\vspace{1mm}
\caption{ 
Privacy-utility tradeoff of DP-TKNN-Shapley and DP-KNN-Shapley for mislabeled/noisy data detection task on 13 datasets. We show the AUROC at different privacy budgets $\eps$ (\add{We set $\delta = 10^{-5}$ for MNIST, CIFAR10, AGNews, and PBPedia}, and we set $\delta = 10^{-4}$ for the rest of dataset, following the common rule of $\delta \approx 1/N$). The higher the AUROC is, the better the method is. We show the standard deviation of AUROC across 5 independent runs in (). 
The results for DP-KNN-Shapley with subsampling are omitted for half of the datasets as it requires a significant amount of runtime.  
}
\label{tb:privacy-tradeoff-full}
\end{table}

\subsubsection{Additional Experiment Results for Non-private Setting}
\label{appendix:eval-nonprivate}

Here, we show full experimental results for various data valuation techniques in mislabel/noisy data detection scenarios, summarized in Tables \ref{tb:mislabel-full} and \ref{tb:noisy-full}. 
In line with the main text findings, TKNN-Shapley exhibits performance largely on par with KNN-Shapley across the majority of datasets, demonstrating that TKNN-Shapley effectively mirrors KNN-Shapley's ability to discern data quality. In addition, both KNN-Shapley and TKNN-Shapley significantly outperform traditional retrain-based data valuation techniques (LOO, Data Shapley, and Data Banzhaf), corroborating observations made in existing studies \cite{jia2019scalability, pandl2021trustworthy}. The inferior performance of Data Shapley can be attributed to sample inefficiency and the inherent randomness during the retraining process \cite{wang2023data}.

\textbf{Settings for LOO, Data Shapley and Data Banzhaf.} Following the experiment settings in \cite{ghorbani2019data, kwon2022beta} we use logistic regression as the ML model for these techniques. For Data Shapley, we use permutation sampling \cite{mitchell2022sampling} for estimation. For Data Banzhaf, we use the maximum-sample-reuse (MSR) algorithm \cite{wang2023data} for estimation. For each dataset, we train models on 10,000 subsets (which takes more than $1000 \times$ longer time compared with both TKNN/KNN-Shapley).

\begin{table}[h]
\centering
\begin{tabular}{@{}cccccc@{}}
\toprule
\textbf{Dataset}    & \textbf{TKNN-Shapley} & \textbf{KNN-Shapley} & \textbf{Data Shapley} & \textbf{LOO} & \textbf{Data Banzhaf} \\ \midrule
\textbf{2DPlanes}   & 0.919          & 0.913                & 0.552                 & 0.458        & 0.548                 \\
\textbf{Phoneme}    & 0.826         & 0.873                & 0.525                 & 0.484        & 0.604                 \\
\textbf{CPU}        & 0.946          & 0.932                & 0.489                 & 0.496        & 0.513                 \\
\textbf{Fraud}      & 0.96          & 0.967                & 0.488                 & 0.495        & 0.534                 \\
\textbf{Creditcard} & 0.662          & 0.646                & 0.517                 & 0.487        & 0.536                 \\
\textbf{Apsfail}    & 0.958           & 0.948                & 0.496                 & 0.485        & 0.506                 \\
\textbf{Click}      & 0.572          & 0.568                & 0.474                 & 0.504        & 0.528                 \\
\textbf{Wind}       & 0.889        & 0.896                & 0.469                 & 0.456        & 0.512                 \\
\textbf{Pol}        & 0.871         & 0.928                & 0.512                 & 0.538        & 0.473                 \\
\textbf{MNIST}      & 0.962         & 0.974                & -                     & -            & -                     \\
\textbf{CIFAR10}    & 0.957         & 0.991                & -                     & -            & -                     \\
\textbf{AG News}    & 0.956         & 0.971                & -                     & -            & -                     \\
\textbf{DBPedia}    & 0.981         & 0.991                & -                     & -            & -                     \\ \bottomrule
\end{tabular}
\vspace{1mm}
\caption{Full Table for the AUROC of Mislabel Data Detection (non-private setting). 
The results for LOO, Data Shapley, Data Banzhaf are omitted for MNIST, CIFAR10, AGNews and DBPedia as they require a significant amount of runtime (prior works typically use a subset for MNIST and CIFAR10 \cite{ghorbani2019data, kwon2022beta, wang2023data}).}
\label{tb:mislabel-full}
\end{table}

\begin{table}[h]
\centering
\begin{tabular}{@{}cccccc@{}}
\toprule
\textbf{Dataset}    & \textbf{TKNN-Shapley} & \textbf{KNN-Shapley} & \textbf{Data Shapley} & \textbf{LOO} & \textbf{Data Banzhaf} \\ \midrule
\textbf{2DPlanes}   & 0.705                 & 0.718                & 0.545                 & 0.504        & 0.656                 \\
\textbf{Phoneme}    & 0.71                  & 0.706                & 0.49                  & 0.531        & 0.631                 \\
\textbf{CPU}        & 0.751                 & 0.726                & 0.527                 & 0.459        & 0.531                 \\
\textbf{Fraud}      & 0.818                 & 0.794                & 0.502                 & 0.526        & 0.582                 \\
\textbf{Creditcard} & 0.611                 & 0.64                 & 0.535                 & 0.513        & 0.506                 \\
\textbf{Apsfail}    & 0.841                 & 0.666                & 0.533                 & 0.465        & 0.566                 \\
\textbf{Click}      & 0.558                 & 0.547                & 0.461                 & 0.503        & 0.572                 \\
\textbf{Wind}       & 0.691                 & 0.794                & 0.495                 & 0.488        & 0.54                  \\
\textbf{Pol}        & 0.68                  & 0.79                 & 0.526                 & 0.503        & 0.588                 \\
\textbf{MNIST}      & 0.613                   & 0.815                  & -                     & -            & -                     \\
\textbf{CIFAR10}    & 0.85                  & 0.935                & -                     & -            & -                     \\
\textbf{AG News}    & 0.804                 & 0.746                & -                     & -            & -                     \\
\textbf{DBPedia}    & 0.719                   & 0.88                 & -                     & -            & -                     \\ \bottomrule
\end{tabular}
\vspace{1mm}
\caption{ Full Table for the AUROC of Noisy Data Detection (non-private setting). 
The results for LOO, Data Shapley, Data Banzhaf are omitted for MNIST, CIFAR10, AGNews and DBPedia as they require a significant amount of runtime (prior works typically use a subset for MNIST and CIFAR10 \cite{ghorbani2019data, kwon2022beta, wang2023data}).}
\label{tb:noisy-full}
\end{table}

\subsubsection{Ablation Study on the Choice of Threshold for TKNN-Shapley}
\label{appendix:ablation}

KNN-Shapley is known for being effective on a wide range of choices of $K$ \cite{jia2019efficient}.  
In this experiment, we study the impact of different choices of $\tau$ on the performance of TKNN-Shapley on the tasks of mislabeled/noisy data detection. The results are shown in Table \ref{tb:tuning-tnn-mislabel} and \ref{tb:tuning-tnn-noisy}. As we can see, the performance of TKNN-Shapley is quite stable on a wide range of choices of $\tau$. For example, for 2dPlanes dataset, the performance of TKNN-Shapley only varies $<0.032$ between $\tau \in [-0.1, -0.6]$. Moreover, $\tau \approx -0.5$ is a relatively robust choice of the threshold value. Hence, hyperparameter tuning is not difficult for TKNN-Shapley. 
\add{
This is not too surprising; the formula of TKNN-Shapley (in Equation (\ref{eq:tknn-shapley-full})) shows that regardless of the choice of $\tau$, if a training data point is within the threshold but has a different label from the validation data, then its value is necessarily negative. At the same time, the points that share same label as validation data are guaranteed to have a positive value. The choice of $\tau$ only decides the magnitude but not the sign. As data valuation methods only rely on the rank of data values to perform data selection, the performance of TKNN-Shapley is relatively robust to the choice of $\tau$ in identifying mislabeled data points (who get negative values). A similar argument also applies to the task of noisy data detection: noisy data is typically far from the population; hence, as long as $\tau$ is not too large, the noisy data will receive a value score close to 0. On the other hand, clean data will receive a positive value score, and the choice of $\tau$ only affects the magnitude of the data value but not the sign.}

\add{Ideally, we would like to set a threshold such that for most of the queries, there are a certain amount of neighbors whose distance to the query is below the threshold. One possible way to do this is simply picking a $\tau$ that optimizes the final validation accuracy of the TKNN classifier (note that TKNN is very efficient in hyperparameter tuning).}

% Additionally, we stress that the case where there are only a few neighbors whose distance to the query is below the threshold may not necessarily be bad. This may happen when the query is an outlier that is very far from the center of population. Hence, a reasonable classifier is supposed to assign low confidence for such a query for all classes. TKNN, in this case, is guaranteed to have such reasonable behavior. However, for KNN, the behavior is less predictable as it depends on the query example’s K nearest neighbors (which may still be far from the query in absolute distance).

For completeness, we also show the impact of different choices of $K$ on the performance of KNN-Shapley in Table \ref{tb:tuning-knn-mislabel} and \ref{tb:tuning-knn-noisy}. As we can see, KNN-Shapley's performance is quite stable against the different choices of $K$.

\begin{table}[h]
\centering
\resizebox{\textwidth}{!}{ 
\begin{tabular}{@{}cccccccccc@{}}
\toprule
\textbf{Dataset}    & $\tau = -0.1$ & $\tau = -0.2$ & $\tau = -0.3$ & $\tau = -0.4$ & $\tau = -0.5$ & $\tau = -0.6$ & $\tau = -0.7$ & $\tau = -0.8$ & $\tau = -0.9$ \\ \midrule
\textbf{2dPlanes}   & 0.889         & 0.902         & 0.911         & 0.918         & 0.92          & 0.887         & 0.862         & 0.725         & 0.553         \\
\textbf{CPU}        & 0.952         & 0.956         & 0.954         & 0.952         & 0.95          & 0.943         & 0.917         & 0.895         & 0.834         \\
\textbf{Phoneme}    & 0.79          & 0.799         & 0.805         & 0.811         & 0.825         & 0.834         & 0.844         & 0.859         & 0.861         \\
\textbf{Fraud}      & 0.961         & 0.956         & 0.951         & 0.954         & 0.959         & 0.957         & 0.931         & 0.884         & 0.75          \\
\textbf{Creditcard} & 0.653         & 0.671         & 0.668         & 0.669         & 0.662         & 0.652         & 0.648         & 0.606         & 0.539         \\
\textbf{Apsfail}    & 0.959         & 0.952         & 0.962         & 0.967         & 0.962         & 0.952         & 0.924         & 0.899         & 0.851         \\
\textbf{Click}      & 0.576         & 0.552         & 0.557         & 0.56          & 0.568         & 0.556         & 0.547         & 0.538         & 0.522         \\
\textbf{Wind}       & 0.875         & 0.879         & 0.882         & 0.885         & 0.888         & 0.889         & 0.879         & 0.888         & 0.775         \\
\textbf{Pol}        & 0.847         & 0.854         & 0.864         & 0.871         & 0.875         & 0.878         & 0.89          & 0.878         & 0.86          \\ \bottomrule
\end{tabular}
}
\vspace{1mm}
\caption{Impact of different choices of $\tau$ for TKNN-Shapley on the performance of mislabel data detection.}
\label{tb:tuning-tnn-mislabel}
\end{table}

\begin{table}[h]
\centering
\resizebox{\textwidth}{!}{ 
\begin{tabular}{@{}cccccccccc@{}}
\toprule
\textbf{Dataset}    & \multicolumn{1}{c}{$\tau = -0.1$} & \multicolumn{1}{c}{$\tau = -0.2$} & \multicolumn{1}{c}{$\tau = -0.3$} & \multicolumn{1}{c}{$\tau = -0.4$} & \multicolumn{1}{c}{$\tau = -0.5$} & \multicolumn{1}{c}{$\tau = -0.6$} & \multicolumn{1}{c}{$\tau = -0.7$} & \multicolumn{1}{c}{$\tau = -0.8$} & \multicolumn{1}{c}{$\tau = -0.9$} \\ \midrule
\textbf{2dPlanes}   & 0.671                             & 0.691                             & 0.704                             & 0.696                             & 0.705                             & 0.694                             & 0.678                             & 0.626                             & 0.528                             \\
\textbf{CPU}        & 0.691                             & 0.687                             & 0.696                             & 0.709                             & 0.751                             & 0.777                             & 0.789                             & 0.786                             & 0.75                              \\
\textbf{Phoneme}    & 0.681                             & 0.688                             & 0.691                             & 0.698                             & 0.71                              & 0.718                             & 0.716                             & 0.725                             & 0.72                              \\
\textbf{Fraud}      & 0.758                             & 0.781                             & 0.804                             & 0.809                             & 0.818                             & 0.837                             & 0.847                             & 0.814                             & 0.664                             \\
\textbf{Creditcard} & 0.579                             & 0.576                             & 0.585                             & 0.597                             & 0.611                             & 0.624                             & 0.642                             & 0.626                             & 0.574                             \\
\textbf{Apsfail}    & 0.771                             & 0.822                             & 0.834                             & 0.855                             & 0.841                             & 0.842                             & 0.828                             & 0.792                             & 0.749                             \\
\textbf{Click}      & 0.53                              & 0.544                             & 0.537                             & 0.539                             & 0.558                             & 0.563                             & 0.521                             & 0.521                             & 0.539                             \\
\textbf{Wind}       & 0.592                             & 0.612                             & 0.637                             & 0.663                             & 0.691                             & 0.727                             & 0.747                             & 0.754                             & 0.719                             \\
\textbf{Pol}        & 0.594                             & 0.638                             & 0.643                             & 0.661                             & 0.68                              & 0.682                             & 0.712                             & 0.721                             & 0.699                             \\ \bottomrule
\end{tabular}
}
\vspace{1mm}
\caption{Impact of different choices of $\tau$ for TKNN-Shapley on the performance of noisy data detection.}
\label{tb:tuning-tnn-noisy}
\end{table}

\begin{table}[h]
\centering
\resizebox{\textwidth}{!}{ 
\begin{tabular}{@{}cccccccccccccccc@{}}
\toprule 
\textbf{Dataset}    & $K=1$ & $K=2$ & $K=3$ & $K=4$ & $K=5$ & $K=6$ & $K=7$ & $K=8$ & $K=9$ & $K=10$ & $K=11$ & $K=12$ & $K=13$ & $K=14$ & $K=15$ \\ \midrule
\textbf{2dPlanes}   & 0.914 & 0.914 & 0.915 & 0.915 & 0.916 & 0.915 & 0.915 & 0.914 & 0.915 & 0.915  & 0.914  & 0.91   & 0.905  & 0.898  & 0.896  \\
\textbf{CPU}        & 0.936 & 0.936 & 0.937 & 0.937 & 0.937 & 0.937 & 0.938 & 0.937 & 0.936 & 0.934  & 0.932  & 0.929  & 0.928  & 0.927  & 0.921  \\
\textbf{Phoneme}    & 0.859 & 0.861 & 0.862 & 0.862 & 0.864 & 0.865 & 0.868 & 0.868 & 0.869 & 0.871  & 0.871  & 0.871  & 0.871  & 0.874  & 0.86   \\
\textbf{Fraud}      & 0.972 & 0.972 & 0.971 & 0.971 & 0.972 & 0.972 & 0.972 & 0.971 & 0.971 & 0.971  & 0.97   & 0.97   & 0.967  & 0.968  & 0.966  \\
\textbf{Creditcard} & 0.652 & 0.652 & 0.653 & 0.653 & 0.653 & 0.651 & 0.65  & 0.651 & 0.651 & 0.649  & 0.646  & 0.642  & 0.638  & 0.637  & 0.634  \\
\textbf{Apsfail}    & 0.952 & 0.952 & 0.952 & 0.952 & 0.951 & 0.951 & 0.949 & 0.948 & 0.947 & 0.947  & 0.948  & 0.949  & 0.949  & 0.95   & 0.95   \\
\textbf{Click}      & 0.56  & 0.562 & 0.564 & 0.563 & 0.566 & 0.566 & 0.563 & 0.562 & 0.562 & 0.563  & 0.567  & 0.566  & 0.557  & 0.547  & 0.553  \\
\textbf{Wind}       & 0.9   & 0.9   & 0.9   & 0.9   & 0.9   & 0.899 & 0.899 & 0.899 & 0.899 & 0.898  & 0.896  & 0.897  & 0.896  & 0.891  & 0.89   \\
\textbf{Pol}        & 0.931 & 0.931 & 0.931 & 0.931 & 0.93  & 0.929 & 0.929 & 0.93  & 0.929 & 0.927  & 0.925  & 0.922  & 0.919  & 0.914  & 0.917 \\ \bottomrule
\end{tabular}
}
\vspace{1mm}
\caption{Impact of different choices of $K$ for KNN-Shapley on the performance of mislabel data detection.}
\label{tb:tuning-knn-mislabel}
\end{table}

\begin{table}[h]
\centering
\resizebox{\textwidth}{!}{ 
\begin{tabular}{@{}cccccccccccccccc@{}}
\toprule
\textbf{Dataset}    & $K=1$ & $K=2$ & $K=3$ & $K=4$ & $K=5$ & $K=6$ & $K=7$ & $K=8$ & $K=9$ & $K=10$ & $K=11$ & $K=12$ & $K=13$ & $K=14$ & $K=15$ \\ \midrule
\textbf{2dPlanes}   & 0.719 & 0.719 & 0.718 & 0.718 & 0.718 & 0.717 & 0.716 & 0.715 & 0.714 & 0.712  & 0.712  & 0.709  & 0.704  & 0.703  & 0.703  \\
\textbf{CPU}        & 0.7   & 0.702 & 0.703 & 0.705 & 0.706 & 0.706 & 0.709 & 0.711 & 0.717 & 0.723  & 0.73   & 0.736  & 0.744  & 0.751  & 0.766  \\
\textbf{Phoneme}    & 0.718 & 0.721 & 0.722 & 0.724 & 0.726 & 0.729 & 0.732 & 0.735 & 0.738 & 0.741  & 0.739  & 0.74   & 0.739  & 0.74   & 0.742  \\
\textbf{Fraud}      & 0.786 & 0.788 & 0.789 & 0.792 & 0.794 & 0.794 & 0.795 & 0.797 & 0.797 & 0.799  & 0.803  & 0.805  & 0.811  & 0.815  & 0.822  \\
\textbf{Creditcard} & 0.637 & 0.638 & 0.636 & 0.638 & 0.64  & 0.642 & 0.643 & 0.644 & 0.646 & 0.646  & 0.645  & 0.642  & 0.636  & 0.636  & 0.64   \\
\textbf{Apsfail}    & 0.667 & 0.667 & 0.667 & 0.666 & 0.666 & 0.666 & 0.666 & 0.668 & 0.671 & 0.674  & 0.678  & 0.683  & 0.69   & 0.711  & 0.763  \\
\textbf{Click}      & 0.544 & 0.547 & 0.552 & 0.549 & 0.547 & 0.549 & 0.549 & 0.547 & 0.548 & 0.544  & 0.544  & 0.543  & 0.543  & 0.544  & 0.545  \\
\textbf{Wind}       & 0.796 & 0.796 & 0.795 & 0.794 & 0.794 & 0.793 & 0.792 & 0.791 & 0.791 & 0.79   & 0.788  & 0.788  & 0.788  & 0.789  & 0.791  \\
\textbf{Pol}        & 0.789 & 0.788 & 0.788 & 0.788 & 0.79  & 0.79  & 0.788 & 0.787 & 0.786 & 0.786  & 0.787  & 0.79   & 0.786  & 0.786  & 0.785  \\ \bottomrule
\end{tabular}
}
\vspace{1mm}
\caption{Impact of different choices of $K$ for KNN-Shapley on the performance of noisy data detection.}
\label{tb:tuning-knn-noisy}
\end{table}

% \subsubsection{Data Removal Experiment}
% \label{appendix:dataremoval}

% \begin{figure}[h]
%     \centering
%     \includegraphics[width=\columnwidth]{image/dataremoval_private.pdf}
%     \caption{ Data removal experiment \textbf{in private setting $(\eps = 1)$}: we sort the data points according to their value scores; we remove a certain percentage of the data points of the lowest value scores and then evaluate the test accuracy of an MLP model trained on the remaining dataset. 
%     }
%     \label{fig:dataremoval-private}
% \end{figure}

% \begin{figure}[h]
%     \centering
%     \includegraphics[width=\columnwidth]{image/dataremoval_nonprivate.pdf}
%     \caption{ Data removal experiment \textbf{in non-private setting}: we sort the data points according to their value scores; we remove a certain percentage of the data points of the lowest value scores and then evaluate the test accuracy of an MLP model trained on the remaining dataset. 
%     }
%     \label{fig:dataremoval-nonprivate}
% \end{figure}

% \input{table_privacyfull}

\end{document}